\newcolumntype{L}[1]{>{\raggedright\let\newline\\\arraybackslash\hspace{0pt}}m{#1}}
\newcolumntype{C}[1]{>{\centering\arraybackslash}p{#1}}
\newcolumntype{R}[1]{>{\raggedleft\let\newline\\\arraybackslash\hspace{0pt}}m{#1}}
\theoremstyle{plain}
\newtheorem{theorem}{Theorem}[section]
\newtheorem{proposition}[theorem]{Proposition}
\newtheorem{corollary}[theorem]{Corollary}
\theoremstyle{definition}
\newtheorem{definition}[theorem]{Definition}
\newtheorem{assumption}[theorem]{Assumption}
\newtheorem{remark}[theorem]{Remark}
\def\E{{\mathop{\mathbb{E}}}}
\def\refmodel{{\pi_{\text{Ref}}}}
\def\sftmodel{{\pi_{\text{SFT}}}}
\def\model{{\pi_\theta}}
\def\rankacc{{\gR}}
\def\idealrankacc{{\gR^*}}
\def\dpoloss{\gL_{\text{DPO}}}
\def\ratiowin{{a_x}}
\def\ratiolose{{b_x}}
\def\Pr{\mathbb{P}}
\DeclareMathAlphabet{\mathsfit}{\encodingdefault}{\sfdefault}{m}{sl}
\SetMathAlphabet{\mathsfit}{bold}{\encodingdefault}{\sfdefault}{bx}{n}
\def\gD{{\mathcal{D}}}
\def\gL{{\mathcal{L}}}
\def\gR{{\mathcal{R}}}
\title{Preference Learning Algorithms Do Not Learn Preference Rankings}
\author{%
  Angelica Chen \\
  New York University\\
  \texttt{ac5968@nyu.edu} \\
  \And
  Sadhika Malladi \\
  Princeton University \\
  \texttt{smalladi@princeton.edu} \\
  \And
  Lily H. Zhang\\
  New York University\\
  \texttt{lily.h.zhang@nyu.edu} \\
  \AND
  Xinyi Chen \\
  Google DeepMind; Princeton University \\
  \texttt{xinyic@google.com} \\
  \And
  Qiuyi Zhang \\
  Google DeepMind \\
  \texttt{qiuyiz@google.com} \\
  \And
  Rajesh Ranganath \\
  New York University\\
  \texttt{rajeshr@cims.nyu.edu} \\
  \And
  Kyunghyun Cho \\
  New York University; Genentech; CIFAR LMB\\
  \texttt{kyunghyun.cho@nyu.edu}
}
\begin{document}

\maketitle

\begin{abstract}
  Preference learning algorithms (e.g., RLHF and DPO) are frequently used to steer LLMs to produce generations that are more preferred by humans, but our understanding of their inner workings is still limited. 
In this work, we 
study the conventional wisdom that preference learning trains models to assign higher likelihoods to more preferred outputs than less preferred outputs, measured via \emph{ranking accuracy}.
Surprisingly, we find that most state-of-the-art preference-tuned models achieve a ranking accuracy of less than $60\%$ on common preference datasets. 
We also derive the \emph{idealized ranking accuracy} that a preference-tuned LLM would achieve if it optimized the DPO or RLHF objective perfectly. We demonstrate that existing models exhibit a significant \emph{alignment gap} -- \emph{i.e.}, a gap between the observed and idealized ranking accuracies. 
We attribute this discrepancy to the DPO objective, which is empirically and theoretically ill-suited to fix even mild ranking errors in the reference model, and derive a simple and efficient formula for quantifying the difficulty of learning a given preference datapoint.
Finally, we show that ranking accuracy strongly correlates with the empirically popular win rate metric when the model is close to the reference model, shedding further light on the differences between on-policy (e.g., RLHF) and off-policy (e.g., DPO) preference learning algorithms.

\end{abstract}

\section{Introduction}
Recent work on aligning LLMs has focused predominantly on tuning models to adhere to human preferences -- commonly through reinforcement learning (RLHF; \citet{stiennon2020summarize}) or directly via offline supervision (DPO; \citet{rafailov2023direct}).
Preference learning algorithms~\citep{hullermeier2008preference,vembu2010label,wirth2017survey} were originally designed to use a dataset of pairwise preferences over candidates to train a model with high \emph{ranking accuracy} -- that is, the model can precisely rank preferred outputs over dispreferred ones.
In the case of language models, the ranking is determined by the likelihood assigned to each candidate.

Many LLM alignment techniques are designed to yield models with a high preference ranking accuracy, including SLiC \citep{zhao2023calibrating,zhao2023slichf}, RAFT \citep{dong2023raft}, PRO \citep{song2024pro}, and RRHF \citep{yuan2023rrhf}.
Most prominently, \citet{rafailov2023direct} claimed that their popular direct preference optimization (DPO) algorithm ``increases the relative log probability of preferred to dispreferred response."
It is standard to evaluate these various objectives by measuring how often the resulting model's generations are preferred over another model's (i.e., a \emph{win rate}) \cite{zheng2023secrets}.
However, the relationship between the loss, ranking accuracy, and win rate is unclear, leaving open the question of what these alignment techniques are actually accomplishing during training.

In this work, we demonstrate that RLHF and DPO struggle to increase ranking accuracy in practice and explore both the theoretical and empirical reasons why. Our findings highlight an intricate relationship between offline optimization and online behavior, and motivate the need for more fine-grained analyses of preference training dynamics.
Our contributions are as follows:
\begin{enumerate}[leftmargin=*]
    \item \textbf{Existing models do not achieve high ranking accuracies.} We demonstrate that a wide variety of open-access preference-tuned LLMs (e.g., \textsc{Llama 2 7B Chat}, \textsc{Gemma 7B IT}, and \textsc{Zephyr 7B DPO}) achieve a ranking accuracy below $60\%$ across a range of validation splits from commonly used preference datasets, such as UltraFeedback~\citep{cui2023ultrafeedback}, Anthropic helpfulness and harmlessness (HH-RLHF, \citep{ganguli2022red}), and Stanford Human Preferences (SHP, \citep{pmlr-v162-ethayarajh22a-shp}) (\Cref{fig:ranking-accs}). Although we do not advocate for ranking accuracy as a measure of model quality, we analyze LLMs' ranking accuracies nonetheless because (1) ranking accuracy has motivated the design of many preference learning algorithms, and (2) DPO directly optimizes for ranking accuracy (Theorem \ref{thm:upper_bound}).
    \item \textbf{Existing models exhibit a significant \emph{alignment gap} between the ranking accuracy they achieve and the accuracy achievable under idealized conditions.}
     We derive a simple formula (\Cref{thm:upper_bound}) for the \emph{idealized ranking accuracy} (i.e., the ranking accuracy achieved from training on ground-truth preference data and perfectly optimizing the DPO or RLHF objective).
    We observe
    that models suffer from a significant \emph{alignment gap} in that they achieve ranking accuracy far below the idealized ranking accuracy (\Cref{tab:ranking_accs}, \Cref{fig:ranking-accs}). 
    \item \textbf{Preference learning rarely corrects incorrect rankings.} We prove theoretically that even mild ranking errors in the reference model can make it virtually impossible for DPO and its variants to correct the ranking (\Cref{prop:dpo_loss}), and demonstrate that in practice, the rankings are rarely flipped (Fig.~\ref{fig:flipped-vs-not-flipped}) and the reference model likelihoods generally determine the ranking accuracy (Fig.~\ref{fig:dpo-vs-lsr-vs-flipped}). 
    Our results permit straightforward and efficient identification of hard-to-learn preference datapoints without any tuning.
    \item \textbf{Ranking accuracy and win rate are closely correlated when the model is close to the reference model.} We observe that the ranking accuracy and win rate trend together when the model is close to the reference model during the early phase of alignment, but become anti-correlated once the model has moved too far away, adding to the ongoing discussion on the differences between on-policy and off-policy behaviors of preference-tuned LLMs.
\end{enumerate}
Crucially, our work highlights fundamental flaws in RLHF and DPO that prevent the preference-tuned model from achieving a high ranking accuracy \emph{even on the training dataset}.

\section{Preliminaries}
\subsection{Learning from Human Preferences}\label{subsec:prelims_preferences}
\paragraph{Preference Data}
Human preference data typically takes the form of pairwise preferences.  
Each prompt $x$ is paired with two possible continuations -- $y_1$ and $y_2$.
One or more human raters then annotate which continuation is preferred.
When there are multiple raters, we use $\alpha(x, y_1, y_2)$ to denote the proportion of raters who prefer $y_1$ over $y_2$.\footnote{In the limit, when there are infinite raters, the empirical proportion $\alpha(x, y_1, y_2)$ converges to the ground truth preference $\Pr[y_1\succ y_2\mid x]$.}
\begin{definition}[Aggregated Preference Datapoint]
    Consider a prompt $x$ with two possible continuations $y_1$ and $y_2$ and the proportion of raters $\alpha(x, y_1, y_2)$ who preferred $y_1$ over $y_2$.
    Then, the aggregated preference datapoint for each prompt $x$ is denoted $(x, y_w, y_l)$ where $y_w$ is the completion preferred by the majority of voters. \label{def:datapoint}
\end{definition}
We note that at the time of writing, the vast majority of datasets either use a single rater \citep{ganguli2022red} or only release aggregated preference data \citep{pmlr-v162-ethayarajh22a-shp,kopf2023openassistant}, so we often do not have access to $\alpha(x, y_1, y_2)$.
A standard assumption is that the ground-truth human preferences obey the Bradley-Terry model (Assumption \ref{assume:bt}).

\paragraph{Supervised Fine-Tuning (SFT)}
In the first step of the preference learning pipeline, the model is typically trained using the standard cross-entropy objective on some choice of offline instruction-tuning dataset(s). In some implementations \citep{touvron2023llama}, a variety of third-party datasets are selected, whereas in other implementations \citep{stiennon2020summarize, rafailov2023direct,ramamurthy2023is} the model is instead trained on the preferred continuations $(x, y_w)$ from the same preference learning dataset that is used in downstream preference learning. The resulting model is often used as a \emph{reference model}, denoted as $\refmodel$ or $\sftmodel$, and it typically serves as the initialization when learning from human preferences.

\paragraph{Reinforcement Learning from Human Feedback (RLHF)}
Learning from human feedback originally required using reinforcement learning~\citep{stiennon2020summarize}.
In this setting, the possible continuations for each prompt are sampled from a reference model (i.e., $(y_w,y_l)\sim\refmodel(\cdot\mid x)$) and then annotated and aggregated to create a preference dataset $\gD$.
Then, one frames the problem as binary classification between the two continuations and trains a reward model $r_\phi(x,y)$ to minimize $
    \gL_R(r_\phi, \gD) = -\E_{(x,y_w,y_l)\sim\gD} [\log\sigma(r_\phi (x, y_w) - r_\phi(x, y_l))]$.
Finally, one trains the model $\model$ to maximize the reward without straying too far from the reference model $\refmodel$.
Because sampling generations from the model is non-differentiable, it is common to use PPO to maximize the reward $r(x,y) = r_\phi(x,y) - \beta(\log\model(y\mid x) - \log\refmodel(y\mid x))$, where $\beta > 0$ is a regularization coefficient designed to prevent the model from straying too far from its initialization. 

\paragraph{Preference Learning with DPO}
\citet{rafailov2023direct} demonstrated that one can avoid using PPO by reparametrizing the objective to operate over policies instead of over rewards. 
Then, one can minimize the differentiable DPO objective. 
\begin{definition}[DPO Objective~\citep{rafailov2023direct}]\label{def:dpo}
    Let $\sigma$ be the sigmoid function and $\beta>0$ be a hyperparameter. Then, the DPO objective for an aggregated preference dataset $\gD$ and a reference model $\refmodel$ is defined as
    \begin{align*}
        \dpoloss(\model, \refmodel) &= - \mathop{\mathbb{E}}_{(x, y_w, y_l)\sim\gD}\biggl[\log\sigma\biggl(\underbrace{\beta\log\frac{\model(y_w\mid x)}{\refmodel(y_w\mid x)} - \beta\log\frac{\model(y_l\mid x)}{\refmodel(y_l\mid x)}}_{\text{reward margin}}\biggr)\biggr] \\
        &=- \mathop{\mathbb{E}}_{(x, y_w, y_l)\sim\gD} \biggl[\log\sigma\biggl(\beta\underbrace{\log\frac{\model(y_w\mid x)}{\model(y_l\mid x)}}_{\text{model log-ratio}} + \beta\underbrace{\log\frac{\refmodel(y_l\mid x)}{\refmodel(y_w\mid x)}}_{\text{reference model log-ratio}}\biggr)\biggr]
    \end{align*}
    We denote the DPO loss on the aggregated datapoint $(x, y_w, y_l)$ as $\dpoloss(x, y_w, y_l; \model, \refmodel)$. 
\end{definition}
\subsection{Evaluation Metrics}\label{sec:eval_metrics}
Evaluating the alignment of a preference-tuned LLM is both under-specified and multi-dimensional. Many knowledge-based and logic-based benchmarks (e.g. MMLU, GLUE, BIG-Bench, HELM) already exist, but these benchmarks largely fail to capture nuanced aspects of human preference, such as helpfulness or harmlessness~\citep{ganguli2022red}. 
As such, one standard evaluation is to ask human or machine raters how often the model produces a favorable completion compared to a baseline (\emph{i.e.}, win rate). 
Human win rate is the gold standard but is costly to compute and can be biased based on size and nature of the worker pool~\citep{hosking2024human, kirk2024prism}.
Rating completions using another LLM (e.g., MT-bench) can be cheaper but similarly suffers from various biases~\citep{pezeshkpour2023large,zheng2023judging,xu2024perils}, and several studies have revealed failures in many LLM judges to identify violations of instruction-following~\citep{zeng2024evaluating,lambert2024rewardbench}. 
Nevertheless, since win rate evaluations are so prevalent, we compare ranking accuracy against win rate in Sec. \ref{sec:ra-and-wr} and describe when the former off-policy metric is correlated with the popular on-policy metric.


Besides the win rate, preference learning algorithms are also benchmarked by the frontier of the rewards versus the divergence from the initialization~\citep{rafailov2023direct}, which serves as a heuristic of how well the model can incorporate preference data without unlearning prior information.
However, it is unclear how well rewards can describe the success of alignment.

As aforementioned, the current paper investigates the \emph{ranking accuracy}, which is defined as follows: 
\begin{definition}[Ranking Accuracy]\label{def:ranking_accuracy}
    The ranking accuracy $\rankacc$ of a model $\model$ on an aggregated preference datapoint $(x, y_w, y_l)$ is defined as 
    \begin{equation}
        \rankacc(x, y_w, y_l;\pi_\theta) = \begin{cases} 1 & \model(y_w\mid x) \geq \model(y_l\mid x) \\
        0 & \text{otherwise.} \end{cases}
    \end{equation}
    Analogously, the ranking accuracy of policy $\pi_\theta$ on a dataset $\mathcal{D}=\{(x,y_w,y_l)\}$ is
    $
        \rankacc(\mathcal{D};\pi_\theta)=\E_{(x,y_w,y_l)\sim\mathcal{D}} \rankacc(x,y_w,y_l;\pi_\theta).
    $
    In the rare case where a dataset has more than two outputs $y$ per prompt $x$, we use the generalized ranking accuracy definition stated in App. \ref{app:ra-def-n-greater-than-2}. We do not advocate for ranking accuracy as a metric of model quality, but as a lens into the inner workings of common preference learning algorithms.
\end{definition}

\begin{remark}[Lengths of Completions]
We note that $y_w$ and $y_l$ can have different lengths; for example, \citet{singhal2023long} showed that $y_w$ is usually longer.
Length can deflate $\model(y_w\mid x)$ and reduce the ranking accuracy.
One can normalize the likelihoods by the length of the response, but the length-normalized ranking accuracy may not be meaningful in practice, because it is currently unclear how to sample from the length-normalized likelihood.
For completeness, we report the ranking accuracies of both the unnormalized and normalized policies, denoted $\rankacc$ and $\tilde\rankacc$, respectively.
\end{remark}

\begin{remark}[Difference between Ranking Accuracy and Reward Accuracy]
    For RLHF models and DPO models, the ranking accuracy is not equivalent to the reward accuracy (\emph{i.e.}, the metrics evaluated in RewardBench \citep{lambert2024rewardbench}). In the RLHF case, we are evaluating the ranking accuracy of the final policy rather than the reward model. In the DPO case, reward accuracy measures whether $\beta\log\frac{\model(y_w\mid x)}{\refmodel(y_w\mid x)} > \beta\log\frac{\model(y_l\mid x)}{\refmodel(y_l\mid x)}$ instead of whether $\model(y_w\mid x)>\model(y_l\mid x)$. Since we ultimately sample from $\pi_\theta$ rather than $\frac{\model(y\mid x)}{\refmodel(y\mid x)}$, we find the ranking accuracy to be of greater practical importance.
\end{remark}

Moreover, we demonstrate that under very stringent conditions, minimizing the DPO objective results in a model with high ranking accuracy.
We characterize the phenomenon on individual datapoints, as is the case throughout the paper, but note that Markov's inequality can be straightforwardly applied to extend the results to a low loss on the entire dataset.
\begin{proposition}[Sanity Check] Recall the definition of $y_w$, $y_l$ in Definition \ref{def:datapoint}.
    If $\refmodel(y_w\mid x) \geq \refmodel(y_l\mid x)$ and $\dpoloss(x, y_w, y_l; \model, \refmodel) \leq 0.6$, then $\rankacc(x, y_w, y_l) =1$.
    \label{prop:sanity_check} 
\end{proposition}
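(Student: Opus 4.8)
The plan is to rewrite the single-datapoint DPO loss from \Cref{def:dpo} in a form that exposes the sign of the model's log-ratio, and then translate the loss bound into that sign. Introducing the shorthand $m := \log\frac{\model(y_w\mid x)}{\model(y_l\mid x)}$ for the model log-ratio and $r := \log\frac{\refmodel(y_l\mid x)}{\refmodel(y_w\mid x)}$ for the reference model log-ratio, the second expression in \Cref{def:dpo} reads
\begin{equation*}
\dpoloss(x, y_w, y_l; \model, \refmodel) = -\log\sigma\bigl(\beta(m + r)\bigr).
\end{equation*}
In this notation the conclusion $\rankacc(x, y_w, y_l) = 1$ is simply $m \geq 0$, while the hypothesis $\refmodel(y_w\mid x) \geq \refmodel(y_l\mid x)$ is exactly $r \leq 0$.

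The central step I would use is the monotonicity of $t \mapsto -\log\sigma(t)$: since $\sigma$ is strictly increasing, this map is strictly decreasing, and at $t = 0$ it takes the value $-\log\sigma(0) = \log 2 \approx 0.693$. First I would record the elementary numerical fact $0.6 < \log 2$. Then the assumption $\dpoloss \leq 0.6 < \log 2 = -\log\sigma(0)$ together with strict monotonicity forces the argument to be positive, i.e.\ $\beta(m + r) > 0$.

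To finish, since $\beta > 0$ this gives $m + r > 0$, hence $m > -r$; and because $r \leq 0$ we have $-r \geq 0$, so $m > 0$. This yields $\model(y_w\mid x) > \model(y_l\mid x)$, and therefore $\rankacc(x, y_w, y_l) = 1$.

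There is no genuine obstacle here: the only quantitative ingredient is $0.6 < \log 2$, which is exactly why the constant $0.6$ is chosen in the statement (any threshold strictly below $\log 2$ would do). The single point requiring attention is bookkeeping, namely that the loss controls $m + r$, the \emph{sum} of the model log-ratio and the negated reference log-ratio; it is precisely the reference-model assumption $r \leq 0$ that upgrades the strict inequality $m + r > 0$ into the desired $m > 0$.
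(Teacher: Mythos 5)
Your proof is correct and follows essentially the same route as the paper's: both arguments hinge on the monotonicity of $t \mapsto -\log\sigma(t)$, the numerical fact $0.6 < \log 2$ (equivalently, $\exp(-0.6) > \sigma(0) = 1/2$), and the reference-model hypothesis to convert positivity of the loss argument into positivity of the model log-ratio. The only cosmetic difference is that the paper groups the argument as $\beta(\ratiowin - \ratiolose)$ using the per-response log-ratios $\ratiowin = \log\frac{\model(y_w\mid x)}{\refmodel(y_w\mid x)}$ and $\ratiolose = \log\frac{\model(y_l\mid x)}{\refmodel(y_l\mid x)}$, whereas you use the equivalent grouping $\beta(m+r)$ from the second display in \Cref{def:dpo}; the two are algebraically identical.
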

This result, proved in App.~\ref{sec:proof-sanity-check}, 
requires the condition that the reference model already has the correct ranking, so it is unlikely to hold across all datapoints in practice and somewhat moot.
The remainder of the paper focuses on more realistic settings where the reference model is imperfect.


\section{The Alignment Gap}
Prop. \ref{prop:sanity_check} showed that training a low DPO loss with a perfect reference model yields a model with perfect ranking accuracy.
However, Fig.~\ref{fig:ranking-acc-ref-models} shows that real-world reference models exhibit low ranking accuracies, which prompts us to
study 
more realistic, imperfect reference models.


\subsection{Existing Reference Models Rarely Have Correct Rankings}\label{sec:ref_model_ranking}
\begin{figure}[ht!]
    \centering
    \begin{subfigure}[b]{0.48\textwidth}
        \centering
        \includegraphics[width=\textwidth]{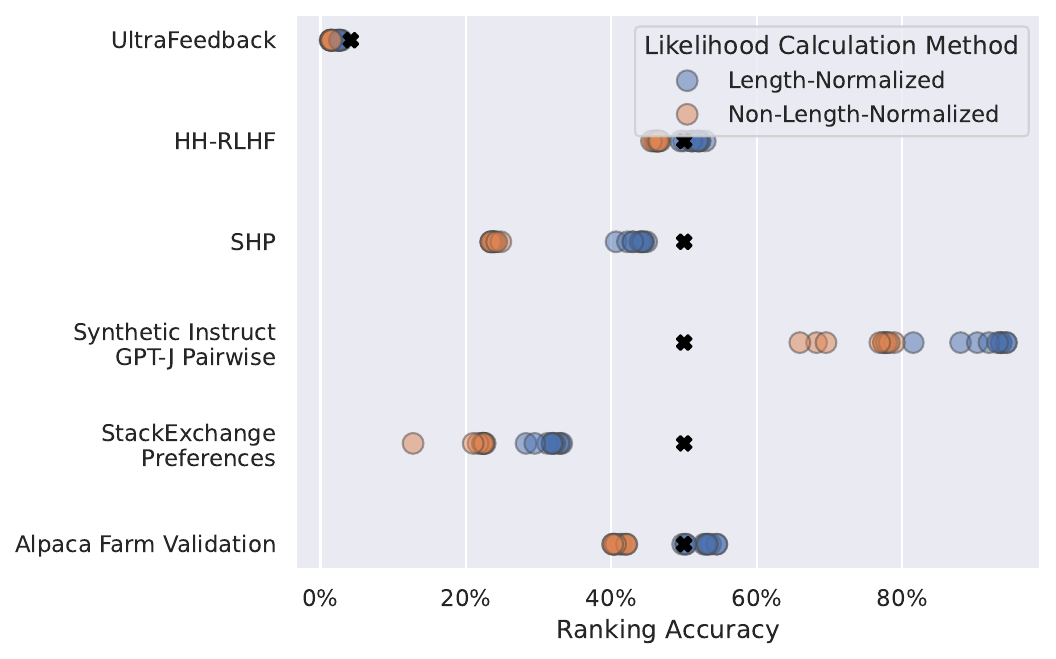}
        \caption{Ranking accuracies of various reference models, including \textsc{GPT2} \citep{radford2019language}, \textsc{Pythia 2.8B} \citep{biderman2023pythia}, \textsc{Pythia 1.4B} \citep{biderman2023pythia}, \textsc{Llama 2 7B} \citep{touvron2023llama}, \textsc{Vicuna 1.5 7B} \citep{zheng2023judging}, \textsc{OLMo 7B} \citep{OLMo}, \textsc{Tulu2 7B} \citep{ivison2023camels}, \textsc{Zephyr 7B SFT} \citep{alignment_handbook2023}, \textsc{Mistral v0.1 7B} \citep{jiang2023mistral}, and \textsc{Gemma 7B} \citep{gemmateam2024gemma}}
        \label{fig:ranking-acc-ref-models}
    \end{subfigure}
    \hfill
    \begin{subfigure}[b]{0.48\textwidth}
        \centering
        \includegraphics[width=\textwidth]{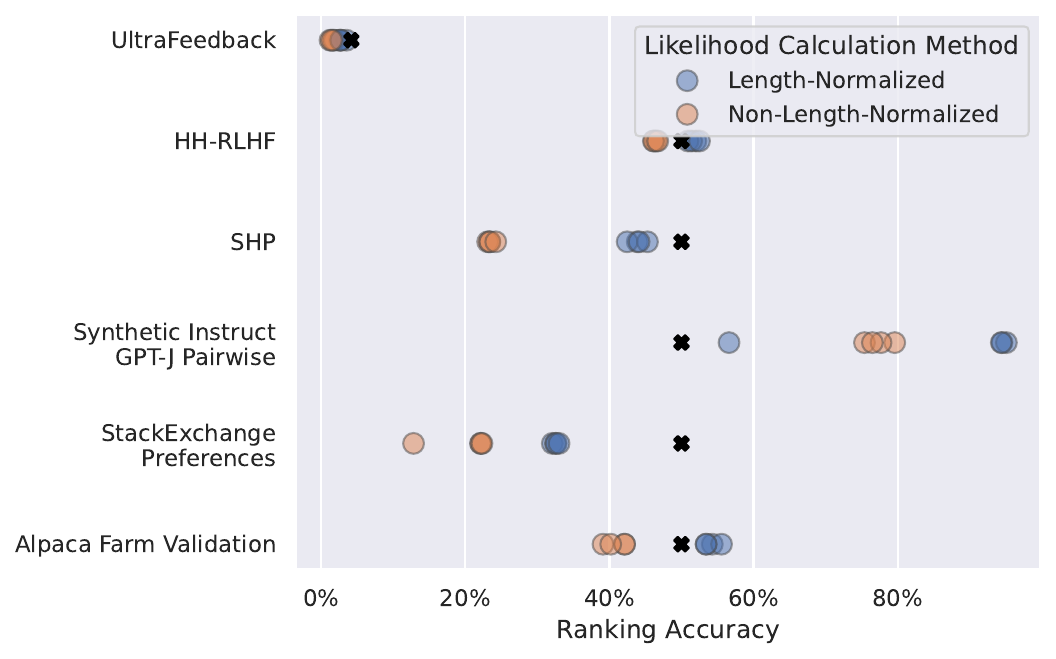}
        \caption{Ranking accuracies of various preference-tuned models, including \textsc{Llama 2 7B Chat} \citep{touvron2023llama}, \textsc{Tulu2 7B DPO} \citep{ivison2023camels}, \textsc{Zephyr 7B DPO} \citep{alignment_handbook2023}, and \textsc{Gemma 7B IT} \citep{gemmateam2024gemma}}
        \label{fig:ranking-acc-pref-models}
        \vspace{9pt}
    \end{subfigure}
    \caption{
    \textbf{Both reference and preference-tuned models exhibit low ranking accuracy on most preference datasets.} Each point represents the length-normalized or non-length-normalized ranking accuracy of individual (\ref{fig:ranking-acc-ref-models}) reference models (pre-trained or fine-tuned), or (\ref{fig:ranking-acc-pref-models}) preference-tuned models (trained with DPO or RLHF). The random chance accuracy for each dataset is indicated with a black `X'. We sub-sample 1K examples from each dataset and use the test split when available. We describe datasets in \ref{sec:datasets} and list all numbers in Tables \ref{tab:ra-full-pt1}, \ref{tab:ra-full-pt2}, and \ref{tab:ra-full-pt3}. For UltraFeedback, ranking accuracy is measured with exact match across all 4 outputs (see App. \ref{app:ra-def-n-greater-than-2}).
    }
    \label{fig:ranking-accs}
\end{figure}
Fig. \ref{fig:ranking-acc-ref-models} indicates that reference models rarely achieve high ranking accuracy on common preference datasets (except Synthetic Instruct GPT-J Pairwise), even though many were likely trained on the preferred completions (see Sef. \ref{subsec:prelims_preferences}). 
Many of the models do not have documented training data so we do not know which preference datasets, if any, are in-distribution.
We also fine-tune several pretrained LLMs on the preferred completions (see App. \ref{app:ra-open-source-custom-eval}) and observe that ranking accuracy does not increase significantly.\footnote{It is not surprising that fine-tuning on the preferred completions does not boost ranking accuracy, since the model does not receive any knowledge of the relative qualities of the preferred and rejected completions.}
Based on our findings, we turn to the case of imperfect reference models.

\subsection{Idealized Ranking Accuracy}
We showed above that empirically, reference models exhibit poor accuracy when ranking the plausible completions.
However, the RLHF reward and DPO objective were formulated to ensure that the model learns the preference dataset but does not move too far from the  reference model $\refmodel$, so there may be a limit on the possible accuracy of the preference-tuned model.
Here, we formalize this intuition by studying what the optimal policies would be when perfectly optimizing DPO or RLHF with access to perfect data (i.e., true proportions of human preferences).\footnote{This result differs from \Cref{prop:sanity_check} in that it accounts for a potentially imperfect reference model.}
\begin{theorem}[Simulating Perfect RLHF\footnote{We note that this result can also be straightforwardly derived from prior works~\citep{peters2007reinforcement,korbak2022rl,go2023aligning}.}]\label{thm:upper_bound}
    Fix a reference model $\refmodel$ and an aggregated preference datapoint $(x, y_w, y_l)\sim\gD$. Assume the dataset includes the ground-truth human preferences: that is, $\alpha(x, y_w, y_l)=\mathbb{P}(y_w\succ y_l)$, and that these preferences obey the Bradley-Terry model (\Cref{assume:bt}). Let $\pi^*$ be the model resulting from perfectly optimizing the DPO or RLHF objective on $(x, y_w, y_l)$ as described in~\Cref{subsec:prelims_preferences}. Then, $\pi^*$ satisfies
    \begin{equation}
        \frac{\pi^*(y_w\mid x)}{\pi^*(y_l\mid x)} = \frac{\refmodel({y_w\mid x})}{\refmodel(y_l\mid x)} \left(\frac{\alpha(x, y_w, y_l)}{1-\alpha(x, y_w, y_l)}\right)^{1/\beta}
    \end{equation}
    where $\alpha(x, y_w, y_l)$ is the proportion of raters who preferred $y_w$ over $y_l$ and $\beta$ is a hyperparameter in the DPO and RLHF objectives.
\end{theorem}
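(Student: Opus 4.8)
The plan is to handle the RLHF and DPO cases through a common computation, since both objectives are designed to share the same minimizer. First I would recall the closed-form solution of the KL-regularized reward-maximization problem that defines RLHF: maximizing $\E_{y\sim\model(\cdot\mid x)}[r(x,y)] - \beta\,\mathrm{KL}\!\left(\model(\cdot\mid x)\,\|\,\refmodel(\cdot\mid x)\right)$ over $\model$ yields the Gibbs form $\pi^*(y\mid x)\propto\refmodel(y\mid x)\exp(r(x,y)/\beta)$. Taking the ratio at $y_w$ and $y_l$ cancels the intractable partition function and gives
\[
\frac{\pi^*(y_w\mid x)}{\pi^*(y_l\mid x)} = \frac{\refmodel(y_w\mid x)}{\refmodel(y_l\mid x)}\exp\!\left(\frac{r(x,y_w)-r(x,y_l)}{\beta}\right),
\]
so everything reduces to pinning down the reward margin $r(x,y_w)-r(x,y_l)$.

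That is where the Bradley-Terry assumption and the ground-truth-data hypothesis enter. Under \Cref{assume:bt}, $\Pr[y_w\succ y_l\mid x]=\sigma\!\left(r(x,y_w)-r(x,y_l)\right)$, and by hypothesis this equals $\alpha(x,y_w,y_l)$. Inverting the sigmoid gives $r(x,y_w)-r(x,y_l)=\log\frac{\alpha}{1-\alpha}$, and substituting into the display above produces exactly the factor $\left(\frac{\alpha}{1-\alpha}\right)^{1/\beta}$, completing the RLHF case.

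For DPO I would argue that ``perfectly optimizing with ground-truth data'' means minimizing the population loss in which each datapoint's winner/loser designation is a Bernoulli draw that points to $y_w$ with probability $\alpha$. Writing $m$ for the reward margin $\beta\log\frac{\model(y_w\mid x)}{\refmodel(y_w\mid x)} - \beta\log\frac{\model(y_l\mid x)}{\refmodel(y_l\mid x)}$, the expected loss is $-\alpha\log\sigma(m) - (1-\alpha)\log\sigma(-m)$, which is convex in $m$. Setting its derivative to zero and using $\sigma'=\sigma(1-\sigma)$ collapses the stationarity condition to $\sigma(m)=\alpha$, so the optimal margin is again $m=\log\frac{\alpha}{1-\alpha}$. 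Unpacking the definition of $m$ and rearranging recovers the claimed ratio, confirming that the DPO and RLHF optimizers coincide.

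The main obstacle is conceptual rather than computational: making precise what ``perfectly optimizing DPO'' means on probabilistic preference data, since \Cref{def:dpo} is stated for hard labels. I would need to justify replacing the hard label by its expectation $\alpha$ and to confirm that the single-datapoint minimizer is attained --- the margin $m$ ranges over all of $\sR$ only when $\model$ has enough capacity to realize any value of the ratio $\frac{\model(y_w\mid x)}{\model(y_l\mid x)}$ at this fixed $(x,y_w,y_l)$, which is the implicit regularity assumption behind ``perfect'' optimization. Once that is granted, the remainder is routine sigmoid algebra.
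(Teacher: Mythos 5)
Your proposal is correct and follows essentially the same route as the paper: for RLHF, the Gibbs closed form $\pi^*(y\mid x)\propto\refmodel(y\mid x)\exp(r(x,y)/\beta)$ combined with the Bradley--Terry identity $r(x,y_w)-r(x,y_l)=\log\frac{\alpha}{1-\alpha}$, and for DPO, minimizing the population cross-entropy with soft labels $\alpha$ so that the optimal reward margin $m^*$ satisfies $\sigma(m^*)=\alpha$. The only cosmetic difference is that you verify the DPO stationarity condition by explicit differentiation in $m$, whereas the paper invokes the Rafailov et al.\ reward--policy reparametrization and the standard fact that the cross-entropy minimizer matches the true preference probability; your closing remark about expressiveness also matches the paper's ``(adequately expressive) model'' caveat.
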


\begin{remark}
    We prove this result in App. \ref{sec:proof-upper-bound}. Note that deterministic preferences, i.e., $\alpha(x, y_w, y_l) = 1$, should not be confused with settings that only report a single individual's preference, i.e., $\hat{\alpha}(x, y_w, y_l) = 1$. In the former,
    the optimal probability ratio is infinity. The latter requires a better estimate of $\mathbb{P}(y_w\succ y_l)$, e.g., more samples. 
\end{remark}

This result allows us to simulate the policy resulting from perfect optimization of either the RLHF or the DPO learning objective. 
As such, given a reference model $\refmodel$ and preference dataset $\gD$, we can easily measure the \emph{idealized ranking accuracy} of a model.
We prove this result in App.~\ref{sec:proof-cor}.

\begin{corollary}[Idealized Ranking Accuracy]\label{cor:idealized_rank_acc}
    Given a reference model $\refmodel$, the DPO or RLHF hyperparameter $\beta$, a dataset of aggregated preferences $\gD=\{(x, y_w, y_l)\}$ and their corresponding rater proportions $\alpha(x, y_w, y_l)$, the ranking accuracy of the optimum of the RLHF or DPO objective $\pi^*$ is given by
    \begin{equation}\label{eq:upper-bound}
        \idealrankacc(\gD;\refmodel) =\underset{(x,y_w,y_l)\sim\gD}{\mathbb{E}} \left[\mathbbm{1}\left[\frac{\refmodel({y_w\mid x})}{\refmodel(y_l\mid x)} \left(\frac{\alpha(x, y_w, y_l)}{1-\alpha(x, y_w, y_l)}\right)^{1/\beta} > 1\right]\right]
    \end{equation}
where $\mathbbm{1}[\cdot]$ is the indicator function. 
When computed on length-normalized likelihoods from $\tilde\refmodel$, we denote the idealized ranking accuracy as $\tilde\idealrankacc$. 
\end{corollary}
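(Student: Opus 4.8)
The plan is to obtain \Cref{cor:idealized_rank_acc} as an immediate consequence of \Cref{thm:upper_bound} by substituting the closed-form optimal policy ratio into the definition of ranking accuracy. Since \Cref{thm:upper_bound} already hands us $\frac{\pi^*(y_w\mid x)}{\pi^*(y_l\mid x)}$ in closed form, the corollary reduces to a definitional unpacking followed by taking an expectation over the dataset.

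First, I would rewrite the per-datapoint ranking accuracy from \Cref{def:ranking_accuracy}. By definition, $\rankacc(x, y_w, y_l; \pi^*) = 1$ exactly when $\pi^*(y_w\mid x) \geq \pi^*(y_l\mid x)$; because likelihoods are strictly positive, I can divide through to see that this condition is equivalent to $\frac{\pi^*(y_w\mid x)}{\pi^*(y_l\mid x)} \geq 1$. Thus $\rankacc(x, y_w, y_l; \pi^*) = \mathbbm{1}\left[\frac{\pi^*(y_w\mid x)}{\pi^*(y_l\mid x)} > 1\right]$ up to the boundary event of an exact tie. Second, I would invoke \Cref{thm:upper_bound} to replace the policy ratio by $\frac{\refmodel(y_w\mid x)}{\refmodel(y_l\mid x)}\left(\frac{\alpha(x, y_w, y_l)}{1-\alpha(x, y_w, y_l)}\right)^{1/\beta}$, which turns the indicator into precisely the integrand appearing in \Cref{eq:upper-bound}. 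Finally, I would apply the dataset-level definition $\rankacc(\gD; \pi^*) = \E_{(x,y_w,y_l)\sim\gD}\,\rankacc(x,y_w,y_l;\pi^*)$ and take the expectation over $(x, y_w, y_l)\sim\gD$, yielding the stated formula for $\idealrankacc(\gD;\refmodel)$. The length-normalized statement follows identically by running the same argument with $\tilde\refmodel$ in place of $\refmodel$.

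Because this is a direct computation riding on \Cref{thm:upper_bound}, I do not expect any substantive obstacle. The only point requiring minor care is the boundary (tie) case $\pi^*(y_w\mid x) = \pi^*(y_l\mid x)$: \Cref{def:ranking_accuracy} counts ties as correct via its $\geq$, whereas \Cref{eq:upper-bound} uses a strict $>$. This discrepancy affects only a measure-zero set of reference-model likelihoods and rater proportions $\alpha$, so it does not change the value of the expectation for generic inputs; if an exact match is desired one can simply state the corollary with $\geq$.
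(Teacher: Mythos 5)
Your proposal is correct and follows essentially the same route as the paper's own proof: unpack \Cref{def:ranking_accuracy} into a condition on the likelihood ratio, substitute the closed-form ratio from \Cref{thm:upper_bound}, and take the expectation over $\gD$. Your extra observation about the mismatch between the $\geq$ in \Cref{def:ranking_accuracy} and the strict $>$ in \Cref{eq:upper-bound} is a point of care the paper's proof silently glosses over, and your measure-zero resolution of it is sound.
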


\subsection{Measuring the Alignment Gap}
Given access to $\pi_\text{Ref}$, $\beta$, and the $\alpha(x,y_w,y_l)$ values for each triple $(x,y_w,y_l)$ in a given preference dataset, we can compute the idealized ranking accuracy from Eq. \ref{eq:upper-bound}.\footnote{Note that when $\alpha(x, y_w, y_l) = 1$, we replace it with $1-\epsilon$ to compute the formula. See App. \ref{app:comp-idealized-ra}.} The results are shown in Table \ref{tab:ranking_accs} and further details are given in App. \ref{app:comp-idealized-ra}.

We identify several surprising findings.
Firstly, even under ideal conditions (\emph{i.e.} perfectly optimizing the objective on ground-truth preference data), the idealized ranking accuracy is still sometimes below $100\%$. 
This distance varies with the choice of $\beta$, which indicates that the limits of DPO/RLHF depend largely upon how strong the reliance on $\pi_\text{Ref}$ is. 
Furthermore, we find that many state-of-the-art models do not achieve a ranking accuracy anywhere close to the idealized ranking accuracy, exhibiting alignment gaps ranging from 19 to 59 percentage points (measured to the median idealized $\rankacc$ or $\tilde\rankacc$).

\begin{table}[t]
\caption{\textbf{The idealized ranking accuracy of existing algorithms is not perfect, but preference-tuned models exhibit ranking accuracies far even from this idealized case.} We provide both the length-normalized ($\tilde\rankacc$) and non-length-normalized ($\rankacc$) ranking accuracies for a variety of open-access preference-tuned models on the Alpaca Farm \citep{dubois2023alpacafarm} validation dataset (described in App. \ref{sec:datasets}). We also provide the idealized ranking accuracy ($\idealrankacc$ or $\tilde\idealrankacc$, \Cref{cor:idealized_rank_acc}). Since idealized ranking accuracy can be computed with a variety of values of $\beta$, we provide the minimum, median, and maximum idealized ranking accuracy values for a range of $\beta$. For more details, see App. \ref{app:comp-idealized-ra}.}
    \label{tab:ranking_accs}
    \centering
    \begin{tabular}{L{3.5cm}C{1.25cm}C{3cm}C{1.25cm}C{3cm}} \toprule
       \multirow{3}{3cm}{\centering Preference-Tuned Model}  &  \multicolumn{2}{c}{Length-Normalized} & \multicolumn{2}{c}{Non-Length-Normalized} \\
       \cmidrule(rl){2-3} \cmidrule(l){4-5}
         & $\tilde\rankacc$  & $\tilde\idealrankacc$ (Min./Med./Max.) &  $\rankacc$  & $\idealrankacc$ (Min./Med./Max.) \\ 
        \cmidrule(r){1-1} \cmidrule(rl){2-3} \cmidrule(l){4-5}
        \textsc{\small Zephyr-7B-DPO} & 54\% & 86\% / 98\% / 100\% & 42\% & 90\% / 99\% / 100\% \\
        \textsc{\small Tulu-2-DPO-7B} & 53\% & 87\% / 97\% / 100\% & 42\% & 91\% / 99\% / 100\% \\
        \textsc{\small Google-Gemma-7B-IT} & 54\% & 73\% / 73\% / 97\% & 40\% & 67\% / 93\% / 100\% \\
        \textsc{\small LLaMA-2-7B-Chat-HF} & 53\% & 87\% / 97\% / 100\% & 40\% & 91\% / 99\% / 100\% \\
        \bottomrule
    \end{tabular}
\end{table}

\section{Understanding Ranking Accuracy with DPO}\label{sec:understanding-ra-with-dpo}
We now turn to the training objectives to account for the alignment gap.
We focus our analysis on the DPO objective (\Cref{def:dpo}), because its failure to achieve high ranking accuracy is especially surprising (\Cref{tab:ranking_accs}).
In particular, DPO directly maximizes the reward margin between preferred-dispreferred  pairs over an offline dataset so we would expect it to perform well on in-distribution held-out data.
We also note that DPO is a popular choice in the community for aligning LLMs, because it is less costly than performing RLHF.

In this section, we study real-world characteristics of DPO.
First, we show empirically that DPO rarely flips the ranking of the two continuations.
This result combined with the observation that reference models exhibit poor ranking accuracy (Sec.~\ref{sec:ref_model_ranking}) provides an explanation for the observed poor ranking accuracies in \Cref{tab:ranking_accs}.
We then formally characterize how hard it is for DPO to correct the ranking of each datapoint.
Our result highlights how the reference model conditions the optimization: as the reference model log-ratio (\Cref{def:dpo}) grows larger, one has to reduce the DPO loss to a dramatically small value to flip the incorrect ranking (Fig.~\ref{fig:dpo-vs-lsr-vs-flipped}).



\subsection{DPO Rarely Flips Preference Rankings}
\begin{figure}
    \centering
    \begin{subfigure}[b]{0.29\textwidth}
        \centering
        \includegraphics[width=\textwidth]{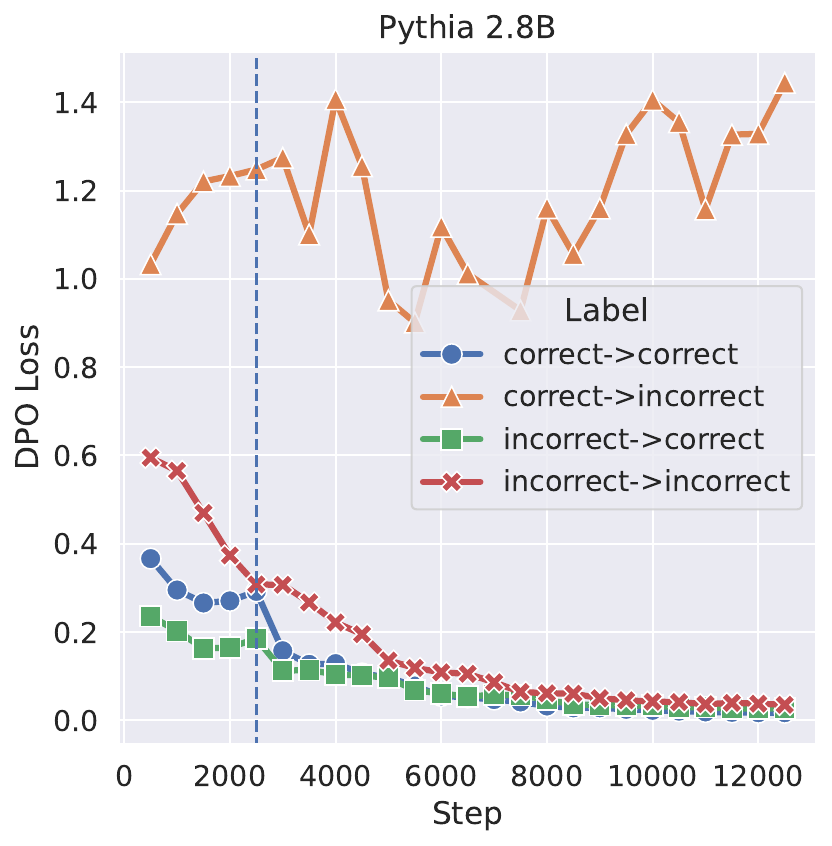}
        \caption{\label{fig:steps-vs-loss}DPO loss.}
        \vspace{12pt}
    \end{subfigure}
    \hfill
    \begin{subfigure}[b]{0.29\textwidth}
        \centering
        \includegraphics[width=\textwidth]{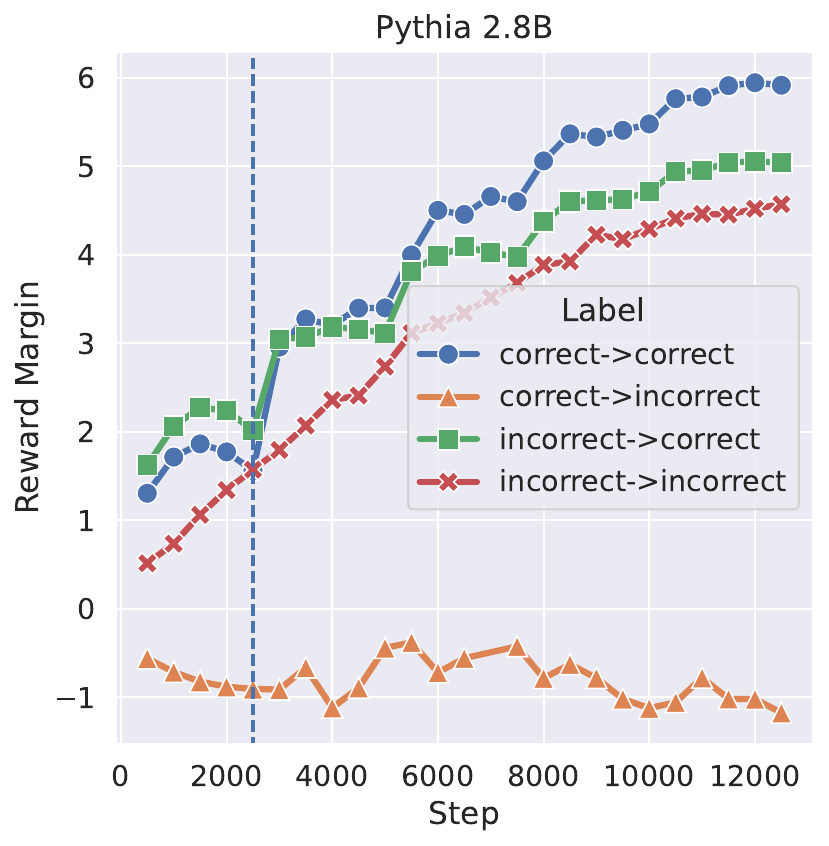}
        \caption{\label{fig:steps-vs-reward-margin}DPO reward margin.}
        \vspace{12pt}
    \end{subfigure}
    \hfill
    \begin{subfigure}[b]{0.31\textwidth}
        \centering
        \includegraphics[width=\textwidth]{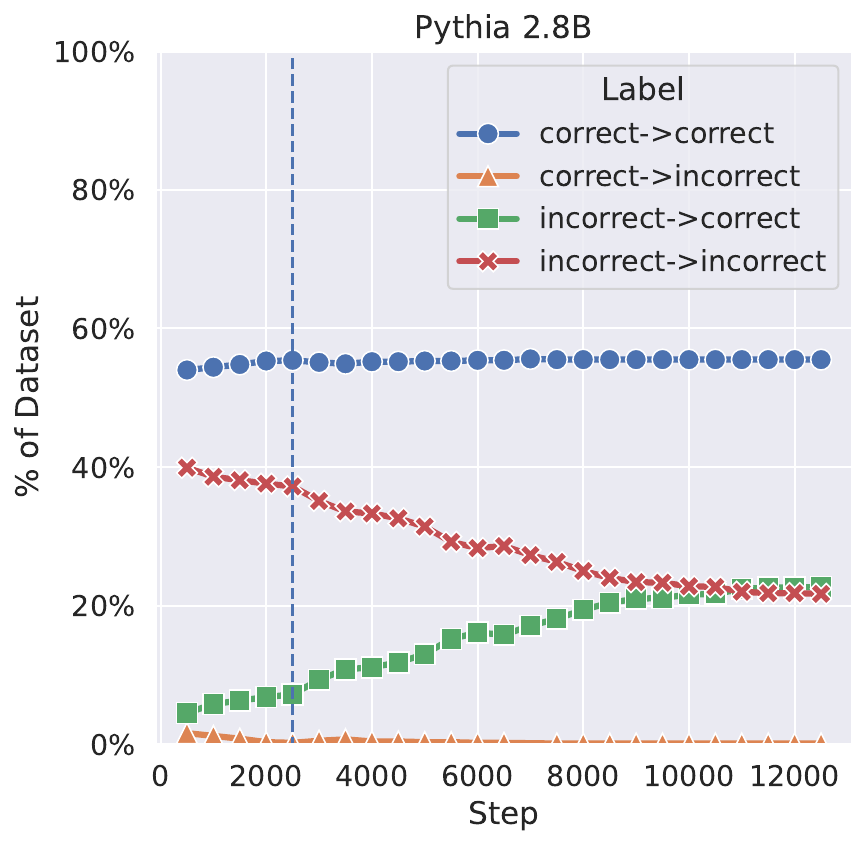}
        \caption{\label{fig:steps-vs-pc-dataset}Proportion of the dataset corresponding to each category of data.}
    \end{subfigure}
    \caption{\textbf{Despite continuously decreasing the loss, DPO rarely flips the rankings of pairs before the point of overfitting (marked by the vertical dashed line) and instead mostly increases the reward margin of already correctly ranked pairs.} We train a Pythia-2.8B model for 5 epochs using the DPO objective and categorize the training dataset into four subsets -- examples that initially have the correct ranking and are flipped to (1) correct or (2) incorrect, and examples that initially have the incorrect ranking and are flipped to (3) correct or (4) incorrect. In all three figures, the hue of the point indicates the category. The dashed vertical line indicates the training step at which the lowest eval. loss occurs. Past this point, the model begins to overfit (\emph{i.e.}, the eval. loss starts to increase). We also present results for two other models with three seeds each in Appendix \ref{app:dpo-training-dynamics}.}
    \label{fig:flipped-vs-not-flipped}
\end{figure}

To study how ranking accuracy changes over the course of DPO training, we train three sizes of models (GPT-2 \citep{radford2019language}, Pythia 2.8B \citep{biderman2023pythia}, and Llama 2-7B \citep{touvron2023llama}) across three seeds each on the Anthropic HH-RLHF \citep{bai2022training} preference dataset and study the ranking accuracy on different partitions of the training dataset. We present the results from training one seed of Pythia 2.8B in Fig. \ref{fig:flipped-vs-not-flipped}, and defer training details to App. \ref{app:dpo-training-details} and results on the other two models to App. \ref{app:dpo-training-dynamics-results}. In Fig. \ref{fig:flipped-vs-not-flipped}, we partition a random subsample of 1K examples from the training dataset into four groups based on whether the reference model $\refmodel$ had the correct ranking and whether the current model $\model$ has the correct ranking.

Surprisingly, Fig. \ref{fig:flipped-vs-not-flipped} demonstrates that DPO rarely flips the ranking of $(y_w,y_l)$ over the course of training despite consistently reducing the loss $\dpoloss$.
Aside from the group of points for which the model unlearns the correct preference ranking, we observe that the loss decreases and the reward margin increases consistently while training.
However, \textbf{at the point of lowest validation loss (marked by the vertical dashed line in Fig. \ref{fig:steps-vs-pc-dataset}), less than $10\%$ of the originally incorrectly ranked points have been flipped to have the correct ranking}. Past this point, the model begins to overfit (\emph{i.e.}, the validation loss begins to increase). DPO does not substantially improve the ranking accuracy until far past the point of overfitting. Although Theorem \ref{thm:upper_bound} predicts that the theoretically optimal DPO model exhibits high ranking accuracy, in practice the empirical endpoint of training occurs far from the theoretical optimum.
This indicates that the DPO objective is ill-formulated to induce a high ranking accuracy in practice.

\begin{figure}
    \centering
    \includegraphics[width=\textwidth]{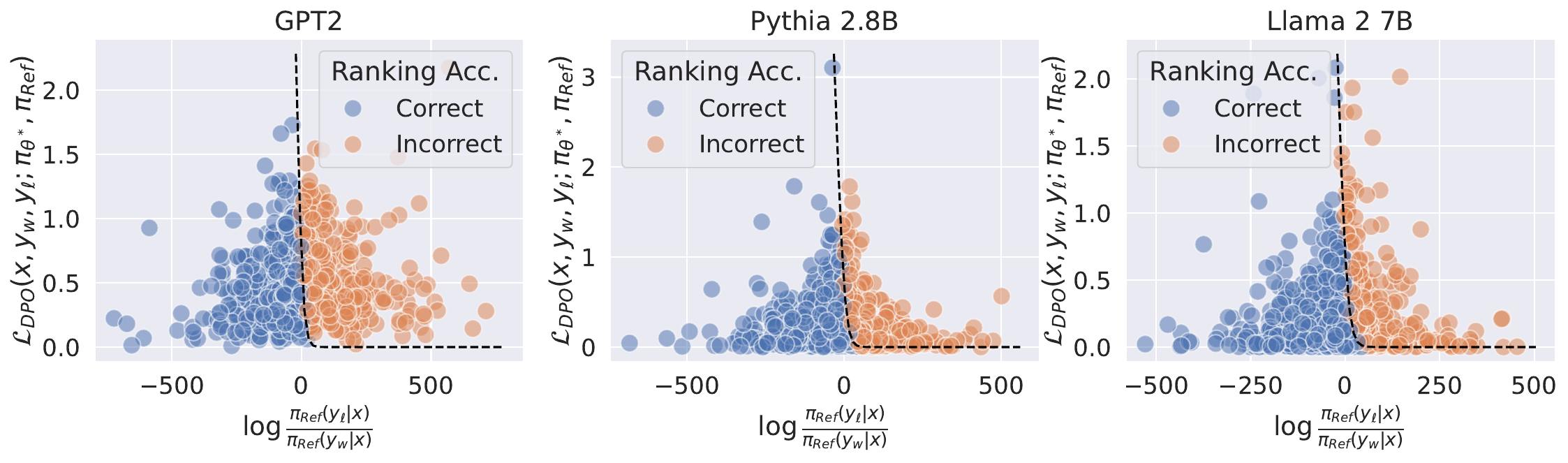}
    \caption{
    \textbf{DPO loss alone does not predict ranking accuracy, due to the influence of the reference model log-ratio in the loss.}
    Each point represents the DPO loss on a separate training example $(x,y_w,y_l)$ from a subsample of 1K examples from the training dataset, using the model $\pi_{\theta^*}$ that corresponds to the checkpoint with the lowest validation loss. The color of each point indicates whether $\pi_{\theta^*}$ achieves the correct ranking on that example, \emph{i.e.}, whether $\pi_{\theta^*}(y_w|x)>\pi_{\theta^*}(y_l|x)$. The dashed line is the function $f(c)=-\log\sigma(\beta c)$, from \Cref{prop:dpo_loss}. In summary, the examples that $\pi_{\theta^*}$ classifies correctly tend to be those that were already classified correctly by the reference model. Results for the other two seeds of each model are given in Fig. \ref{fig:dpo-vs-lsr-vs-flipped-other-seeds}.}
    \label{fig:dpo-vs-lsr-vs-flipped}
\end{figure}

\subsection{Analysis: How Easy Is It To Flip A Ranking?} 
In the result below, we show that the DPO loss can decrease substantially without any improvement on the ranking accuracy of the model. 
Specifically, the DPO loss that the model needs to reach in order to have the correct ranking on an example $(x, y_w, y_l)$ depends on the quality of the reference model, quantified by the reference model log-ratio. 
This dependence is highly ill-conditioned, whereby using a reference model with moderately incorrect likelihoods assigned to each continuation can effectively prevent DPO from learning the correct ranking.

\begin{theorem}\label{prop:dpo_loss} Consider an aggregated preference datapoint $(x, y_w, y_l)$ such that the reference model log-ratio is some constant $c$, i.e.
$
\log \frac{\refmodel(y_l|x)}{\refmodel(y_w|x)}=c.$
Then, $\rankacc(x, y_w, y_l) = 1$ if and only if $\dpoloss(x, y_w, y_l) \le -\log \sigma (\beta c)$, where $\sigma$ is the sigmoid function.
\end{theorem}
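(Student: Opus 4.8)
The plan is to reduce both directions of the biconditional to a single elementary inequality on the model log-ratio, using only the monotonicity of the sigmoid. First I would introduce the shorthand $m := \log\frac{\model(y_w\mid x)}{\model(y_l\mid x)}$ for the model log-ratio and recall that the hypothesis fixes the reference model log-ratio $c = \log\frac{\refmodel(y_l\mid x)}{\refmodel(y_w\mid x)}$. Substituting into the second form of the DPO objective in \Cref{def:dpo}, the single-datapoint loss becomes $\dpoloss(x, y_w, y_l) = -\log\sigma\bigl(\beta(m + c)\bigr)$. This isolates the only quantity that depends on $\model$, namely $m$, and reduces the statement to a comparison between $-\log\sigma(\beta(m+c))$ and the fixed threshold $-\log\sigma(\beta c)$.

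Next I would translate the ranking accuracy condition into a condition on $m$. By \Cref{def:ranking_accuracy}, $\rankacc(x, y_w, y_l) = 1$ exactly when $\model(y_w\mid x) \geq \model(y_l\mid x)$, which, since $\log$ is strictly increasing, is equivalent to $m \geq 0$. Thus both sides of the claimed biconditional must be shown equivalent to $m \geq 0$.

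The core step is a strict-monotonicity argument. Consider $g(t) := -\log\sigma(\beta t)$; because $\sigma$ is strictly increasing on $\mathbb{R}$, $\log$ is strictly increasing, and $\beta > 0$, the map $g$ is strictly decreasing. Writing the loss bound as $\dpoloss(x,y_w,y_l) = g(m+c) \leq g(c) = -\log\sigma(\beta c)$ and applying strict monotonicity of $g$, this inequality holds if and only if $m + c \geq c$, i.e. if and only if $m \geq 0$. Chaining the two equivalences gives $\rankacc(x,y_w,y_l) = 1 \iff m \geq 0 \iff \dpoloss(x, y_w, y_l) \leq -\log\sigma(\beta c)$, which is precisely the statement.

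There is no substantive obstacle here; the only point requiring care is that the orientation of the inequalities matches up. The $\geq$ in the definition of ranking accuracy and the $\leq$ in the loss bound are consistent precisely because $g$ is \emph{decreasing}, and the common boundary $m = 0$ lands on the accepting side of both (the definition uses a non-strict $\geq$, and $g(c) \leq g(c)$ holds). I would flag this tightness at $m=0$ explicitly so the equality case is unambiguous, and note that $\beta > 0$ is exactly what guarantees the strict monotonicity that makes the equivalence an iff rather than merely an implication.
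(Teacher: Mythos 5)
Your proof is correct and follows essentially the same route as the paper's: both decompose the single-datapoint loss as $-\log\sigma\bigl(\beta(m+c)\bigr)$ with $m$ the model log-ratio, and use monotonicity of $\sigma$ and $\log$ to reduce both directions of the biconditional to $m + c \geq c \iff m \geq 0$. Your packaging of the argument into a single strictly decreasing map $g(t) = -\log\sigma(\beta t)$ is a minor stylistic streamlining (and your explicit attention to strictness and the $m=0$ boundary is slightly more careful than the paper's two-direction write-up), but the substance is identical.
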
 

\begin{remark}  
    It is straightforward to extend our analysis to popular variants of DPO. For illustration, we prove an analogous result for identity preference optimization (IPO, \citet{azar2023general}) in App.~\ref{sec:ipo}.
\end{remark}
We prove this result in App. \ref{sec:proof-dpo-loss}.
Our theoretical result allows us to formally identify the points that will be hard to flip in their rankings.
Fig. \ref{fig:dpo-vs-lsr-vs-flipped} visualizes the reference model log-ratio for several settings and highlights that datapoints with even mild ranking errors in the reference model will require the loss to be reduced to a very low value in order to flip the ranking. 
App. \ref{sec:qualitative} contains examples of hard-to-learn, easy-to-learn, and easy-to-flip datapoints.
We observe that the hard-to-learn datapoints are substantially longer than the easy ones, and that the easy datapoints generally contain less ambiguous preference annotations.
More generally, our result motivates the use of stronger $\pi_\text{Ref}$ models and iterative or on-policy variants of DPO~\citep{tang2024generalized,yuan2024selfrewarding,kim2024sdpo,tran2023iterative}.

\section{Ranking Accuracy and Win Rate}\label{sec:ra-and-wr}
Our results on ranking accuracy illuminate how well DPO and RLHF can align to preference data, but we have not yet related these insights to how the generative behavior of the model changes during alignment.
In particular, ranking accuracy is a convenient but off-policy metric and is thus not as widely adopted as the on-policy metric of win rate (see Sec.~\ref{sec:eval_metrics}).
Indeed, one could maximize the ranking accuracy by learning a strong classifier on the preference data, but that model may not generate high-quality text.
Here, we explore the gap between on-policy (i.e., generative) and off-policy (i.e., classification) behaviors of LLMs through the lens of ranking accuracy and win rate.
Since the DPO objective directly optimizes for ranking accuracy (\cref{prop:sanity_check}), the relationship between these two metrics is a direct reflection of how off-policy training affects on-policy behavior.


We study the relationship between win rate and ranking accuracy in two settings: (1) during DPO training, 
and (2) in a DPO variant modulating the influence of $\pi_\text{Ref}$.
We measure the win rate on 500 responses to prompts from the training dataset using the Alpaca Eval GPT-4 \citep{li2023alpacaeval} auto-annotator. 

\paragraph{Setting 1: DPO Training.} 
We measure the win rate and the ranking accuracy of a Pythia 2.8B model \citep{biderman2023pythia} during DPO training with the same configuration as in \cref{sec:understanding-ra-with-dpo}. See~\cref{fig:ra-wr-during-training} for the results.

\paragraph{Setting 2: Attenuating the reference model.}
\Cref{prop:dpo_loss} showed that $\refmodel$ exerts a negative influence on the ranking accuracy in most cases, so we design a new objective that scales the reference model log-ratio in $\dpoloss$ to further characterize how win rate and ranking accuracy relate.
\begin{equation}\label{eq:gamma-scaling}
    \mathcal{L}_{DPO}^\gamma(\model,\refmodel)=- \mathop{\mathbb{E}}_{(x, y_w, y_l)\sim\gD}\left[\log\sigma\left(\beta\left(\log\frac{\model(y_w\mid x)}{\model(y_l\mid x)} + \gamma\log\frac{\refmodel(y_l\mid x)}{\refmodel(y_w\mid x)}\right)\right)\right]
\end{equation}
Note that $\dpoloss^\gamma = \dpoloss$ (\Cref{def:dpo}) when $\gamma=1$, and a larger value of $\gamma$ increases the role of the reference model.
Also, $\gamma$ directly scales $c$ in~\Cref{prop:dpo_loss}, thereby controlling how easy it is to fit the data and increase the ranking accuracy.
We train a range of Pythia-2.8B models using the $\mathcal{L}_{DPO}^\gamma$ objective for $\gamma\in\{0,0.25,0.5,0.75,1.0,1.25,1.5, 1.75, 2.0\}$ and measure the ranking accuracies and win rates of the best model for each $\gamma$ value.\footnote{We use the best hyperparameters obtained from the experiments in Sec. \ref{sec:understanding-ra-with-dpo}.}

\paragraph{Takeaway: Ranking accuracy correlates with win rate when the model is close to the reference model.}
In both settings, we observe that the win rate and ranking accuracy are highly correlated with one another in the early phase of training but become anti-correlated (i.e., ranking accuracy increases but win rate declines) as 
the model $\model$ moves away from the reference $\refmodel$ (Fig.~\ref{fig:ra-wr-vs-model-dist}).
Unlike traditional overfitting, the test loss is continuing to decline at this point (Fig. \ref{subfig:train-and-eval-loss-during-training}).
Experiments in Fig.~\ref{fig:gamma-scaling} with the attenuated objective in \Cref{eq:gamma-scaling} further show that ranking accuracy and win rate trend together when the influence of the reference model is stronger (i.e., $\gamma$ is larger).

We speculate that when the model is far from the reference model, overly optimizing the reward margin can harm the generative capabilities of the model, which are primarily acquired during pre-training.
In other words, the off-policy behavior of the model can no longer predictably describe the on-policy generations when the reference model used in the offline objective is far from the current model.
Our findings confirm the fundamental tradeoff between fitting the preference data and maintaining generative capabilities acquired during pre-training~\citep{jaques2017sequence} and align with prior observations that adding on-policy preference data can make offline learning more effective~\citep{tang2024generalized,yuan2024selfrewarding,kim2024sdpo,tran2023iterative}.


\begin{figure}[ht!]
    \centering
    \begin{subfigure}[b]{0.45\textwidth}
        \includegraphics[width=\textwidth]{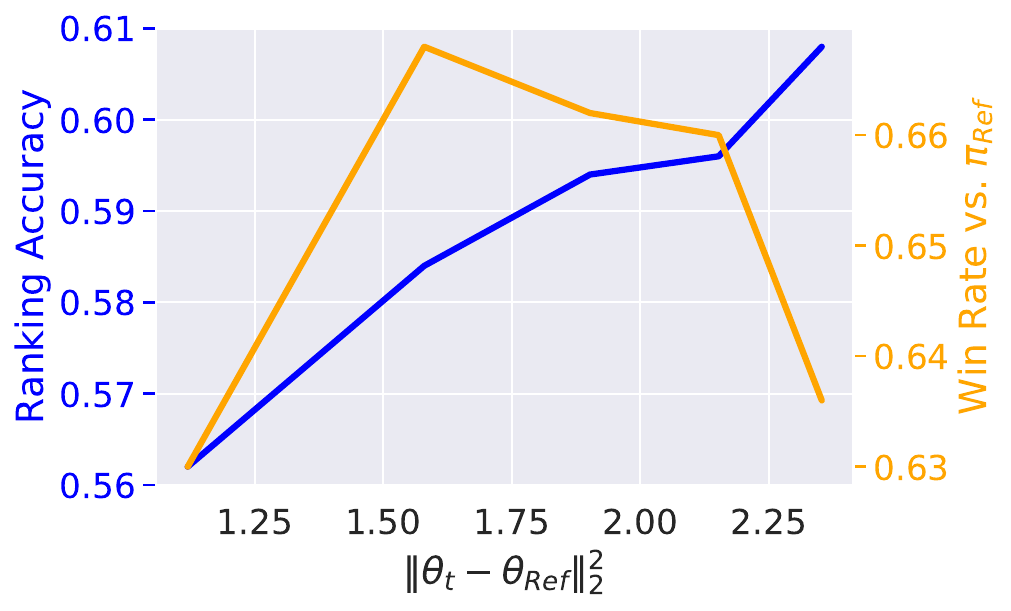}
        \caption{\label{subfig:ra-wr-vs-model-dist-ckpts}Ranking accuracy and win rate of various Pythia 2.8B checkpoints during a single epoch of DPO training, versus the distance travelled by the model weights $\theta_t$ from the initialization.}
    \end{subfigure}
    \hfill
    \begin{subfigure}[b]{0.45\textwidth}
        \includegraphics[width=\textwidth]{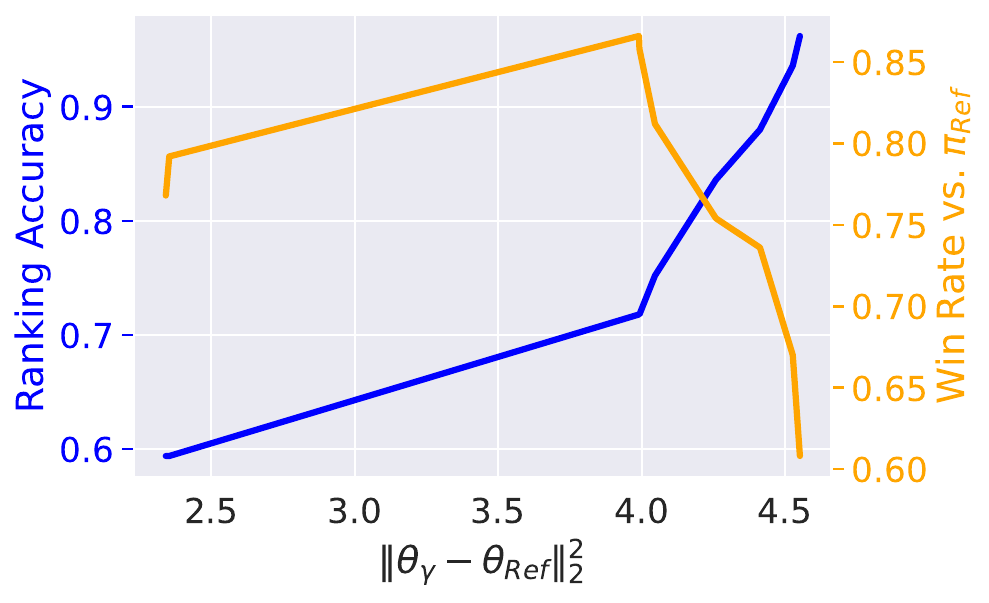}
        \caption{\label{subfig:ra-wr-vs-model-dist-gamma}Ranking accuracy and win rate of various $\gamma$-scaled models (trained with $\mathcal{L}_\text{DPO}^\gamma$ for 5 epochs), versus the distance travelled by the model weights $\theta_\gamma$ from the initialization.}
    \end{subfigure}
    \caption{\textbf{When the model weights have not travelled far from $\theta_\text{Ref}$, ranking accuracy and win rate increase together.} $\theta_t$ represents the model weights at checkpoint $t$ during DPO training, and $\theta_\gamma$ represents the weights for a model trained to convergence with $\dpoloss^\gamma$.}
    \label{fig:ra-wr-vs-model-dist}
\end{figure}

\section{Related Work}

\paragraph{Analyses of Preference Learning Algorithms}
Many works have investigated the role of the preference dataset~\citep{wang2024secrets,xu2024dpo}, the reliability of the evaluations~\citep{zheng2023judging,lambert2024rewardbench}, and the confounding factor of response length~\citep{singhal2023long,dubois2024lengthcontrolled,wang2023far,park2024disentangling}.
Theoretical works have unified the many preference learning algorithms into clear taxonomies that permit analysis and, sometimes, yield new variants~\citep{azar2023general,xu2024dpo,tajwar2024preference,tang2024generalized,yang2024asymptotics,meng2024simpo}.
Several works study idiosyncrasies of preference learning, such as why DPO decreases the likelihood of both rejected and chosen outputs from the dataset~\citep{rafailov2024qfunction,feng2024dpolimitations,pal2024smaug} and why RLHF exhibits vanishing gradients~\cite{razin2024vanishing}.
In contrast, our work approaches understanding DPO and RLHF through the lens of ranking accuracy, and our findings emphasize the role of the reference model regularization in preference learning.
Relatedly, SliC-HF~\citep{zhao2023slichf}, CPO~\citep{xu2024contrastive}, and pairwise cringe loss \citep{xu2024things} optimize log probability margins $\log \pi(y_w|x) - \log \pi(y_l|x)$, effectively removing the regularization toward the reference model.
\citet{liu2024decodingtime} recommend using the reference model at inference time to exert more granular control over the regularization. \citet{meng2024simpo} remove the $\pi_\text{Ref}$ terms altogether and use a target reward margin to prevent over-optimization.
Additionally, \citet{chennakesavalu2024energyrankalignmentusing} design a DPO-like objective that includes an additional hyperparameter controlling the strength of the $\pi_\text{Ref}$ terms, similar to our $\mathcal{L}_{DPO}^\gamma$ objective (Eq. \ref{eq:gamma-scaling}).
\citet{tang2024generalized} also analyze the role of regularizing toward a reference model, though our work focuses the effect of this regularization on ranking accuracy.

\paragraph{On-policy and Off-policy Preference Learning}
Preference-tuning LLMs originally required using an on-policy algorithm~\citep{stiennon2020summarize}, but many recent works have derived off-policy methods that can use a static preference dataset for supervision~\citep{rafailov2023direct,ethayarajh2024kto,hong2024orpo,park2024disentangling}.
Off-policy methods are preferred for their efficiency and ease of implementation, but several works have suggested that on-policy methods are superior~\citep{tajwar2024preference,xu2024dpo,tang2024understanding,li2024policy}.
Several iterative training methods aim to bridge this gap, where the reference model and the dataset are refreshed during the alignment procedure to contain annotated preferences on generations from the model at that point in training~\citep{yuan2024selfrewarding,kim2024sdpo,tran2023iterative}. 
These intuitions align strongly with our observation that win rate and ranking accuracy, and thus, on-policy and off-policy behavior, are strongly correlated when the model is close to the reference model.

\section{Discussion}\label{sec:discussion}
Our work highlights the significant but nuanced relationship between preference learning and ranking accuracy. 
We have demonstrated both theoretically and empirically that RLHF and DPO struggle to teach the model to correctly rank preferred and dispreferred outputs, even in the training dataset. Although the learning objective promotes high ranking accuracy in theory (\cref{prop:sanity_check}), we observed a prominent \emph{alignment gap} resulting from the poor conditioning of reference models. We then drew connections between the off-policy nature of ranking accuracy and the on-policy evaluations of win rate, identifying specific scenarios in which on-policy behavior can or cannot be reliably predicted by off-policy behavior. App.~\ref{sec:limitations} details the limitations of our work.


\paragraph{Connections to Safety}
Our work shows that it is difficult to steer pre-trained LLMs to adhere to even the preference data used for training. When LLMs are used to judge responses from other models \citep{li2023alpacaeval,zheng2023judging} or to improve their own abilities \citep{madaan2023selfrefine,yuan2024selfrewarding}, poor ranking accuracies can induce strong negative feedback loops that are costly to mitigate. 

We also observe that win rate does not monotonically increase during training (Fig. \ref{subfig:ra-and-wr-during-training-train}), despite the decrease in both train and test loss (Fig. \ref{subfig:train-and-eval-loss-during-training}) and the modest gain in ranking accuracy (Fig. \ref{subfig:ra-and-wr-during-training-train}). 
As such, it is clear that we still do not understand the behaviors of preference learning.
For example, others have observed that DPO can cause the likelihoods of both chosen and rejected outputs to decrease~\citep{rafailov2024qfunction,feng2024dpolimitations,pal2024smaug,pang2024iterative}, 
which implies that the policy must be moving probability mass to possibly undesirable sequences outside the data distribution.
Moreover, our investigation of the non-monotonic relationship between ranking accuracy and win rate emphasizes the need for concrete evaluations that can more reliably and transparently measure the success of preference learning.

\paragraph{Future Work} 
Our theoretical results only describe the behavior of the model on the preference data used during training, but they can serve as a starting point for understanding generalization to different distributions of data, especially the one prescribed by the model itself~\citep{tajwar2024preference}.
Furthermore, we hope to analyze the optimization dynamics of preference learning, given the intriguing relationship observed between ranking accuracy and win rate. 
For instance, identifying when the win rate begins to diverge from the ranking accuracy can motivate adding fresh on-policy training data. 
Our initial investigation into ranking accuracy also suggests that it is worthwhile to explore how alignment techniques interact with other calibration metrics.

\section{Limitations}\label{sec:limitations}
Although we reproduce our main results (Sec. \ref{sec:understanding-ra-with-dpo}) on three types of models across three seeds each, these models are trained on a single dataset due to the computational constraints. 
Our theoretical results lead us to believe that our findings would generalize to other datasets, but empirical verification is still valuable.
As mentioned previously, our work is optimization-agnostic and only describes model behavior on the training dataset, making it difficult to draw general claims about other distributions.
In particular, we cannot rigorously describe the generative capabilities, though Sec.~\ref{sec:ra-and-wr} initiates an investigation into when off-policy behavior can describe on-policy behaviors.





%

\newpage
\section{Acknowledgements}
We thank Sanjeev Arora, Tianyu Gao, Eric Mitchell, Richard Pang, and Mengzhou Xia for helpful discussions during the development of this project.
We also thank Nikita Nangia, Eshaan Nichani, and Alexander Wettig for their help in proofreading the work. This work was supported by National Science Foundation Award 1922658, the Samsung Advanced Institute of Technology (under the project Next Generation Deep Learning: From Pattern Recognition to AI), NIH/NHLBI
Award R01HL148248, NSF CAREER Award 2145542, ONR N00014-23-1-2634, Apple, and Google. SM is additionally supported by ONR, NSF, and DARPA. We also thank Princeton Language and Intelligence (PLI) for computing resources and OpenAI credits, and NYU High Performance Computing (HPC) for computing resources and in-kind support.

\bibliography{main}

\begin{thebibliography}{71}
\providecommand{\natexlab}[1]{#1}
\providecommand{\url}[1]{\texttt{#1}}
\expandafter\ifx\csname urlstyle\endcsname\relax
  \providecommand{\doi}[1]{doi: #1}\else
  \providecommand{\doi}{doi: \begingroup \urlstyle{rm}\Url}\fi

\bibitem[{Alex Havrilla}(2023)]{havrilla2023synthetic}
{Alex Havrilla}.
\newblock synthetic-instruct-gptj-pairwise (revision cc92d8d), 2023.
\newblock URL \url{https://huggingface.co/datasets/Dahoas/synthetic-instruct-gptj-pairwise}.

\bibitem[Azar et~al.(2023)Azar, Rowland, Piot, Guo, Calandriello, Valko, and Munos]{azar2023general}
Mohammad~Gheshlaghi Azar, Mark Rowland, Bilal Piot, Daniel Guo, Daniele Calandriello, Michal Valko, and Rémi Munos.
\newblock A general theoretical paradigm to understand learning from human preferences, 2023.

\bibitem[Bai et~al.(2022)Bai, Jones, Ndousse, Askell, Chen, DasSarma, Drain, Fort, Ganguli, Henighan, Joseph, Kadavath, Kernion, Conerly, El-Showk, Elhage, Hatfield-Dodds, Hernandez, Hume, Johnston, Kravec, Lovitt, Nanda, Olsson, Amodei, Brown, Clark, McCandlish, Olah, Mann, and Kaplan]{bai2022training}
Yuntao Bai, Andy Jones, Kamal Ndousse, Amanda Askell, Anna Chen, Nova DasSarma, Dawn Drain, Stanislav Fort, Deep Ganguli, Tom Henighan, Nicholas Joseph, Saurav Kadavath, Jackson Kernion, Tom Conerly, Sheer El-Showk, Nelson Elhage, Zac Hatfield-Dodds, Danny Hernandez, Tristan Hume, Scott Johnston, Shauna Kravec, Liane Lovitt, Neel Nanda, Catherine Olsson, Dario Amodei, Tom Brown, Jack Clark, Sam McCandlish, Chris Olah, Ben Mann, and Jared Kaplan.
\newblock Training a helpful and harmless assistant with reinforcement learning from human feedback, 2022.

\bibitem[Biderman et~al.(2023)Biderman, Schoelkopf, Anthony, Bradley, O’Brien, Hallahan, Khan, Purohit, Prashanth, Raff, et~al.]{biderman2023pythia}
Stella Biderman, Hailey Schoelkopf, Quentin~Gregory Anthony, Herbie Bradley, Kyle O’Brien, Eric Hallahan, Mohammad~Aflah Khan, Shivanshu Purohit, USVSN~Sai Prashanth, Edward Raff, et~al.
\newblock Pythia: A suite for analyzing large language models across training and scaling.
\newblock In \emph{International Conference on Machine Learning}, pages 2397--2430. PMLR, 2023.

\bibitem[Bradley and Terry(1952)]{bradleyterry}
Ralph~Allan Bradley and Milton~E. Terry.
\newblock Rank analysis of incomplete block designs: I. the method of paired comparisons.
\newblock \emph{Biometrika}, 39\penalty0 (3/4):\penalty0 324--345, 1952.
\newblock ISSN 00063444.
\newblock URL \url{http://www.jstor.org/stable/2334029}.

\bibitem[Chennakesavalu et~al.(2024)Chennakesavalu, Hu, Ibarraran, and Rotskoff]{chennakesavalu2024energyrankalignmentusing}
Shriram Chennakesavalu, Frank Hu, Sebastian Ibarraran, and Grant~M. Rotskoff.
\newblock Energy rank alignment: Using preference optimization to search chemical space at scale, 2024.

\bibitem[Cui et~al.(2023)Cui, Yuan, Ding, Yao, Zhu, Ni, Xie, Liu, and Sun]{cui2023ultrafeedback}
Ganqu Cui, Lifan Yuan, Ning Ding, Guanming Yao, Wei Zhu, Yuan Ni, Guotong Xie, Zhiyuan Liu, and Maosong Sun.
\newblock Ultrafeedback: Boosting language models with high-quality feedback, 2023.

\bibitem[Dong et~al.(2023)Dong, Xiong, Goyal, Zhang, Chow, Pan, Diao, Zhang, SHUM, and Zhang]{dong2023raft}
Hanze Dong, Wei Xiong, Deepanshu Goyal, Yihan Zhang, Winnie Chow, Rui Pan, Shizhe Diao, Jipeng Zhang, KaShun SHUM, and Tong Zhang.
\newblock {RAFT}: Reward ranked finetuning for generative foundation model alignment.
\newblock \emph{Transactions on Machine Learning Research}, 2023.
\newblock ISSN 2835-8856.
\newblock URL \url{https://openreview.net/forum?id=m7p5O7zblY}.

\bibitem[Dubois et~al.(2023)Dubois, Li, Taori, Zhang, Gulrajani, Ba, Guestrin, Liang, and Hashimoto]{dubois2023alpacafarm}
Yann Dubois, Xuechen Li, Rohan Taori, Tianyi Zhang, Ishaan Gulrajani, Jimmy Ba, Carlos Guestrin, Percy Liang, and Tatsunori~B. Hashimoto.
\newblock Alpacafarm: A simulation framework for methods that learn from human feedback, 2023.

\bibitem[Dubois et~al.(2024)Dubois, Galambosi, Liang, and Hashimoto]{dubois2024lengthcontrolled}
Yann Dubois, Balázs Galambosi, Percy Liang, and Tatsunori~B. Hashimoto.
\newblock Length-controlled alpacaeval: A simple way to debias automatic evaluators, 2024.

\bibitem[Ethayarajh et~al.(2022)Ethayarajh, Choi, and Swayamdipta]{pmlr-v162-ethayarajh22a-shp}
Kawin Ethayarajh, Yejin Choi, and Swabha Swayamdipta.
\newblock Understanding dataset difficulty with $\mathcal{V}$-usable information.
\newblock In Kamalika Chaudhuri, Stefanie Jegelka, Le~Song, Csaba Szepesvari, Gang Niu, and Sivan Sabato, editors, \emph{Proceedings of the 39th International Conference on Machine Learning}, volume 162 of \emph{Proceedings of Machine Learning Research}, pages 5988--6008. PMLR, 17--23 Jul 2022.

\bibitem[Ethayarajh et~al.(2024)Ethayarajh, Xu, Muennighoff, Jurafsky, and Kiela]{ethayarajh2024kto}
Kawin Ethayarajh, Winnie Xu, Niklas Muennighoff, Dan Jurafsky, and Douwe Kiela.
\newblock Kto: Model alignment as prospect theoretic optimization, 2024.

\bibitem[Feng et~al.(2024)Feng, Qin, Huang, Zhang, and Lei]{feng2024dpolimitations}
Duanyu Feng, Bowen Qin, Chen Huang, Zheng Zhang, and Wenqiang Lei.
\newblock Towards analyzing and understanding the limitations of dpo: A theoretical perspective, 2024.

\bibitem[Ganguli et~al.(2022)Ganguli, Lovitt, Kernion, Askell, Bai, Kadavath, Mann, Perez, Schiefer, Ndousse, Jones, Bowman, Chen, Conerly, DasSarma, Drain, Elhage, El-Showk, Fort, Hatfield-Dodds, Henighan, Hernandez, Hume, Jacobson, Johnston, Kravec, Olsson, Ringer, Tran-Johnson, Amodei, Brown, Joseph, McCandlish, Olah, Kaplan, and Clark]{ganguli2022red}
Deep Ganguli, Liane Lovitt, Jackson Kernion, Amanda Askell, Yuntao Bai, Saurav Kadavath, Ben Mann, Ethan Perez, Nicholas Schiefer, Kamal Ndousse, Andy Jones, Sam Bowman, Anna Chen, Tom Conerly, Nova DasSarma, Dawn Drain, Nelson Elhage, Sheer El-Showk, Stanislav Fort, Zac Hatfield-Dodds, Tom Henighan, Danny Hernandez, Tristan Hume, Josh Jacobson, Scott Johnston, Shauna Kravec, Catherine Olsson, Sam Ringer, Eli Tran-Johnson, Dario Amodei, Tom Brown, Nicholas Joseph, Sam McCandlish, Chris Olah, Jared Kaplan, and Jack Clark.
\newblock Red teaming language models to reduce harms: Methods, scaling behaviors, and lessons learned, 2022.

\bibitem[Geng et~al.(2023)Geng, Gudibande, Liu, Wallace, Abbeel, Levine, and Song]{geng2023koala}
Xinyang Geng, Arnav Gudibande, Hao Liu, Eric Wallace, Pieter Abbeel, Sergey Levine, and Dawn Song.
\newblock Koala: A dialogue model for academic research.
\newblock Blog post, April 2023.

\bibitem[Go et~al.(2023)Go, Korbak, Kruszewski, Rozen, Ryu, and Dymetman]{go2023aligning}
Dongyoung Go, Tomasz Korbak, Germ{\'a}n Kruszewski, Jos Rozen, Nahyeon Ryu, and Marc Dymetman.
\newblock Aligning language models with preferences through f-divergence minimization.
\newblock \emph{arXiv preprint arXiv:2302.08215}, 2023.

\bibitem[Groeneveld et~al.(2024)Groeneveld, Beltagy, Walsh, Bhagia, Kinney, Tafjord, Jha, Ivison, Magnusson, Wang, Arora, Atkinson, Authur, Chandu, Cohan, Dumas, Elazar, Gu, Hessel, Khot, Merrill, Morrison, Muennighoff, Naik, Nam, Peters, Pyatkin, Ravichander, Schwenk, Shah, Smith, Strubell, Subramani, Wortsman, Dasigi, Lambert, Richardson, Zettlemoyer, Dodge, Lo, Soldaini, Smith, and Hajishirzi]{OLMo}
Dirk Groeneveld, Iz~Beltagy, Pete Walsh, Akshita Bhagia, Rodney Kinney, Oyvind Tafjord, A.~Jha, Hamish Ivison, Ian Magnusson, Yizhong Wang, Shane Arora, David Atkinson, Russell Authur, Khyathi~Raghavi Chandu, Arman Cohan, Jennifer Dumas, Yanai Elazar, Yuling Gu, Jack Hessel, Tushar Khot, William Merrill, Jacob~Daniel Morrison, Niklas Muennighoff, Aakanksha Naik, Crystal Nam, Matthew~E. Peters, Valentina Pyatkin, Abhilasha Ravichander, Dustin Schwenk, Saurabh Shah, Will Smith, Emma Strubell, Nishant Subramani, Mitchell Wortsman, Pradeep Dasigi, Nathan Lambert, Kyle Richardson, Luke Zettlemoyer, Jesse Dodge, Kyle Lo, Luca Soldaini, Noah~A. Smith, and Hanna Hajishirzi.
\newblock Olmo: Accelerating the science of language models.
\newblock \emph{arXiv preprint}, 2024.
\newblock URL \url{https://api.semanticscholar.org/CorpusID:267365485}.

\bibitem[Hong et~al.(2024)Hong, Lee, and Thorne]{hong2024orpo}
Jiwoo Hong, Noah Lee, and James Thorne.
\newblock Orpo: Monolithic preference optimization without reference model, 2024.

\bibitem[Hosking et~al.(2024)Hosking, Blunsom, and Bartolo]{hosking2024human}
Tom Hosking, Phil Blunsom, and Max Bartolo.
\newblock Human feedback is not gold standard, 2024.

\bibitem[Hüllermeier et~al.(2008)Hüllermeier, Fürnkranz, Cheng, and Brinker]{hullermeier2008preference}
Eyke Hüllermeier, Johannes Fürnkranz, Weiwei Cheng, and Klaus Brinker.
\newblock Label ranking by learning pairwise preferences.
\newblock \emph{Artificial Intelligence}, 172\penalty0 (16):\penalty0 1897--1916, 2008.
\newblock ISSN 0004-3702.
\newblock \doi{https://doi.org/10.1016/j.artint.2008.08.002}.
\newblock URL \url{https://www.sciencedirect.com/science/article/pii/S000437020800101X}.

\bibitem[Ivison et~al.(2023)Ivison, Wang, Pyatkin, Lambert, Peters, Dasigi, Jang, Wadden, Smith, Beltagy, and Hajishirzi]{ivison2023camels}
Hamish Ivison, Yizhong Wang, Valentina Pyatkin, Nathan Lambert, Matthew Peters, Pradeep Dasigi, Joel Jang, David Wadden, Noah~A. Smith, Iz~Beltagy, and Hannaneh Hajishirzi.
\newblock Camels in a changing climate: Enhancing lm adaptation with tulu 2, 2023.

\bibitem[Jaques et~al.(2017)Jaques, Gu, Bahdanau, Hern{\'a}ndez-Lobato, Turner, and Eck]{jaques2017sequence}
Natasha Jaques, Shixiang Gu, Dzmitry Bahdanau, Jos{\'e}~Miguel Hern{\'a}ndez-Lobato, Richard~E Turner, and Douglas Eck.
\newblock Sequence tutor: Conservative fine-tuning of sequence generation models with kl-control.
\newblock In \emph{International Conference on Machine Learning}, pages 1645--1654. PMLR, 2017.

\bibitem[Jiang et~al.(2023)Jiang, Sablayrolles, Mensch, Bamford, Chaplot, de~las Casas, Bressand, Lengyel, Lample, Saulnier, Lavaud, Lachaux, Stock, Scao, Lavril, Wang, Lacroix, and Sayed]{jiang2023mistral}
Albert~Q. Jiang, Alexandre Sablayrolles, Arthur Mensch, Chris Bamford, Devendra~Singh Chaplot, Diego de~las Casas, Florian Bressand, Gianna Lengyel, Guillaume Lample, Lucile Saulnier, Lélio~Renard Lavaud, Marie-Anne Lachaux, Pierre Stock, Teven~Le Scao, Thibaut Lavril, Thomas Wang, Timothée Lacroix, and William~El Sayed.
\newblock Mistral 7b, 2023.

\bibitem[Kim et~al.(2024)Kim, Kim, Song, Kim, Kim, Kim, and Park]{kim2024sdpo}
Dahyun Kim, Yungi Kim, Wonho Song, Hyeonwoo Kim, Yunsu Kim, Sanghoon Kim, and Chanjun Park.
\newblock sdpo: Don't use your data all at once, 2024.

\bibitem[Kirk et~al.(2024)Kirk, Whitefield, Röttger, Bean, Margatina, Ciro, Mosquera, Bartolo, Williams, He, Vidgen, and Hale]{kirk2024prism}
Hannah~Rose Kirk, Alexander Whitefield, Paul Röttger, Andrew Bean, Katerina Margatina, Juan Ciro, Rafael Mosquera, Max Bartolo, Adina Williams, He~He, Bertie Vidgen, and Scott~A. Hale.
\newblock The prism alignment project: What participatory, representative and individualised human feedback reveals about the subjective and multicultural alignment of large language models, 2024.

\bibitem[K{\"o}pf et~al.(2023)K{\"o}pf, Kilcher, von R{\"u}tte, Anagnostidis, Tam, Stevens, Barhoum, Nguyen, Stanley, Nagyfi, ES, Suri, Glushkov, Dantuluri, Maguire, Schuhmann, Nguyen, and Mattick]{kopf2023openassistant}
Andreas K{\"o}pf, Yannic Kilcher, Dimitri von R{\"u}tte, Sotiris Anagnostidis, Zhi~Rui Tam, Keith Stevens, Abdullah Barhoum, Duc~Minh Nguyen, Oliver Stanley, Rich{\'a}rd Nagyfi, Shahul ES, Sameer Suri, David~Alexandrovich Glushkov, Arnav~Varma Dantuluri, Andrew Maguire, Christoph Schuhmann, Huu Nguyen, and Alexander~Julian Mattick.
\newblock Openassistant conversations - democratizing large language model alignment.
\newblock In \emph{Thirty-seventh Conference on Neural Information Processing Systems Datasets and Benchmarks Track}, 2023.
\newblock URL \url{https://openreview.net/forum?id=VSJotgbPHF}.

\bibitem[Korbak et~al.(2022)Korbak, Perez, and Buckley]{korbak2022rl}
Tomasz Korbak, Ethan Perez, and Christopher~L Buckley.
\newblock Rl with kl penalties is better viewed as bayesian inference.
\newblock \emph{arXiv preprint arXiv:2205.11275}, 2022.

\bibitem[Lambert et~al.(2023)Lambert, Tunstall, Rajani, and Thrush]{h4stackexchange}
Nathan Lambert, Lewis Tunstall, Nazneen Rajani, and Tristan Thrush.
\newblock Huggingface h4 stack exchange preference dataset, 2023.
\newblock URL \url{https://huggingface.co/datasets/HuggingFaceH4/stack-exchange-preferences}.

\bibitem[Lambert et~al.(2024)Lambert, Pyatkin, Morrison, Miranda, Lin, Chandu, Dziri, Kumar, Zick, Choi, Smith, and Hajishirzi]{lambert2024rewardbench}
Nathan Lambert, Valentina Pyatkin, Jacob Morrison, LJ~Miranda, Bill~Yuchen Lin, Khyathi Chandu, Nouha Dziri, Sachin Kumar, Tom Zick, Yejin Choi, Noah~A. Smith, and Hannaneh Hajishirzi.
\newblock Rewardbench: Evaluating reward models for language modeling, 2024.

\bibitem[Li et~al.(2023)Li, Zhang, Dubois, Taori, Gulrajani, Guestrin, Liang, and Hashimoto]{li2023alpacaeval}
Xuechen Li, Tianyi Zhang, Yann Dubois, Rohan Taori, Ishaan Gulrajani, Carlos Guestrin, Percy Liang, and Tatsunori~B Hashimoto.
\newblock Alpacaeval: An automatic evaluator of instruction-following models, 2023.

\bibitem[Li et~al.(2024)Li, Xu, and Yu]{li2024policy}
Ziniu Li, Tian Xu, and Yang Yu.
\newblock Policy optimization in rlhf: The impact of out-of-preference data, 2024.

\bibitem[Liu et~al.(2024)Liu, Guo, Bianco, Calandriello, Berthet, Llinares, Hoffmann, Dixon, Valko, and Blondel]{liu2024decodingtime}
Tianlin Liu, Shangmin Guo, Leonardo Bianco, Daniele Calandriello, Quentin Berthet, Felipe Llinares, Jessica Hoffmann, Lucas Dixon, Michal Valko, and Mathieu Blondel.
\newblock Decoding-time realignment of language models, 2024.

\bibitem[Madaan et~al.(2023)Madaan, Tandon, Gupta, Hallinan, Gao, Wiegreffe, Alon, Dziri, Prabhumoye, Yang, Gupta, Majumder, Hermann, Welleck, Yazdanbakhsh, and Clark]{madaan2023selfrefine}
Aman Madaan, Niket Tandon, Prakhar Gupta, Skyler Hallinan, Luyu Gao, Sarah Wiegreffe, Uri Alon, Nouha Dziri, Shrimai Prabhumoye, Yiming Yang, Shashank Gupta, Bodhisattwa~Prasad Majumder, Katherine Hermann, Sean Welleck, Amir Yazdanbakhsh, and Peter Clark.
\newblock Self-refine: Iterative refinement with self-feedback.
\newblock In \emph{Thirty-seventh Conference on Neural Information Processing Systems}, 2023.
\newblock URL \url{https://openreview.net/forum?id=S37hOerQLB}.

\bibitem[Meng et~al.(2024)Meng, Xia, and Chen]{meng2024simpo}
Yu~Meng, Mengzhou Xia, and Danqi Chen.
\newblock {SimPO}: Simple preference optimization with a reference-free reward.
\newblock \emph{arXiv preprint arXiv:2405.14734}, 2024.

\bibitem[Pal et~al.(2024)Pal, Karkhanis, Dooley, Roberts, Naidu, and White]{pal2024smaug}
Arka Pal, Deep Karkhanis, Samuel Dooley, Manley Roberts, Siddartha Naidu, and Colin White.
\newblock Smaug: Fixing failure modes of preference optimisation with dpo-positive, 2024.

\bibitem[Pang et~al.(2024)Pang, Yuan, Cho, He, Sukhbaatar, and Weston]{pang2024iterative}
Richard~Yuanzhe Pang, Weizhe Yuan, Kyunghyun Cho, He~He, Sainbayar Sukhbaatar, and Jason Weston.
\newblock Iterative reasoning preference optimization, 2024.

\bibitem[Park et~al.(2024)Park, Rafailov, Ermon, and Finn]{park2024disentangling}
Ryan Park, Rafael Rafailov, Stefano Ermon, and Chelsea Finn.
\newblock Disentangling length from quality in direct preference optimization, 2024.

\bibitem[Peters and Schaal(2007)]{peters2007reinforcement}
Jan Peters and Stefan Schaal.
\newblock Reinforcement learning by reward-weighted regression for operational space control.
\newblock In \emph{Proceedings of the 24th international conference on Machine learning}, pages 745--750, 2007.

\bibitem[Pezeshkpour and Hruschka(2023)]{pezeshkpour2023large}
Pouya Pezeshkpour and Estevam Hruschka.
\newblock Large language models sensitivity to the order of options in multiple-choice questions.
\newblock \emph{arXiv preprint arXiv:2308.11483}, 2023.

\bibitem[Radford et~al.(2019)Radford, Wu, Child, Luan, Amodei, and Sutskever]{radford2019language}
Alec Radford, Jeff Wu, Rewon Child, David Luan, Dario Amodei, and Ilya Sutskever.
\newblock Language models are unsupervised multitask learners.
\newblock 2019.

\bibitem[Rafailov et~al.(2023)Rafailov, Sharma, Mitchell, Manning, Ermon, and Finn]{rafailov2023direct}
Rafael Rafailov, Archit Sharma, Eric Mitchell, Christopher~D Manning, Stefano Ermon, and Chelsea Finn.
\newblock Direct preference optimization: Your language model is secretly a reward model.
\newblock In \emph{Thirty-seventh Conference on Neural Information Processing Systems}, 2023.
\newblock URL \url{https://openreview.net/forum?id=HPuSIXJaa9}.

\bibitem[Rafailov et~al.(2024)Rafailov, Hejna, Park, and Finn]{rafailov2024qfunction}
Rafael Rafailov, Joey Hejna, Ryan Park, and Chelsea Finn.
\newblock From $r$ to $q^*$: Your language model is secretly a q-function, 2024.

\bibitem[Ramamurthy et~al.(2023)Ramamurthy, Ammanabrolu, Brantley, Hessel, Sifa, Bauckhage, Hajishirzi, and Choi]{ramamurthy2023is}
Rajkumar Ramamurthy, Prithviraj Ammanabrolu, Kiant{\'e} Brantley, Jack Hessel, Rafet Sifa, Christian Bauckhage, Hannaneh Hajishirzi, and Yejin Choi.
\newblock Is reinforcement learning (not) for natural language processing: Benchmarks, baselines, and building blocks for natural language policy optimization.
\newblock In \emph{The Eleventh International Conference on Learning Representations}, 2023.
\newblock URL \url{https://openreview.net/forum?id=8aHzds2uUyB}.

\bibitem[Razin et~al.(2024)Razin, Zhou, Saremi, Thilak, Bradley, Nakkiran, Susskind, and Littwin]{razin2024vanishing}
Noam Razin, Hattie Zhou, Omid Saremi, Vimal Thilak, Arwen Bradley, Preetum Nakkiran, Joshua~M. Susskind, and Etai Littwin.
\newblock Vanishing gradients in reinforcement finetuning of language models.
\newblock In \emph{The Twelfth International Conference on Learning Representations}, 2024.
\newblock URL \url{https://openreview.net/forum?id=IcVNBR7qZi}.

\bibitem[Singhal et~al.(2023)Singhal, Goyal, Xu, and Durrett]{singhal2023long}
Prasann Singhal, Tanya Goyal, Jiacheng Xu, and Greg Durrett.
\newblock A long way to go: Investigating length correlations in rlhf, 2023.

\bibitem[Song et~al.(2024)Song, Yu, Li, Yu, Huang, Li, and Wang]{song2024pro}
Feifan Song, Bowen Yu, Minghao Li, Haiyang Yu, Fei Huang, Yongbin Li, and Houfeng Wang.
\newblock Preference ranking optimization for human alignment.
\newblock \emph{Proceedings of the AAAI Conference on Artificial Intelligence}, 38\penalty0 (17):\penalty0 18990--18998, Mar. 2024.
\newblock \doi{10.1609/aaai.v38i17.29865}.
\newblock URL \url{https://ojs.aaai.org/index.php/AAAI/article/view/29865}.

\bibitem[Stiennon et~al.(2020)Stiennon, Ouyang, Wu, Ziegler, Lowe, Voss, Radford, Amodei, and Christiano]{stiennon2020summarize}
Nisan Stiennon, Long Ouyang, Jeffrey Wu, Daniel Ziegler, Ryan Lowe, Chelsea Voss, Alec Radford, Dario Amodei, and Paul~F Christiano.
\newblock Learning to summarize with human feedback.
\newblock In H.~Larochelle, M.~Ranzato, R.~Hadsell, M.F. Balcan, and H.~Lin, editors, \emph{Advances in Neural Information Processing Systems}, volume~33, pages 3008--3021. Curran Associates, Inc., 2020.
\newblock URL \url{https://proceedings.neurips.cc/paper_files/paper/2020/file/1f89885d556929e98d3ef9b86448f951-Paper.pdf}.

\bibitem[Tajwar et~al.(2024)Tajwar, Singh, Sharma, Rafailov, Schneider, Xie, Ermon, Finn, and Kumar]{tajwar2024preference}
Fahim Tajwar, Anikait Singh, Archit Sharma, Rafael Rafailov, Jeff Schneider, Tengyang Xie, Stefano Ermon, Chelsea Finn, and Aviral Kumar.
\newblock Preference fine-tuning of llms should leverage suboptimal, on-policy data, 2024.

\bibitem[Tang et~al.(2024{\natexlab{a}})Tang, Guo, Zheng, Calandriello, Cao, Tarassov, Munos, Ávila Pires, Valko, Cheng, and Dabney]{tang2024understanding}
Yunhao Tang, Daniel~Zhaohan Guo, Zeyu Zheng, Daniele Calandriello, Yuan Cao, Eugene Tarassov, Rémi Munos, Bernardo Ávila Pires, Michal Valko, Yong Cheng, and Will Dabney.
\newblock Understanding the performance gap between online and offline alignment algorithms, 2024{\natexlab{a}}.

\bibitem[Tang et~al.(2024{\natexlab{b}})Tang, Guo, Zheng, Calandriello, Munos, Rowland, Richemond, Valko, Pires, and Piot]{tang2024generalized}
Yunhao Tang, Zhaohan~Daniel Guo, Zeyu Zheng, Daniele Calandriello, R{\'e}mi Munos, Mark Rowland, Pierre~Harvey Richemond, Michal Valko, Bernardo~{\'A}vila Pires, and Bilal Piot.
\newblock Generalized preference optimization: A unified approach to offline alignment.
\newblock \emph{arXiv preprint arXiv:2402.05749}, 2024{\natexlab{b}}.

\bibitem[Team et~al.(2024)Team, Mesnard, Hardin, Dadashi, Bhupatiraju, Pathak, Sifre, Rivière, Kale, Love, Tafti, Hussenot, Sessa, Chowdhery, Roberts, Barua, Botev, Castro-Ros, Slone, Héliou, Tacchetti, Bulanova, Paterson, Tsai, Shahriari, Lan, Choquette-Choo, Crepy, Cer, Ippolito, Reid, Buchatskaya, Ni, Noland, Yan, Tucker, Muraru, Rozhdestvenskiy, Michalewski, Tenney, Grishchenko, Austin, Keeling, Labanowski, Lespiau, Stanway, Brennan, Chen, Ferret, Chiu, Mao-Jones, Lee, Yu, Millican, Sjoesund, Lee, Dixon, Reid, Mikuła, Wirth, Sharman, Chinaev, Thain, Bachem, Chang, Wahltinez, Bailey, Michel, Yotov, Chaabouni, Comanescu, Jana, Anil, McIlroy, Liu, Mullins, Smith, Borgeaud, Girgin, Douglas, Pandya, Shakeri, De, Klimenko, Hennigan, Feinberg, Stokowiec, hui Chen, Ahmed, Gong, Warkentin, Peran, Giang, Farabet, Vinyals, Dean, Kavukcuoglu, Hassabis, Ghahramani, Eck, Barral, Pereira, Collins, Joulin, Fiedel, Senter, Andreev, and Kenealy]{gemmateam2024gemma}
Gemma Team, Thomas Mesnard, Cassidy Hardin, Robert Dadashi, Surya Bhupatiraju, Shreya Pathak, Laurent Sifre, Morgane Rivière, Mihir~Sanjay Kale, Juliette Love, Pouya Tafti, Léonard Hussenot, Pier~Giuseppe Sessa, Aakanksha Chowdhery, Adam Roberts, Aditya Barua, Alex Botev, Alex Castro-Ros, Ambrose Slone, Amélie Héliou, Andrea Tacchetti, Anna Bulanova, Antonia Paterson, Beth Tsai, Bobak Shahriari, Charline~Le Lan, Christopher~A. Choquette-Choo, Clément Crepy, Daniel Cer, Daphne Ippolito, David Reid, Elena Buchatskaya, Eric Ni, Eric Noland, Geng Yan, George Tucker, George-Christian Muraru, Grigory Rozhdestvenskiy, Henryk Michalewski, Ian Tenney, Ivan Grishchenko, Jacob Austin, James Keeling, Jane Labanowski, Jean-Baptiste Lespiau, Jeff Stanway, Jenny Brennan, Jeremy Chen, Johan Ferret, Justin Chiu, Justin Mao-Jones, Katherine Lee, Kathy Yu, Katie Millican, Lars~Lowe Sjoesund, Lisa Lee, Lucas Dixon, Machel Reid, Maciej Mikuła, Mateo Wirth, Michael Sharman, Nikolai Chinaev, Nithum Thain, Olivier Bachem,
  Oscar Chang, Oscar Wahltinez, Paige Bailey, Paul Michel, Petko Yotov, Rahma Chaabouni, Ramona Comanescu, Reena Jana, Rohan Anil, Ross McIlroy, Ruibo Liu, Ryan Mullins, Samuel~L Smith, Sebastian Borgeaud, Sertan Girgin, Sholto Douglas, Shree Pandya, Siamak Shakeri, Soham De, Ted Klimenko, Tom Hennigan, Vlad Feinberg, Wojciech Stokowiec, Yu~hui Chen, Zafarali Ahmed, Zhitao Gong, Tris Warkentin, Ludovic Peran, Minh Giang, Clément Farabet, Oriol Vinyals, Jeff Dean, Koray Kavukcuoglu, Demis Hassabis, Zoubin Ghahramani, Douglas Eck, Joelle Barral, Fernando Pereira, Eli Collins, Armand Joulin, Noah Fiedel, Evan Senter, Alek Andreev, and Kathleen Kenealy.
\newblock Gemma: Open models based on gemini research and technology, 2024.

\bibitem[Touvron et~al.(2023)Touvron, Martin, Stone, Albert, Almahairi, Babaei, Bashlykov, Batra, Bhargava, Bhosale, Bikel, Blecher, Ferrer, Chen, Cucurull, Esiobu, Fernandes, Fu, Fu, Fuller, Gao, Goswami, Goyal, Hartshorn, Hosseini, Hou, Inan, Kardas, Kerkez, Khabsa, Kloumann, Korenev, Koura, Lachaux, Lavril, Lee, Liskovich, Lu, Mao, Martinet, Mihaylov, Mishra, Molybog, Nie, Poulton, Reizenstein, Rungta, Saladi, Schelten, Silva, Smith, Subramanian, Tan, Tang, Taylor, Williams, Kuan, Xu, Yan, Zarov, Zhang, Fan, Kambadur, Narang, Rodriguez, Stojnic, Edunov, and Scialom]{touvron2023llama}
Hugo Touvron, Louis Martin, Kevin Stone, Peter Albert, Amjad Almahairi, Yasmine Babaei, Nikolay Bashlykov, Soumya Batra, Prajjwal Bhargava, Shruti Bhosale, Dan Bikel, Lukas Blecher, Cristian~Canton Ferrer, Moya Chen, Guillem Cucurull, David Esiobu, Jude Fernandes, Jeremy Fu, Wenyin Fu, Brian Fuller, Cynthia Gao, Vedanuj Goswami, Naman Goyal, Anthony Hartshorn, Saghar Hosseini, Rui Hou, Hakan Inan, Marcin Kardas, Viktor Kerkez, Madian Khabsa, Isabel Kloumann, Artem Korenev, Punit~Singh Koura, Marie-Anne Lachaux, Thibaut Lavril, Jenya Lee, Diana Liskovich, Yinghai Lu, Yuning Mao, Xavier Martinet, Todor Mihaylov, Pushkar Mishra, Igor Molybog, Yixin Nie, Andrew Poulton, Jeremy Reizenstein, Rashi Rungta, Kalyan Saladi, Alan Schelten, Ruan Silva, Eric~Michael Smith, Ranjan Subramanian, Xiaoqing~Ellen Tan, Binh Tang, Ross Taylor, Adina Williams, Jian~Xiang Kuan, Puxin Xu, Zheng Yan, Iliyan Zarov, Yuchen Zhang, Angela Fan, Melanie Kambadur, Sharan Narang, Aurelien Rodriguez, Robert Stojnic, Sergey Edunov, and Thomas
  Scialom.
\newblock Llama 2: Open foundation and fine-tuned chat models, 2023.

\bibitem[Tran et~al.(2023)Tran, Glaze, and Hancock]{tran2023iterative}
Hoang Tran, Chris Glaze, and Braden Hancock.
\newblock Iterative dpo alignment.
\newblock Technical report, Snorkel AI, 2023.

\bibitem[Tunstall et~al.(2023)Tunstall, Beeching, Lambert, Rajani, Huang, Rasul, Rush, and Wolf]{alignment_handbook2023}
Lewis Tunstall, Edward Beeching, Nathan Lambert, Nazneen Rajani, Shengyi Huang, Kashif Rasul, Alexander~M. Rush, and Thomas Wolf.
\newblock The alignment handbook.
\newblock \url{https://github.com/huggingface/alignment-handbook}, 2023.

\bibitem[Vembu and G{\"a}rtner(2010)]{vembu2010label}
Shankar Vembu and Thomas G{\"a}rtner.
\newblock Label ranking algorithms: A survey.
\newblock In \emph{Preference learning}, pages 45--64. Springer, 2010.

\bibitem[Wang et~al.(2024)Wang, Zheng, Chen, Liu, Dou, Huang, Shen, Jin, Zhou, Shi, Gao, Xu, Zhou, Fan, Xi, Zhao, Wang, Ji, Yan, Shen, Chen, Gui, Zhang, Qiu, Huang, Wu, and Jiang]{wang2024secrets}
Binghai Wang, Rui Zheng, Lu~Chen, Yan Liu, Shihan Dou, Caishuang Huang, Wei Shen, Senjie Jin, Enyu Zhou, Chenyu Shi, Songyang Gao, Nuo Xu, Yuhao Zhou, Xiaoran Fan, Zhiheng Xi, Jun Zhao, Xiao Wang, Tao Ji, Hang Yan, Lixing Shen, Zhan Chen, Tao Gui, Qi~Zhang, Xipeng Qiu, Xuanjing Huang, Zuxuan Wu, and Yu-Gang Jiang.
\newblock Secrets of rlhf in large language models part ii: Reward modeling, 2024.

\bibitem[Wang et~al.(2023)Wang, Ivison, Dasigi, Hessel, Khot, Chandu, Wadden, MacMillan, Smith, Beltagy, and Hajishirzi]{wang2023far}
Yizhong Wang, Hamish Ivison, Pradeep Dasigi, Jack Hessel, Tushar Khot, Khyathi~Raghavi Chandu, David Wadden, Kelsey MacMillan, Noah~A. Smith, Iz~Beltagy, and Hannaneh Hajishirzi.
\newblock How far can camels go? exploring the state of instruction tuning on open resources, 2023.

\bibitem[Wirth et~al.(2017)Wirth, Akrour, Neumann, and F{{\"u}}rnkranz]{wirth2017survey}
Christian Wirth, Riad Akrour, Gerhard Neumann, and Johannes F{{\"u}}rnkranz.
\newblock A survey of preference-based reinforcement learning methods.
\newblock \emph{Journal of Machine Learning Research}, 18\penalty0 (136):\penalty0 1--46, 2017.
\newblock URL \url{http://jmlr.org/papers/v18/16-634.html}.

\bibitem[Xu et~al.(2024{\natexlab{a}})Xu, Sharaf, Chen, Tan, Shen, Durme, Murray, and Kim]{xu2024contrastive}
Haoran Xu, Amr Sharaf, Yunmo Chen, Weiting Tan, Lingfeng Shen, Benjamin~Van Durme, Kenton Murray, and Young~Jin Kim.
\newblock Contrastive preference optimization: Pushing the boundaries of llm performance in machine translation, 2024{\natexlab{a}}.

\bibitem[Xu et~al.(2024{\natexlab{b}})Xu, Lee, Sukhbaatar, and Weston]{xu2024things}
Jing Xu, Andrew Lee, Sainbayar Sukhbaatar, and Jason Weston.
\newblock Some things are more cringe than others: Iterative preference optimization with the pairwise cringe loss, 2024{\natexlab{b}}.

\bibitem[Xu et~al.(2024{\natexlab{c}})Xu, Fu, Gao, Ye, Liu, Mei, Wang, Yu, and Wu]{xu2024dpo}
Shusheng Xu, Wei Fu, Jiaxuan Gao, Wenjie Ye, Weilin Liu, Zhiyu Mei, Guangju Wang, Chao Yu, and Yi~Wu.
\newblock Is dpo superior to ppo for llm alignment? a comprehensive study, 2024{\natexlab{c}}.

\bibitem[Xu et~al.(2024{\natexlab{d}})Xu, Zhu, Zhao, Pan, Li, and Wang]{xu2024perils}
Wenda Xu, Guanglei Zhu, Xuandong Zhao, Liangming Pan, Lei Li, and William~Yang Wang.
\newblock Perils of self-feedback: Self-bias amplifies in large language models, 2024{\natexlab{d}}.

\bibitem[Yang et~al.(2024)Yang, Salamatian, Sun, Suresh, and Beirami]{yang2024asymptotics}
Joy~Qiping Yang, Salman Salamatian, Ziteng Sun, Ananda~Theertha Suresh, and Ahmad Beirami.
\newblock Asymptotics of language model alignment, 2024.

\bibitem[Yuan et~al.(2023)Yuan, Yuan, Tan, Wang, Huang, and Huang]{yuan2023rrhf}
Hongyi Yuan, Zheng Yuan, Chuanqi Tan, Wei Wang, Songfang Huang, and Fei Huang.
\newblock Rrhf: Rank responses to align language models with human feedback.
\newblock In A.~Oh, T.~Naumann, A.~Globerson, K.~Saenko, M.~Hardt, and S.~Levine, editors, \emph{Advances in Neural Information Processing Systems}, volume~36, pages 10935--10950. Curran Associates, Inc., 2023.
\newblock URL \url{https://proceedings.neurips.cc/paper_files/paper/2023/file/23e6f78bdec844a9f7b6c957de2aae91-Paper-Conference.pdf}.

\bibitem[Yuan et~al.(2024)Yuan, Pang, Cho, Li, Sukhbaatar, Xu, and Weston]{yuan2024selfrewarding}
Weizhe Yuan, Richard~Yuanzhe Pang, Kyunghyun Cho, Xian Li, Sainbayar Sukhbaatar, Jing Xu, and Jason Weston.
\newblock Self-rewarding language models, 2024.

\bibitem[Zeng et~al.(2024)Zeng, Yu, Gao, Meng, Goyal, and Chen]{zeng2024evaluating}
Zhiyuan Zeng, Jiatong Yu, Tianyu Gao, Yu~Meng, Tanya Goyal, and Danqi Chen.
\newblock Evaluating large language models at evaluating instruction following.
\newblock In \emph{The Twelfth International Conference on Learning Representations}, 2024.
\newblock URL \url{https://openreview.net/forum?id=tr0KidwPLc}.

\bibitem[Zhao et~al.(2023{\natexlab{a}})Zhao, Joshi, Liu, Khalman, Saleh, and Liu]{zhao2023slichf}
Yao Zhao, Rishabh Joshi, Tianqi Liu, Misha Khalman, Mohammad Saleh, and Peter~J. Liu.
\newblock Slic-hf: Sequence likelihood calibration with human feedback, 2023{\natexlab{a}}.

\bibitem[Zhao et~al.(2023{\natexlab{b}})Zhao, Khalman, Joshi, Narayan, Saleh, and Liu]{zhao2023calibrating}
Yao Zhao, Mikhail Khalman, Rishabh Joshi, Shashi Narayan, Mohammad Saleh, and Peter~J Liu.
\newblock Calibrating sequence likelihood improves conditional language generation.
\newblock In \emph{The Eleventh International Conference on Learning Representations}, 2023{\natexlab{b}}.
\newblock URL \url{https://openreview.net/forum?id=0qSOodKmJaN}.

\bibitem[Zheng et~al.(2023{\natexlab{a}})Zheng, Chiang, Sheng, Zhuang, Wu, Zhuang, Lin, Li, Li, Xing, Zhang, Gonzalez, and Stoica]{zheng2023judging}
Lianmin Zheng, Wei-Lin Chiang, Ying Sheng, Siyuan Zhuang, Zhanghao Wu, Yonghao Zhuang, Zi~Lin, Zhuohan Li, Dacheng Li, Eric Xing, Hao Zhang, Joseph~E. Gonzalez, and Ion Stoica.
\newblock Judging {LLM}-as-a-judge with {MT}-bench and chatbot arena.
\newblock In \emph{Thirty-seventh Conference on Neural Information Processing Systems Datasets and Benchmarks Track}, 2023{\natexlab{a}}.
\newblock URL \url{https://openreview.net/forum?id=uccHPGDlao}.

\bibitem[Zheng et~al.(2024)Zheng, Chiang, Sheng, Zhuang, Wu, Zhuang, Lin, Li, Li, Xing, et~al.]{zheng2024judging}
Lianmin Zheng, Wei-Lin Chiang, Ying Sheng, Siyuan Zhuang, Zhanghao Wu, Yonghao Zhuang, Zi~Lin, Zhuohan Li, Dacheng Li, Eric Xing, et~al.
\newblock Judging llm-as-a-judge with mt-bench and chatbot arena.
\newblock \emph{Advances in Neural Information Processing Systems}, 36, 2024.

\bibitem[Zheng et~al.(2023{\natexlab{b}})Zheng, Dou, Gao, Hua, Shen, Wang, Liu, Jin, Liu, Zhou, et~al.]{zheng2023secrets}
Rui Zheng, Shihan Dou, Songyang Gao, Yuan Hua, Wei Shen, Binghai Wang, Yan Liu, Senjie Jin, Qin Liu, Yuhao Zhou, et~al.
\newblock Secrets of rlhf in large language models part i: Ppo.
\newblock \emph{arXiv preprint arXiv:2307.04964}, 2023{\natexlab{b}}.

\end{thebibliography}

\newpage
\appendix

\section{Proofs and Additional Results}
\begin{assumption}[Bradley-Terry~\citep{bradleyterry}]
    Given a prompt $x$ and two possible continuations $y_1$ and $y_2$, the ground truth human preference distribution satisfies 
    \begin{equation}
        \mathbb{P}(y_1\succ y_2\mid x) = \frac{\exp(r^*(x, y_1))}{\exp(r^*(x,y_1)) + \exp(r^*(x, y_2))}
    \end{equation}
    for some ground truth reward model $r^*$.
    \label{assume:bt}
\end{assumption}

\subsection{Proof of~\Cref{prop:sanity_check}}\label{sec:proof-sanity-check}
\begin{proposition}[Reproduced from~\Cref{prop:sanity_check}] Recall the definition of $y_w$, $y_l$ in Definition \ref{def:datapoint}.
    If $\refmodel(y_w\mid x) \geq \refmodel(y_l\mid x)$ and $\dpoloss(x, y_w, y_l; \model, \refmodel) \leq 0.6$, then $\rankacc(x, y_w, y_l) =1$.
\end{proposition}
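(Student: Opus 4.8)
The plan is to reduce the whole statement to a single scalar inequality about the sigmoid. First I would name the two quantities that appear inside the loss: write $m := \log\frac{\model(y_w\mid x)}{\model(y_l\mid x)}$ for the model log-ratio and $c := \log\frac{\refmodel(y_l\mid x)}{\refmodel(y_w\mid x)}$ for the reference-model log-ratio, so that by the second line of \Cref{def:dpo} the per-datapoint loss is exactly $\dpoloss(x, y_w, y_l;\model,\refmodel) = -\log\sigma\bigl(\beta(m+c)\bigr)$. Two observations then align the hypotheses and the conclusion with these scalars: the conclusion $\rankacc(x,y_w,y_l)=1$ holds precisely when $m \geq 0$ (this is just \Cref{def:ranking_accuracy}), and the hypothesis $\refmodel(y_w\mid x)\geq\refmodel(y_l\mid x)$ says exactly that $c \leq 0$.

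Next I would convert the loss bound into a statement about the sign of $m+c$. Since $z\mapsto-\log\sigma(z)$ is strictly decreasing and $-\log\sigma(0)=\log 2\approx 0.693$, the assumption $\dpoloss \leq 0.6 < \log 2$ forces $\sigma\bigl(\beta(m+c)\bigr) > \tfrac12$, and hence $\beta(m+c) > 0$. Because $\beta>0$, this gives $m + c > 0$, i.e.\ $m > -c$.

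Finally I would combine the two facts. From $c \leq 0$ we have $-c \geq 0$, so $m > -c \geq 0$ yields $m > 0$, i.e.\ $\model(y_w\mid x) > \model(y_l\mid x)$, which by \Cref{def:ranking_accuracy} gives $\rankacc(x,y_w,y_l)=1$.

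There is essentially no hard step here, as befits a sanity check: the only quantitative input is the numerical comparison $0.6 < \log 2$, which supplies just enough slack to push $\sigma\bigl(\beta(m+c)\bigr)$ strictly above $\tfrac12$ (any threshold strictly below $\log 2\approx 0.693$ would work verbatim). The one place to be careful is the direction of the inequality when passing from the loss bound to the sign of $m+c$ — the loss is a \emph{decreasing} function of the reward margin — together with the sign convention $c\le 0$ coming from the reference-model hypothesis; getting both signs right is the entirety of the argument.
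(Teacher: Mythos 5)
Your proof is correct and takes essentially the same approach as the paper's: both reduce the claim to the observation that a loss below $\log 2 = -\log\sigma(0)$ forces the reward margin inside the sigmoid to be positive, and then use the hypothesis $\refmodel(y_w\mid x)\geq\refmodel(y_l\mid x)$ to transfer that positivity to the model's own log-ratio. The only cosmetic difference is that you decompose the margin as $m+c$ (model log-ratio plus reference log-ratio, the second line of \Cref{def:dpo}) while the paper writes it as $a_x - b_x$ with $a_x,b_x$ the per-completion implicit rewards, and you phrase the numerical check as $0.6 < \log 2$ rather than $\exp(-0.6) > \tfrac12$; these are identical facts.
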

\begin{proof}
For notational convenience, write the log probability ratios as
$$\ratiowin = \log\frac{\model(y_w|x)}{\refmodel(y_w|x)}, \ \ \ratiolose = \log\frac{\model(y_l|x)}{\refmodel(y_l|x)}.
$$
Then we can express the probability that the model places on each response as
$$
\model(y_w|x) = \refmodel(y_w|x) e^{\ratiowin},\ \ \model(y_l|x) = \refmodel(y_l|x) e^{\ratiolose}.$$

Suppose for $(x, y_w, y_l)$, $\dpoloss(x, y_w, y_l; \model, \refmodel) \le 0.6$. Expanding the DPO loss, we have
\begin{align*}
\dpoloss(x, y_w, y_l;\model, \refmodel) &= -\log \sigma(\beta(\ratiowin - \ratiolose)) \le 0.6.
\end{align*}
Rearranging the inequality and exponentiate on both sides, we have
\begin{align*}
\sigma(\beta(\ratiowin - \ratiolose)) \ge \exp(-0.6).
\end{align*}
Note that $\sigma(0) = 0.5 < \exp(-0.6) \le \sigma(\beta(\ratiowin - \ratiolose))$. Since the logistic function is monotonic, this implies that $0 \le \beta(\ratiowin - \ratiolose)$ and $\ratiowin \ge \ratiolose$.

Writing out this last inequality, 
$$\frac{\model(y_w|x)}{\refmodel(y_w|x)} \ge \frac{\model(y_l|x)}{\refmodel(y_l|x)} \Rightarrow \frac{\model(y_w|x)}{\model(y_l|x)} \ge \frac{\refmodel(y_w|x)}{\refmodel(y_l|x)} \ge 1,
$$
where the last inequality is by the assumption on $\refmodel$. We conclude that $\rankacc(x, y_w, y_l) = 1$ by definition.
\end{proof}

\subsection{Proof of~\Cref{thm:upper_bound}}\label{sec:proof-upper-bound}
\begin{theorem}[Simulating Perfect RLHF]
    Fix a reference model $\refmodel$ and an aggregated preference datapoint $(x, y_w, y_l)\sim\gD$. Assume the dataset includes the ground-truth human preferences: that is, $\alpha(x, y_w, y_l)=\mathbb{P}(y_w\succ y_l)$, and that these preferences obey the Bradley-Terry model (\Cref{assume:bt}). Let $\pi^*$ be the (adequately expressive) model that perfectly minimizes the DPO objective on $(x, y_w, y_l)$, or perfectly maximizes the PPO objective on the optimal reward function as described in~\Cref{subsec:prelims_preferences}. Then, the optimal policy $\pi^*$ satisfies
    \begin{equation}
        \frac{\pi^*(y_w\mid x)}{\pi^*(y_l\mid x)} = \frac{\refmodel({y_w\mid x})}{\refmodel(y_l\mid x)} \left(\frac{\alpha(x, y_w, y_l)}{1-\alpha(x, y_w, y_l)}\right)^{1/\beta}
    \end{equation}
    where $\beta$ is a hyperparameter in the DPO and RLHF objectives.
\end{theorem}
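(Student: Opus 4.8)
The plan is to reduce both the RLHF and the DPO cases to a single intermediate fact --- the value of the optimal reward margin on the pair $(y_w, y_l)$ --- and then convert that margin into the claimed policy ratio. First I would pin down what ``perfect optimization on ground-truth data'' forces the reward to be. Writing $\Delta = r^*(x,y_w) - r^*(x,y_l)$ for the reward margin, the relevant reward-model loss restricted to this pair is the soft logistic loss $-\alpha\log\sigma(\Delta) - (1-\alpha)\log\sigma(-\Delta)$, where $\alpha = \alpha(x,y_w,y_l) = \mathbb{P}(y_w \succ y_l\mid x)$ is the true frequency with which each ordering is presented. This is strictly convex in $\Delta$ (the logistic loss is a proper scoring rule), so its unique minimizer satisfies $\sigma(\Delta) = \alpha$, i.e. $\Delta = \log\frac{\alpha}{1-\alpha}$. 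Equivalently, the Bradley--Terry assumption (\Cref{assume:bt}) already gives $\sigma(r^*(x,y_w) - r^*(x,y_l)) = \alpha$, so the optimal margin is $\log\frac{\alpha}{1-\alpha}$ independently of any prompt-dependent shift in $r^*$.

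For the RLHF route I would then invoke the closed form of the KL-regularized reward maximizer. The objective $\max_\pi \E_{y\sim\pi(\cdot\mid x)}[r^*(x,y)] - \beta\,\mathrm{KL}(\pi(\cdot\mid x)\,\|\,\refmodel(\cdot\mid x))$ equals, up to a constant in $\pi$, the quantity $-\beta\,\mathrm{KL}\!\left(\pi(\cdot\mid x)\,\big\|\,\tfrac{1}{Z(x)}\refmodel(\cdot\mid x)e^{r^*(x,\cdot)/\beta}\right)$, whose unique maximizer is the Gibbs policy $\pi^*(y\mid x) = \tfrac{1}{Z(x)}\refmodel(y\mid x)\exp(r^*(x,y)/\beta)$. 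Taking the ratio at $y_w$ and $y_l$ cancels $Z(x)$ and gives $\frac{\pi^*(y_w\mid x)}{\pi^*(y_l\mid x)} = \frac{\refmodel(y_w\mid x)}{\refmodel(y_l\mid x)}\exp(\Delta/\beta)$; substituting $\Delta = \log\frac{\alpha}{1-\alpha}$ yields the stated identity.

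For the DPO route I would use the implicit-reward reparametrization of \citet{rafailov2023direct}: $\dpoloss$ is exactly the reward-model loss evaluated on the implicit reward $\hat r(x,y) = \beta\log\frac{\model(y\mid x)}{\refmodel(y\mid x)}$, since the $\log Z(x)$ term cancels inside the margin. A perfect minimizer therefore sets the implicit margin equal to the optimum computed above, $\beta\big(\log\frac{\pi^*(y_w\mid x)}{\refmodel(y_w\mid x)} - \log\frac{\pi^*(y_l\mid x)}{\refmodel(y_l\mid x)}\big) = \log\frac{\alpha}{1-\alpha}$, and rearranging produces the same formula, confirming that the two routes coincide.

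I expect the two genuinely delicate points to be (i) the proper-scoring-rule step, where I must make precise that ``ground-truth preference data'' means the logistic loss sees each ordering at its true frequencies $\alpha$ and $1-\alpha$, so the minimizer is $\sigma^{-1}(\alpha)$ rather than the hard aggregated label; and (ii) the variational derivation of the Gibbs policy, which relies on the ``adequately expressive'' hypothesis so that $\pi^*$ can actually realize the Boltzmann form and on $r^*$ being finite (the boundary case $\alpha = 1$ sends the ratio to infinity, matching the remark). The remaining manipulations --- cancelling $Z(x)$ and exponentiating $\tfrac{1}{\beta}\log\frac{\alpha}{1-\alpha}$ --- are routine.
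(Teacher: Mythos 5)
Your proposal is correct and takes essentially the same route as the paper: the paper's DPO case likewise uses the policy--reward reparametrization together with the fact that the soft-label classification loss is optimized when $\pi^*(y_w \succ y_l \mid x) = \alpha$ (equivalently, margin $\log\tfrac{\alpha}{1-\alpha}$), and its RLHF case likewise takes ratios of the Gibbs-form optimum $\pi^*(y\mid x)\propto \refmodel(y\mid x)\exp\bigl(r(x,y)/\beta\bigr)$ and invokes Bradley--Terry. Your only departure is organizational --- you factor out the optimal reward margin as a single shared lemma feeding both routes, whereas the paper establishes the analogous fact separately within each case.
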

\begin{proof}
We first prove the statement for DPO. Following the notation from \cite{rafailov2023direct}, fix a reward function $r$, let $\pi_r(y|x)$ be the optimal model under the KL-constrained RL objective given the reward function $r$. We can express $r(x, y)$ in terms of $\pi_r(y|x)$and $\pi_{ref}(y|x)$:
$$
r(x, y) = \beta\log \frac{\pi_r(y|x)}{\pi_{ref}(y|x)} + \beta \log Z(x),$$
where $Z(x)$ is the partition function for prompt $x$. Then, under the Bradley-Terry model, the probability of preferring $y_w$ over $y_l$ under the model $\pi_r$ is
\begin{align*}
\pi_r(y_w\succ y_l|x) &= \frac{1}{1+\exp(r(x, y_l) - r(x, y_w))}\\
&= \frac{1}{1+\exp(\beta \log \frac{\pi_r(y_l|x)}{\pi_{ref}(y_l|x)} - \beta \log \frac{\pi_r(y_w|x)}{\pi_{ref}(y_w|x)})}.
\end{align*}
Given the ground-truth human preferences, DPO's maximum likelihood objective minimizes the binary classification loss on $(x, y_w, y_l)$:
$$
\min_{\pi} \alpha(x, y_w, y_l)\log \pi(y_w\succ y_l|x) + (1-\alpha(x, y_w, y_l))\log (1-\pi(y_w\succ y_l|x)).$$
Let $\pi^*$ denote an optimal policy from the loss above, the optimal preference probabilities satisfy
$$
\pi^*(y_w\succ y_l|x) = \alpha(x, y_w, y_l),$$
and the optimal policy in turn satisfies
$$\frac{1}{1+\exp(\beta \log \frac{\pi^*(y_l|x)}{\pi_{ref}(y_l|x)} - \beta \log \frac{\pi^*(y_w|x)}{\pi_{ref}(y_w|x)})} = \alpha(x, y_w, y_l).
$$
Rearranging, we have
$$
\frac{\alpha(x, y_l, y_w)}{\alpha(x, y_w, y_l)} = \exp\left(\beta \log \frac{\pi^*(y_l|x)}{\pi_{ref}(y_l|x)} - \beta \log \frac{\pi^*(y_w|x)}{\pi_{ref}(y_w|x)}\right),
$$ taking a log on both sides and divide by $\beta$,
$$
\frac{1}{\beta}\log \frac{\alpha(x, y_l, y_w)}{\alpha(x, y_w, y_l)} = \log \frac{\pi^*(y_l|x)\pi_{ref}(y_w|x)}{\pi_{ref}(y_l|x)\pi^*(y_w|x)},
$$ and finally exponentiating both sides,
$$
\frac{\pi^*(y_l|x)}{\pi^*(y_w|x)} = \frac{\pi_{ref}(y_l|x)}{\pi_{ref}(y_w|x)}\left(\frac{\alpha(x, y_l, y_w)}{\alpha(x, y_w, y_l)}\right)^{1/\beta},
$$
the result follows by taking an inverse.

Now we show the result for RLHF, starting from the optimal solution of the KL-constrained reward maximization problem under the reward $r_\phi$,
as derived in \cite{rafailov2023direct}:
$$
\pi^*(y|x) \propto \refmodel(y|x)\exp\left(\frac{1}{\beta}r_\phi(x, y)\right).$$
The condition is straightforward to derive
\begin{align}
    \frac{\pi^*( y_w|x)}{\pi^*(y_l|x)} &= \frac{\refmodel(y_w|x)\,\mathrm{exp}\left(\frac{1}{\beta}r_\phi(x, y_w)\right)}{\refmodel(y_l\mid x)\,\mathrm{exp}\left(\frac{1}{\beta}r_\phi(x, y_l)\right)} \\
    &= \frac{\refmodel(y_w|x)}{\refmodel(y_l\mid x)}\left(\mathrm{exp}(r_\phi(x, y_w) - r_\phi(x, y_l)\right)^{1/\beta} \\
    &= \frac{\refmodel(y_w|x)}{\refmodel(y_l\mid x)} \left(\frac{\alpha(x, y_w, y_l)}{1-\alpha(x, y_w, y_l)}\right)^{1/\beta},
\end{align}
where the last equality is due to the Bradley-Terry model.               
\end{proof}

\subsection{Proof of~\Cref{cor:idealized_rank_acc}}\label{sec:proof-cor}
\begin{corollary}[Reproduces \Cref{cor:idealized_rank_acc}]
    Given a reference model $\refmodel$, the DPO or RLHF hyperparameter $\beta$, a dataset of aggregated preferences $\gD=\{(x, y_w, y_l)\}$ and their corresponding rater proportions $\alpha(x, y_w, y_l)$, the ranking accuracy of the optimum of the RLHF or DPO objective $\pi^*$ is given by
    \begin{equation}
        \idealrankacc(\gD;\refmodel) =\underset{(x,y_w,y_l)\sim\gD}{\mathbb{E}} \left[\mathbbm{1}\left[\frac{\refmodel({y_w\mid x})}{\refmodel(y_l\mid x)} \left(\frac{\alpha(x, y_w, y_l)}{1-\alpha(x, y_w, y_l)}\right)^{1/\beta} > 1\right]\right]
    \end{equation}
where $\mathbbm{1}[\cdot]$ is the indicator function. 
When computed on length-normalized likelihoods from $\tilde\refmodel$, we denote the idealized ranking accuracy as $\tilde\idealrankacc$. 
\end{corollary}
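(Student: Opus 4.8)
The plan is to treat this as an immediate consequence of Theorem~\ref{thm:upper_bound}: substitute the closed form for the optimal likelihood ratio into the definition of ranking accuracy, and then aggregate over the dataset by linearity of expectation. No new machinery is needed beyond unwinding definitions, so the work is essentially bookkeeping.

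Concretely, I would first fix a single aggregated datapoint $(x,y_w,y_l)$ and recall from Definition~\ref{def:ranking_accuracy} that $\rankacc(x,y_w,y_l;\pi^*)=1$ exactly when $\pi^*(y_w\mid x)\geq\pi^*(y_l\mid x)$. Since $\pi^*(y_l\mid x)>0$, this condition is equivalent to $\frac{\pi^*(y_w\mid x)}{\pi^*(y_l\mid x)}\geq 1$. Next I would invoke Theorem~\ref{thm:upper_bound}, which supplies this ratio in closed form, and substitute it so that the per-datapoint accuracy becomes the indicator $\mathbbm{1}\bigl[\frac{\refmodel(y_w\mid x)}{\refmodel(y_l\mid x)}\bigl(\frac{\alpha(x,y_w,y_l)}{1-\alpha(x,y_w,y_l)}\bigr)^{1/\beta}\geq 1\bigr]$. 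Finally, I would take the expectation over $(x,y_w,y_l)\sim\gD$ and apply the dataset-level definition $\rankacc(\gD;\pi^*)=\E_{(x,y_w,y_l)\sim\gD}\,\rankacc(x,y_w,y_l;\pi^*)$ to recover the stated formula for $\idealrankacc(\gD;\refmodel)$.

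The only point requiring care — and the nearest thing to an obstacle — is the boundary case distinguishing the weak inequality $\geq 1$ in Definition~\ref{def:ranking_accuracy} from the strict inequality $>1$ written in the corollary. These differ only on the event where the optimal ratio equals exactly $1$, which has measure zero for continuous model likelihoods (and can otherwise be absorbed into an arbitrarily small discrepancy), so the two expressions coincide. I would flag this explicitly rather than silently swapping the inequalities. The length-normalized claim $\tilde\idealrankacc$ then follows verbatim by running the identical argument with $\tilde\refmodel$ in place of $\refmodel$, since the length-normalized ranking accuracy is defined by comparing the length-normalized likelihoods.
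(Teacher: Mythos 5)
Your proposal is correct and follows essentially the same route as the paper's proof: plug the closed-form ratio from Theorem~\ref{thm:upper_bound} into the per-datapoint ranking-accuracy condition and take the expectation over $\gD$. Your explicit handling of the boundary case (the weak inequality $\geq$ in Definition~\ref{def:ranking_accuracy} versus the strict $>$ in the corollary) is actually more careful than the paper, which silently uses the strict inequality.
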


\begin{proof}
We can see from the definition of ranking accuracy (\Cref{def:ranking_accuracy}) that the accuracy on a datapoint $(x, y_w, y_l)$ is 1 when $\pi(y_w\mid x) > \pi(y_l\mid x)$. 
In other words, $\pi(y_w\mid x) / \pi(y_l\mid x) > 1$ ensures that the ranking accuracy $\rankacc(x, y_w, y_l) = 1$. 
\Cref{thm:upper_bound} shows a formula for the ratio of the optimal policies, and plugging this in to the condition for ranking accuracy immediately yields the given formula for $\idealrankacc$.
\end{proof}

\subsection{Proof of~\Cref{prop:dpo_loss}}\label{sec:proof-dpo-loss}
\begin{theorem}
Consider an aggregated preference datapoint $(x, y_w, y_l)$ such that the reference model log-ratio is some constant $c$, i.e.
$$
\log \frac{\refmodel(y_l|x)}{\refmodel(y_w|x)}=c.$$
Then, $\rankacc(x, y_w, y_l) = 1$ if and only if $\dpoloss(x, y_w, y_l) \le -\log \sigma (\beta c)$, where $\sigma$ is the sigmoid function. 
\end{theorem}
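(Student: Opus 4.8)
The plan is to reduce both the ranking-accuracy condition and the loss-threshold condition to a single scalar inequality on the \emph{model log-ratio}, and then chain equivalences through the monotone maps involved. Concretely, I would set $m = \log\frac{\model(y_w\mid x)}{\model(y_l\mid x)}$, so that by \Cref{def:dpo} together with the hypothesis $\log\frac{\refmodel(y_l\mid x)}{\refmodel(y_w\mid x)} = c$, the DPO loss on the datapoint becomes $\dpoloss(x, y_w, y_l) = -\log\sigma(\beta(m + c))$. Simultaneously, by \Cref{def:ranking_accuracy}, $\rankacc(x, y_w, y_l) = 1$ holds exactly when $\model(y_w\mid x) \ge \model(y_l\mid x)$, i.e. exactly when $m \ge 0$.

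With these two reformulations in hand, the remaining work is a chain of equivalences. Starting from the loss inequality $\dpoloss(x, y_w, y_l) \le -\log\sigma(\beta c)$, I would substitute the expression above to get $-\log\sigma(\beta(m+c)) \le -\log\sigma(\beta c)$, then negate and apply the strict monotonicity of $\log$ to obtain $\sigma(\beta(m+c)) \ge \sigma(\beta c)$. Since $\sigma$ is strictly increasing, this is equivalent to $\beta(m+c) \ge \beta c$, and finally dividing by $\beta > 0$ and cancelling $c$ yields $m \ge 0$. Reading the chain in reverse establishes the converse, giving the stated ``if and only if.''

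There is essentially no analytic obstacle here: the entire argument rests on the facts that $t \mapsto \log t$, $t \mapsto \sigma(t)$, and $t \mapsto \beta t$ (for fixed $\beta > 0$) are each strictly monotone, so every step is a genuine equivalence rather than a one-directional implication. The only points requiring care are bookkeeping ones, namely keeping the sign of $c$ fully general (it may be positive or negative, corresponding to a reference model with an incorrect or a correct ranking) and ensuring that no step secretly assumes a sign for $\beta c$ or for $m$. I would state explicitly that all manipulations are equivalences, so that both directions of the biconditional follow from the single computation without a separate argument.
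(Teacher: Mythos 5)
Your proposal is correct and follows essentially the same route as the paper's proof: both decompose $\dpoloss(x, y_w, y_l)$ as $-\log\sigma\bigl(\beta(m + c)\bigr)$ with $m$ the model log-ratio, and both reduce the claim to $m \ge 0$ via monotonicity of $\log$ and $\sigma$. The only difference is presentational -- the paper argues the two directions of the biconditional separately, while you chain strict-monotonicity equivalences to get both at once, which is slightly tidier but mathematically identical.
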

\begin{proof}
Recall that we can break down the DPO loss into the model log-ratio and the reference model log-ratio as follows:
\begin{align*}
\dpoloss(x, y_w, y_l;\model, \refmodel) &= -\log \sigma \left(\beta\left(\log\frac{\model(y_w|x)}{\model(y_l|x)} + \log \frac{\refmodel(y_l|x)}{\refmodel(y_w|x)}\right)\right)\\
&= -\log \sigma \left(\beta\left(\log\frac{\model(y_w|x)}{\model(y_l|x)} + c\right)\right) \tag{by assumption}
\end{align*}

Observe that if $\dpoloss(x, y_w, y_l;\model, \refmodel)\le -\log \sigma(\beta c)$, then 
$$
\log\frac{\model(y_w|x)}{\model(y_l|x)} + c \ge c,$$
by the monotonicity of the log and sigmoid functions.
This implies that $\frac{\model(y_w|x)}{\model(y_l|x)} \ge 1$ and $\rankacc(x, y_w, y_l) = 1$.

Now we show the other direction. Suppose $\rankacc(x, y_w, y_l)=1$, then $\model(y_w|x) \ge \model(y_l|x)$ and
$$ \log\frac{\model(y_w|x)}{\model(y_l|x)} + c \ge c.$$ Using this relationship, we have
$$
\log \sigma \left(\beta\left(\log\frac{\model(y_w|x)}{\model(y_l|x)} + c\right)\right) \ge \log \sigma(\beta c),$$
and the other direction follows by taking the negative of both sides.
\end{proof}

\subsection{Extending~\Cref{prop:dpo_loss} to IPO}\label{sec:ipo}
We extend our main result on DPO to also describe the IPO objective, formally defined below.

\begin{definition}[Identity Preference Optimization, a.k.a. IPO \cite{azar2023general}]
The Identity Preference Optimization (IPO) loss is defined as $\mathcal{L}_\text{IPO}(x, y_w, y_l)=\left(\log\frac{\pi_\theta(y_w|x)}{\pi_\text{Ref}(y_w|x)}-\log\frac{\pi_\theta(y_l|x)}{\pi_\text{Ref}(y_l|x)}-\frac{1}{2\tau}\right)^2$.
\end{definition}

\begin{proposition}\label{prop:ipo} Under the IPO loss, if 
$$\log \frac{\refmodel(y_w|x)}{\refmodel(y_l|x)} < -\frac{1}{2\tau} ,
$$
then zero IPO loss implies that $\rankacc(x, y_w, y_l) = 0$. On the other hand, if  $$\log \frac{\refmodel(y_w|x)}{\refmodel(y_l|x)} = c \ge -\frac{1}{2\tau} ,
$$
then $\mathcal{L}_\text{IPO}(x, y_w, y_l) \le \left(c + \frac{1}{2\tau}\right)^2$ guarantees that $\rankacc(x, y_w, y_l) = 1$.
\end{proposition}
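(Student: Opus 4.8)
The plan is to collapse the whole statement to a single scalar, the \emph{model log-ratio} $m := \log\frac{\model(y_w\mid x)}{\model(y_l\mid x)}$, since by \Cref{def:ranking_accuracy} we have $\rankacc(x,y_w,y_l)=1$ if and only if $\model(y_w\mid x)\ge\model(y_l\mid x)$, i.e. $m\ge 0$. First I would rewrite the IPO loss in terms of $m$ and the reference quantity $c=\log\frac{\refmodel(y_w\mid x)}{\refmodel(y_l\mid x)}$. The bracketed argument of the square is $\log\frac{\model(y_w\mid x)}{\refmodel(y_w\mid x)}-\log\frac{\model(y_l\mid x)}{\refmodel(y_l\mid x)}=\log\frac{\model(y_w\mid x)}{\model(y_l\mid x)}-\log\frac{\refmodel(y_w\mid x)}{\refmodel(y_l\mid x)}=m-c$, so $\mathcal{L}_\text{IPO}(x,y_w,y_l)=\bigl(m-c-\tfrac{1}{2\tau}\bigr)^2$. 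I would flag explicitly that the sign convention for $c$ here is the opposite of that in \Cref{prop:dpo_loss}, to prevent confusion.

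Both claims then follow by direct algebra on this scalar form. For the first, zero IPO loss forces the squared term to vanish, so $m=c+\tfrac{1}{2\tau}$; under the hypothesis $c<-\tfrac{1}{2\tau}$ this gives $m<0$ strictly, hence $\model(y_w\mid x)<\model(y_l\mid x)$ and $\rankacc(x,y_w,y_l)=0$. For the second, I set $t:=c+\tfrac{1}{2\tau}$, which is nonnegative precisely because $c\ge-\tfrac{1}{2\tau}$. The assumed bound $\mathcal{L}_\text{IPO}(x,y_w,y_l)\le t^2$ reads $(m-t)^2\le t^2$; taking square roots (legitimate since $t\ge 0$) yields $|m-t|\le t$, i.e. $0\le m\le 2t$. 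In particular $m\ge 0$, so $\rankacc(x,y_w,y_l)=1$.

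There is no substantive obstacle here beyond careful bookkeeping. The only two points requiring attention are keeping the reference log-ratio's sign consistent throughout (it is flipped relative to \Cref{prop:dpo_loss}), and handling the boundary inequalities cleanly — strict negativity of $m$ in the first claim versus nonnegativity in the second — so that the two regimes ($\rankacc=0$ and $\rankacc=1$) are reached without contradiction or overlap. Everything else is the elementary manipulation above, mirroring the monotonicity argument used for the DPO version in \Cref{sec:proof-dpo-loss}.
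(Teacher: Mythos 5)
Your proof is correct and rests on exactly the same key step as the paper: rewriting the IPO loss as $\bigl(m - c - \tfrac{1}{2\tau}\bigr)^2$ where $m$ is the model log-ratio and $c$ the reference model log-ratio, then reading off the ranking from the sign of $m$. In fact, the paper only states this decomposition and asserts the conclusion informally, so your explicit case analysis (zero loss forcing $m = c + \tfrac{1}{2\tau} < 0$, and $(m-t)^2 \le t^2$ with $t \ge 0$ forcing $0 \le m \le 2t$) is a faithful and slightly more complete write-up of the paper's intended argument, including the correct handling of the flipped sign convention relative to \Cref{prop:dpo_loss}.
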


Different from the DPO loss, the IPO loss is a regression loss where the optimal model log-ratio has a constant margin over the reference model log-ratio. This can be seen by the following decomposition of the IPO loss:
\begin{align*} 
\mathcal{L}_\text{IPO}(x, y_w, y_l) &= \left(\log\frac{\model(y_w|x)}{\pi_\theta(y_l|x)}-\log\frac{\refmodel(y_w|x)}{\refmodel(y_l|x)}-\frac{1}{2\tau}\right)^2.
\end{align*}
Clearly, the optimization target of the model log-ratio is the sum of the reference model log-ratio and the margin $\frac{1}{2\tau}$. 

From Figure \ref{fig:dpo-vs-lsr-vs-flipped}, we can see that the reference model can have a large bias towards the dispreferred completion, represented by the large negative values of the reference model log-ratio (note that in the figure, the reference model log-ratio has the dispreferred completion in the numerator). Therefore, for the optimal model under the IPO loss to have perfect ranking accuracy, the margin $\frac{1}{2\tau}$ needs to be large enough to overcome this bias for all datapoints. Alternatively, a per-example margin dependent on the reference model log-ratio of the example can be used.

\subsection{Generalization of the Ranking Accuracy for Preference Datasets with $n>2$ Outputs}\label{app:ra-def-n-greater-than-2}
Some datasets (\emph{e.g.} UltraFeedback \citep{cui2023ultrafeedback}) contain examples with more than two responses per prompt $x$. In these cases, we extend our definition of ranking accuracy (Def. \ref{def:ranking_accuracy}) to exact-match of the rankings over \emph{all choices}. Suppose each aggregated datapoint consists of a prompt $x$ and $n$ responses $(y_1,\cdots,y_n)$. Since the question of how to best aggregate rankings over multiple raters (when $n>2$) is an open research question, we assume that there already exists some aggregated social ranking $\bm{\rho}(x,y_1,\cdots,y_n)=(\rho_i(x,y_1,\cdots,y_n))_{i=1}^n\in \Pi[n]$ where $\rho_i(x,y_1,\cdots,y_n)<\rho_j(x,y_1,\cdots,y_n)$ implies that $y_i$ is preferred over $y_j$. Now let the ranking assigned by a policy $\pi_\theta$ be $\bm{\nu}(x,y_1,\cdots,y_n)=(\nu_i(x,y_1,\cdots,y_n))_{i=1}^n\in \Pi[n]$, where $\pi_\theta(y_i|x)>\pi_\theta(y_j|x)$ if and only if $\nu_i(x,y_1,\cdots,y_n)<\nu_j(x,y_1,\cdots,y_n)$. Then the generalized ranking accuracy is defined as follows:

\begin{definition}[Ranking Accuracy for $n>2$]\label{def:ranking_accuracy_n_greater_than_2}
\begin{equation}
\mathcal{R}_{n>2}(x,y_1,\cdots,y_n;\pi_\theta)= \mathbbm{1}\left[\bm{\rho}(x,y_1,\cdots,y_n)=\bm{\nu}(x,y_1,\cdots,y_n)\right]
\end{equation}
where $\mathbbm{1}[\cdot]$ is the indicator function. Analogously, the ranking accuracy over a dataset $\mathcal{D}=\{(x,y_1,\cdots,y_n)\}$ is
\begin{equation}
\underset{(x,y_1,\cdots,y_n)\sim\mathcal{D}}{\mathbb{E}} \mathcal{R}_{n>2}(x,y_1,\cdots,y_n;\pi_\theta)
\end{equation}
\end{definition}

\section{Experimental Details for Computing Ranking Accuracy of Open-Access LLMs}\label{app:ra-open-source}

\subsection{Implementation of Ranking Accuracy}\label{app:ra-open-source-custom-eval}
We evaluate ranking accuracy for a wide range of LLMs by evaluating the \emph{likelihoods} that each model $\pi_\theta$ assigns to $y_w$ given $x$ and $y_l$ given $x$, as described in Def. \ref{def:ranking_accuracy}. However, $x$, $y_w$, and $y_l$ are sequences, so we compute the sequence likelihoods by factorizing the sequence likelihood into a product of the conditional token likelihoods, like so:
\begin{equation}
    \pi_\theta(y|x) = \prod_{t=1}^{|y|} \pi_\theta(y_t|x;y_{<t})
\end{equation}
We use PyTorch and the Hugging Face \verb|transformers| and \verb|datasets| libraries to compute all ranking accuracies. For each dataset that we evaluate ranking accuracy on, we take a random sample of 1000 examples.

\paragraph{Handling Ties}
In some datasets, such as the cross-human-annotated validation split of the Alpaca Farm dataset \citep{dubois2023alpacafarm} (described in Sec. \ref{sec:datasets}), ties exist in the human annotations. When this is the case, we make a mild adjustment in the calculation of ranking accuracy to accommodate ties.

\begin{definition}[Ranking Accuracy with Ties]\label{def:ranking_accuracy_with_ties}
\begin{equation}
    \rankacc_\text{Ties}(x,y_1,y_2;\pi_\theta) = \begin{cases}
        \mathbbm{1}[|\model(y_1|x)-\model(y_2|x)|<\epsilon] & \text{if  } \, \alpha(x,y_1,y_2)=0.5 \\
        \mathbbm{1}[\model(y_1|x)>\model(y_2|x)]  & \text{if } \,\alpha(x,y_1,y_2)>0.5 \\
        \mathbbm{1}[\model(y_1|x)<\model(y_2|x)]  & \text{if } \,\alpha(x,y_1,y_2)<0.5
    \end{cases}
\end{equation}
where $\mathbbm[\cdot]$ is the indicator function. In other words, if the human annotations indicate tied preferences between $y_1$ and $y_2$, then $\model$ achieves the correct ranking if and only if it assigns $y_1$ and $y_2$ approximately the same likelihood (within some tolerance level $\epsilon$). For all other cases, the $\rankacc_\text{Ties}$ is equivalent to $\rankacc$.
\end{definition}
Throughout this paper, we use a tolerance $\epsilon$ of 0.01. In Fig. \ref{fig:ranking-accs} we provide ranking accuracies only for examples without ties, but we provide the numbers on the full datasets below.

\paragraph{Length Normalization} To compute the length-normalized ranking accuracy ($\tilde\rankacc$), we replace $\pi_\theta(y|x)$ in Defs. \ref{def:ranking_accuracy} and \ref{def:ranking_accuracy_n_greater_than_2} with $\widetilde{\model}(y|x)$,
where
\begin{equation}
    \widetilde{\model}(y|x)=\left(\prod_{t=1}^{|y|} \pi_\theta(y_t|x;y_{<t})\right)^{1/|y|}.
\end{equation}

\subsection{Datasets}\label{sec:datasets}
\begin{itemize}
    \item HH-RLHF \citep{bai2022training} (helpful set) consists of two model responses for each query (often the history of a multi-turn conversation between human and chatbot), based on generations from three different classes of models (context-distilled 52B model, same model with rejection sampling using a reward model, RLHF-finetuned models). Queries and preferences annotations are obtained from crowdworkers.
    \item Synthetic Instruct GPT-J Pairwise \citep{havrilla2023synthetic} is a synthetic dataset of queries and pairwise model generations spanning different subjects.
    \item StackExchange Preferences \citep{h4stackexchange} consists of questions and answers from Stack Exchange, where within a pair of answers, the preference between two answers is determined by a function of the number of upvotes and whether the answer was selected.
    \item UltraFeedback \citep{cui2023ultrafeedback} consists of model-generated responses from four different language models out of a larger set of models (meaning a different set of models is considered for different samples). Queries are obtained by a mixture of existing QA datasets, and the preference is annotated by GPT-4.
    \item Stanford Human Preferences \citep{pmlr-v162-ethayarajh22a-shp} is a dataset created from Reddit posts across different subject matters, where within a pair a response is considered preferred to another if was created later and has more upvotes. 
    \item Alpaca Farm Validation \citep{dubois2023alpacafarm} is sourced from the Alpaca Eval dataset, but with new splits repurposed for training preference-tuned models. Additionally, we choose to use the validation split because it contains human cross-annotations (\emph{i.e.} multiple human ratings per triple of $(x,y_1,y_2)$). This particular split can be found at \url{https://huggingface.co/datasets/tatsu-lab/alpaca_eval/blob/main/alpaca_farm_human_crossannotations.json}.
    The original dataset, Alpaca Eval \citep{li2023alpacaeval}, is a mixture of several test sets including Open Assistant \citep{kopf2023openassistant}, a dataset of human-constructed chatbot conversation turns, HH-RLHF \citep{bai2022training}, and the Vicuna \citep{zheng2024judging} and Koala test sets \citep{geng2023koala}, where the former consists of user-shared queries from ShareGPT, and the latter consists of human queries from online interactions.
\end{itemize}

\subsection{Full Results}\label{app:ra-full-results}
We give the full set of length-normalized and non-length-normalized ranking accuracies for 16 open-access LLMs across six datasets in Tables \ref{tab:ra-full-pt1}, \ref{tab:ra-full-pt2}, and \ref{tab:ra-full-pt3}. For the UltraFeedback \citep{cui2023ultrafeedback} and StackExchange Preferences \citep{h4stackexchange} datasets, we use the generalized definition of ranking accuracy for $n>2$ outputs instead (Def. \ref{def:ranking_accuracy_n_greater_than_2}). We also provide ranking accuracies computed on the Alpaca Farm validation dataset \textbf{with ties included} in Tables  \ref{tab:ra-af-wo-ties} and \ref{tab:idealized-wo-ties}.

\begin{table}[ht!]
\centering
\caption{\label{tab:ra-full-pt1}Length-normalized and non-length-normalized ranking accuracies for the Anthropic Helpful and Harmless (HH-RLHF; \citet{bai2022training}) and Synthetic Instruct GPT-J Pairwise \citep{havrilla2023synthetic} datasets. The latter contains only a training split.}
\begin{tabular}{L{3.8cm}C{1.2cm}C{1.2cm}C{1.2cm}C{1.2cm}C{1.2cm}C{1.2cm}}
\toprule
\multirow{3}{3.8cm}{\textbf{Model}} & \multicolumn{4}{C{4.8cm}}{\textbf{Anthropic HH-RLHF}} & \multicolumn{2}{C{2.4cm}}{\textbf{Synthetic Instruct GPT-J Pairwise}}  \\
 \cmidrule(r){2-5}  \cmidrule(l){6-7}
 & \multicolumn{2}{C{2.5cm}}{Test} & \multicolumn{2}{C{2.5cm}}{Train} & \multicolumn{2}{C{2.5cm}}{Train} \\
 \cmidrule(r){2-3}  \cmidrule(rl){4-5} \cmidrule(l){6-7} 
 & $\tilde\rankacc$ & $\rankacc$ & $\tilde\rankacc$ & $\rankacc$ & $\tilde\rankacc$ & $\rankacc$  \\
\midrule
\textsc{\small Gemma-7b-it} & 52.5\% & 46.7\% & 52.6\% & 47.0\% & 56.6\% & 76.5\% \\
\textsc{\small Gemma-7b} & 51.9\% & 46.5\% & 53.7\% & 46.6\% & 93.1\% & 76.9\% \\
\textsc{\small Gpt2} & 49.9\% & 46.0\% & 52.7\% & 46.8\% & 81.5\% & 65.9\% \\
\textsc{\small Llama-2-7b-chat-hf} & 52.0\% & 46.2\% & 54.1\% & 46.3\% & 94.5\% & 75.4\% \\
\textsc{\small Llama-2-7b-hf} & 52.1\% & 46.3\% & 53.8\% & 46.3\% & 93.6\% & 77.9\% \\
\textsc{\small Mistral-7B-v0.1} & 52.9\% & 45.8\% & 54.4\% & 46.4\% & 93.5\% & 77.4\% \\
\textsc{\small OLMo-7B} & 51.0\% & 45.5\% & 53.6\% & 46.6\% & 91.9\% & 78.2\% \\
\textsc{\small Pythia-1.4b} & 51.2\% & 46.2\% & 53.2\% & 46.3\% & 88.0\% & 68.2\% \\
\textsc{\small Pythia-2.8b} & 51.0\% & 46.4\% & 53.5\% & 46.6\% & 90.3\% & 69.5\% \\
\textsc{\small Tulu-2-7b} & 51.8\% & 46.7\% & 54.6\% & 46.3\% & 94.3\% & 77.8\% \\
\textsc{\small Tulu-2-dpo-7b} & 51.4\% & 46.1\% & 54.0\% & 46.2\% & 94.4\% & 77.7\% \\
\textsc{\small Vicuna-7b-v1.5} & 52.2\% & 46.4\% & 54.7\% & 46.0\% & 93.6\% & 77.4\% \\
\textsc{\small Zephyr-7b-dpo} & 50.9\% & 46.5\% & 55.4\% & 46.5\% & 95.1\% & 79.6\% \\
\textsc{\small Zephyr-7b-sft} & 49.5\% & 46.2\% & 55.6\% & 46.2\% & 94.3\% & 78.9\% \\
\bottomrule
\end{tabular}
\end{table}

\begin{table}[ht!]
\centering
\caption{\label{tab:ra-full-pt2}Length-normalized and non-length-normalized ranking accuracies for the StackExchange Preferences \citep{h4stackexchange} and UltraFeedback \citep{cui2023ultrafeedback} datasets. Both datasets contain only a training split.}
\begin{tabular}{L{3.8cm}C{1.2cm}C{1.2cm}C{1.2cm}C{1.2cm}}
\toprule
\multirow{3}{3.8cm}{\textbf{Model}} & \multicolumn{2}{C{2.4cm}}{\textbf{StackExchange Preferences}} & \multicolumn{2}{C{2.4cm}}{\textbf{UltraFeedback}}  \\
 \cmidrule(r){2-3}  \cmidrule(l){4-5} 
  & \multicolumn{2}{C{2.5cm}}{Train} & \multicolumn{2}{C{2.5cm}}{Train} \\
 \cmidrule(r){2-3}  \cmidrule(l){4-5} 
 & $\tilde\rankacc$ & $\rankacc$ & $\tilde\rankacc$ & $\rankacc$ \\
\midrule
\textsc{\small Gemma-7b-it} & 32.6\% & 12.8\% & 2.7\% & 1.4\% \\
\textsc{\small Gemma-7b} & 31.2\% & 22.4\% & 2.6\% & 1.5\% \\
\textsc{\small Gpt2} & 29.5\% & 21.0\% & 2.1\% & 1.4\% \\
\textsc{\small Llama-2-7b-chat-hf} & 32.1\% & 22.1\% & 2.5\% & 1.2\% \\
\textsc{\small Llama-2-7b-hf} & 33.0\% & 22.6\% & 2.7\% & 1.7\% \\
\textsc{\small Mistral-7B-v0.1} & 31.8\% & 22.5\% & 2.4\% & 1.7\% \\
\textsc{\small OLMo-7B} & 31.9\% & 22.3\% & 2.6\% & 1.4\% \\
\textsc{\small Pythia-1.4b} & 28.3\% & 12.7\% & 2.1\% & 1.4\% \\
\textsc{\small Pythia-2.8b} & 31.9\% & 21.7\% & 2.4\% & 1.4\% \\
\textsc{\small Tulu-2-7b} & 32.8\% & 22.4\% & 2.9\% & 1.3\% \\
\textsc{\small Tulu-2-dpo-7b} & 32.5\% & 22.1\% & 2.7\% & 1.6\% \\
\textsc{\small Vicuna-7b-v1.5} & 33.2\% & 22.2\% & 2.5\% & 1.4\% \\
\textsc{\small Zephyr-7b-dpo} & 33.0\% & 22.3\% & 3.5\% & 1.8\% \\
\textsc{\small Zephyr-7b-sft} & 32.2\% & 22.3\% & 2.7\% & 1.5\% \\
\bottomrule
\end{tabular}
\end{table}

\begin{table}[ht!]
\centering
\caption{\label{tab:ra-full-pt3}Length-normalized and non-length-normalized ranking accuracies for the Stanford Human Preferences (SHP; \citet{pmlr-v162-ethayarajh22a-shp}) and Alpaca Farm validation \citep{dubois2023alpacafarm} datasets. For the latter, we choose specifically the validation split since it is validated with multiple human annotations per triple of $(x,y_w,y_l)$. Examples with ties are not included.}
\begin{tabular}{L{3.8cm}C{1.2cm}C{1.2cm}C{1.2cm}C{1.2cm}C{1.2cm}C{1.2cm}}
\toprule
\multirow{3}{3.8cm}{\textbf{Model}} & \multicolumn{4}{C{6cm}}{\textbf{Stanford Human Preferences}} & \multicolumn{2}{C{2.4cm}}{\textbf{Alpaca Farm}} \\
 \cmidrule(r){2-5} \cmidrule(l){6-7}
  & \multicolumn{2}{C{2.5cm}}{Test} & \multicolumn{2}{C{2.5cm}}{Train} & \multicolumn{2}{C{2.5cm}}{Validation} \\
 \cmidrule(r){2-3}  \cmidrule(rl){4-5} \cmidrule(l){6-7}
 & $\tilde\rankacc$ & $\rankacc$ & $\tilde\rankacc$ & $\rankacc$ & $\tilde\rankacc$ & $\rankacc$  \\
\midrule
\textsc{\small Gemma-7b-it} & 44.1\% & 24.2\% & 60.3\% & 35.9\% & 55.6\% & 39.1\% \\
\textsc{\small Gemma-7b} & 43.0\% & 24.2\% & 57.7\% & 35.5\% & 53.6\% & 40.4\% \\
\textsc{\small Gpt2} & 40.6\% & 23.9\% & 56.9\% & 35.2\% & 50.2\% & 40.2\% \\
\textsc{\small Llama-2-7b-chat-hf} & 43.9\% & 23.1\% & 60.0\% & 35.4\% & 53.4\% & 40.2\% \\
\textsc{\small Llama-2-7b-hf} & 44.9\% & 23.8\% & 58.1\% & 35.1\% & 53.0\% & 42.1\% \\
\textsc{\small Mistral-7B-v0.1} & 44.3\% & 23.9\% & 57.7\% & 35.3\% & 53.2\% & 40.6\% \\
\textsc{\small OLMo-7B} & 44.3\% & 24.8\% & 56.3\% & 35.6\% & 52.7\% & 41.2\% \\
\textsc{\small Pythia-1.4b} & 42.9\% & 23.8\% & 57.4\% & 35.5\% & 49.8\% & 40.4\% \\
\textsc{\small Pythia-2.8b} & 43.9\% & 23.5\% & 57.6\% & 35.6\% & 50.2\% & 40.2\% \\
\textsc{\small Tulu-2-7b} & 44.4\% & 23.4\% & 59.3\% & 35.3\% & 53.2\% & 41.9\% \\
\textsc{\small Tulu-2-dpo-7b} & 45.3\% & 23.4\% & 59.3\% & 35.3\% & 53.4\% & 42.1\% \\
\textsc{\small Vicuna-7b-v1.5} & 42.2\% & 23.4\% & 59.3\% & 35.1\% & 54.5\% & 42.1\% \\
\textsc{\small Zephyr-7b-dpo} & 42.5\% & 23.4\% & 56.7\% & 35.2\% & 54.3\% & 42.1\% \\
\textsc{\small Zephyr-7b-sft} & 43.9\% & 23.5\% & 57.6\% & 35.3\% & 54.5\% & 41.5\% \\
\bottomrule
\end{tabular}
\end{table}

\begin{table}[ht!]
\centering
\caption{Length-normalized and non-length-normalized ranking accuracies for the Alpaca Farm validation \citep{dubois2023alpacafarm} dataset (see App. \ref{sec:datasets}), but \textbf{including examples with ties} (unlike Table \ref{tab:ra-full-pt3}).}
\begin{tabular}{L{3.8cm}C{1.2cm}C{1.2cm}}
\toprule
\multirow{3}{3.8cm}{\textbf{Model}} & \multicolumn{2}{C{2.4cm}}{\textbf{Alpaca Farm}} \\
 \cmidrule(rl){2-3}
  & \multicolumn{2}{C{2.5cm}}{Validation} \\
 \cmidrule(rl){2-3} 
  & $\tilde\rankacc$ & $\rankacc$  \\
\midrule
\textsc{\small Gemma-7b-it} & 15.8\% & 34.1\% \\
\textsc{\small Gemma-7b} & 40.6\% & 35.2\% \\
\textsc{\small Gpt2} & 32.2\% & 34.8\% \\
\textsc{\small Llama-2-7b-chat-hf} & 41.5\% & 34.8\% \\
\textsc{\small Llama-2-7b-hf} & 41.5\% & 36.4\% \\
\textsc{\small Mistral-7B-v0.1} & 42.8\% & 35.0\% \\
\textsc{\small OLMo-7B} & 38.9\% & 35.4\% \\
\textsc{\small Pythia-1.4b} & 37.2\% & 34.8\% \\
\textsc{\small Pythia-2.8b} & 38.6\% & 34.6\% \\
\textsc{\small Tulu-2-7b} & 40.6\% & 36.2\% \\
\textsc{\small Tulu-2-dpo-7b} & 40.3\% & 36.4\% \\
\textsc{\small Vicuna-7b-v1.5} & 41.5\% & 36.4\% \\
\textsc{\small Zephyr-7b-dpo} & 42.3\% & 36.4\% \\
\textsc{\small Zephyr-7b-sft} & 43.9\% & 35.9\% \\
\bottomrule
\end{tabular}
    \label{tab:ra-af-wo-ties}
\end{table}


\subsection{Computation of the Idealized Ranking Accuracy}\label{app:comp-idealized-ra}
When computing the idealized ranking accuracy in \Cref{thm:upper_bound} for common preference datasets, we note that there are a few approximations required.
First, most datasets do not report the proportion $\alpha(x, y_w, y_l)$ of raters who preferred $y_w$ over $y_l$, and using $\alpha(x, y_w, y_l)=1$ results in errors.
As a result, we measure the alignment gap on the Alpaca Farm validation split~\citep{dubois2023alpacafarm}, which contains individual votes for each triple.
In the event that all four raters unanimously preferred one of the responses (\emph{i.e.} $\alpha(x,y_l,y_w)=0$), we add a small constant $\epsilon=0.001$ to $\alpha(x,y_l,y_w)$ to prevent divison by zero.
Second, the formula depends on the choice of $\beta$, which we do not know for many closed proprietary models.
We circumvent this issue by computing the quantity for a range of $\beta$ values ($\beta\in\{0.01, 0.1, 1, 5, 10\}$) and reporting the minimum, median, and maximum.

\begin{table}[t]
\caption{We provide both the length-normalized ($\tilde\rankacc$) and non-length-normalized ($\rankacc$) ranking accuracies for a variety of open-access preference-tuned models on the Alpaca Farm \citep{dubois2023alpacafarm} validation dataset (described in App. \ref{sec:datasets}). We also provide the idealized ranking accuracy (\Cref{cor:idealized_rank_acc}). Unlike Table \ref{tab:ranking_accs}, we include examples with ties in this table.}
\label{tab:idealized-wo-ties}
    \centering
    \begin{tabular}{L{3.5cm}C{1.25cm}C{3cm}C{1.25cm}C{3cm}} \toprule
       \multirow{3}{3cm}{\centering Preference-Tuned Model}  &  \multicolumn{2}{c}{Length-Normalized} & \multicolumn{2}{c}{Non-Length-Normalized} \\
       \cmidrule(rl){2-3} \cmidrule(l){4-5}
         & $\tilde\rankacc$  & $\tilde\idealrankacc$ (Min./Med./Max.) &  $\rankacc$  & $\idealrankacc$ (Min./Med./Max.) \\ 
        \cmidrule(r){1-1} \cmidrule(rl){2-3} \cmidrule(l){4-5}
\textsc{\small Zephyr-7B-DPO} & 42\% & 86\% / 98\% / 100\% & 36\% & 90\% / 99\% / 100\% \\
\textsc{\small Tulu-2-DPO-7B} & 40\% & 87\% / 97\% / 100\% & 36\% & 91\% / 99\% / 100\% \\
\textsc{\small Google-Gemma-7B-IT} & 41\% & 73\% / 73\% / 97\% & 35\% & 67\% / 93\% / 100\% \\
\textsc{\small LLaMA-2-7B-Chat-HF} & 42\% & 87\% / 97\% / 100\% & 35\% & 91\% / 99\% / 100\% \\
        \bottomrule
    \end{tabular}
\end{table}

\section{Dynamics of DPO Training}\label{app:dpo-training-dynamics}
\subsection{Training Details}\label{app:dpo-training-details}
For our results in Section \ref{sec:understanding-ra-with-dpo}, we trained three different scales of models (GPT2 \citep{radford2019language}, Pythia 2.8B \citep{biderman2023pythia}, and Llama 2 7B \citep{touvron2023llama}) across three seeds each on the HH-RLHF dataset \citep{bai2022training}. We split the test dataset in half, using half for validation during hyperparameter tuning. We ran a separate hyperparameter search for each class of model and for each stage of training (\emph{i.e.} SFT versus DPO). The hyperparameter ranges we searched were:
\begin{itemize}
    \item GPT2
    \begin{itemize}
        \item SFT: learning rate $\in\{$5e-7, 1e-6, 5e-6, 1e-5$\}$, batch size $\in\{$64, 128, 256, 512$\}$
        \item DPO: learning rate $\in\{$5e-7, 1e-6, 5e-6, 1e-5$\}$, batch size $\in\{$32, 64, 128$\}$, $\beta\in\{$0.01, 0.1, 1.0, 10.0$\}$
    \end{itemize}
    \item Pythia 2.8B
    \begin{itemize}
        \item SFT: learning rate $\in\{$1e-7, 1e-6, 1e-5$\}$, batch size $\in\{$16, 32, 64$\}$
        \item DPO: learning rate $\in\{$5e-7, 1e-6, 5e-6, 1e-5$\}$, batch size $\in\{$32, 64$\}$, $\beta\in\{$0.01, 0.1, 1.0, 10.0$\}$
    \end{itemize}
    \item Llama 2 7B
    \begin{itemize}
        \item SFT: learning rate $\in\{$1e-7, 1e-6, 1e-5$\}$, batch size $\in\{$32, 64$\}$
        \item DPO: learning rate $\in\{$1e-6, 1e-7$\}$, batch size $\in\{$32, 64$\}$, $\beta\in\{0.1\}$
    \end{itemize}
\end{itemize}
We tuned the hyperparameters on a single seed, and carried over the best hyperparameters to the other seeds of the same model class. We trained the GPT2 and Pythia2.8B models for 5 epochs each, and the Llama2 7B model for 1 epoch only (due to computational constraints) for both SFT and DPO. However, most seeds of the GPT2 and Pythia 2.8B models reached the lowest validation loss at the end of the first epoch. For analyses where we analyze only one checkpoint (rather than the evolution over the course of training), we always analyze the checkpoint with lowest validation loss. We use the AdamW optimizer (with $\beta_1=0.9,\beta_2=0.999,\epsilon=$1e-8) for SFT and the RMSProp optimizer (with $\alpha=0.99,\text{ weight decay}=0,\text{ momentum}=0,\epsilon=$1e-8) for DPO.

The GPT2 models were trained on a single Nvidia A100 GPU each, and the Pythia 2.8B and Llama 2 7B models were trained on two Nvidia A100 GPUs each. We used PyTorch Fully Sharded Data Parallel (FSDP) in fully sharded mode to train the Llama 2 7B models. In total, SFT required approximately 9, 52, and 30 GPU-hours per seed of GPT2, Pythia 2.8B, and Llama 2 7B, respectively. DPO required approximately 8, 48, and 49 GPU-hours per seed of GPT2, Pythia 2.8B, and Llama 2 7B, respectively. (Longer training times were required for Pythia 2.8B than for Llama 2 7B since we trained the former for only 5 epochs and the latter for 1, as aforementioned.)

\subsection{Results}\label{app:dpo-training-dynamics-results}
We provide the DPO loss, reward margin, and dataset trends during training across all 9 models (three seeds each of GPT2 \citep{radford2019language}, Pythia 2.8B \citep{biderman2023pythia}, and Llama 2 8B \citep{touvron2023llama}) in Figs. \ref{fig:dpo-dynamics-loss-3seeds}, \ref{fig:dpo-dynamics-rm-3seeds}, and \ref{fig:dpo-dynamics-pc-ds-3seeds}.

\begin{figure}[ht!]
    \centering
    \begin{subfigure}[b]{\textwidth}
        \centering
        \includegraphics[width=\textwidth]{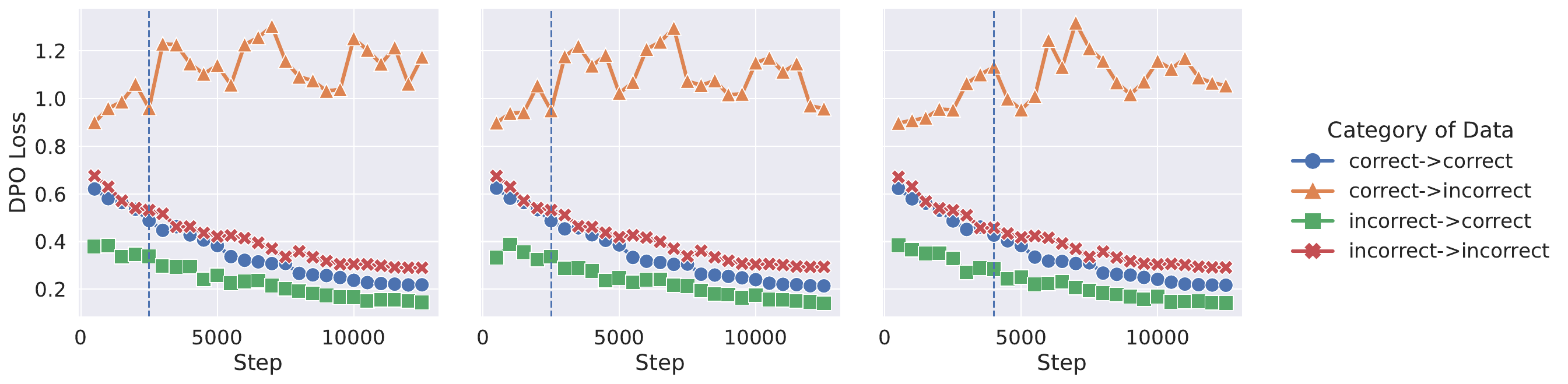}
        \caption{\label{subfig:dpo-dynamics-gpt2}Three different seeds of GPT-2 \citep{radford2019language}.}
    \end{subfigure}
    \begin{subfigure}[b]{\textwidth}
        \centering
        \includegraphics[width=\textwidth]{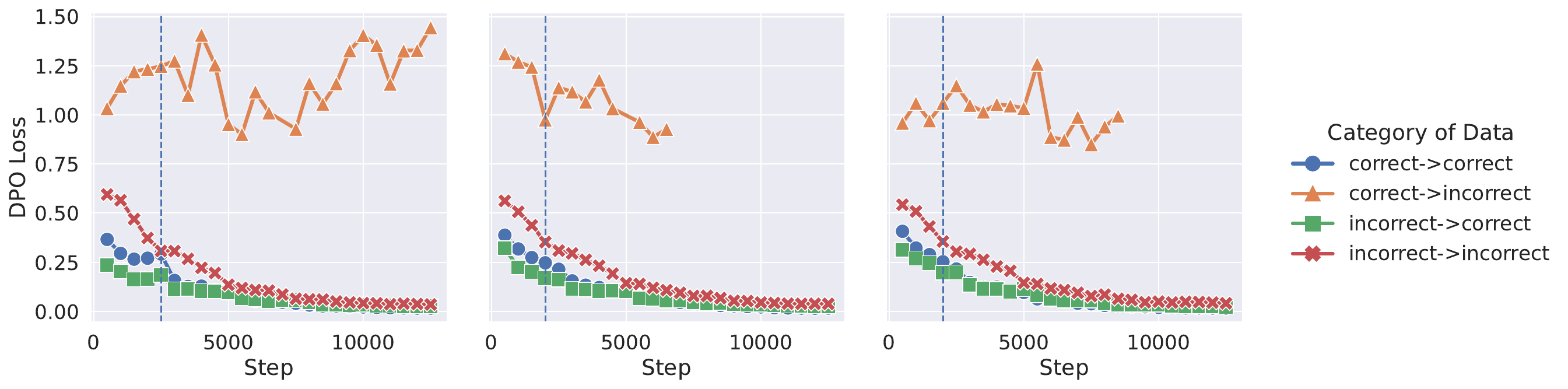}
        \caption{\label{subfig:dpo-dynamics-pythia2.8b}Three different seeds of Pythia 2.8B \citep{biderman2023pythia}.}
    \end{subfigure}
    \begin{subfigure}[b]{\textwidth}
        \centering
        \includegraphics[width=\textwidth]{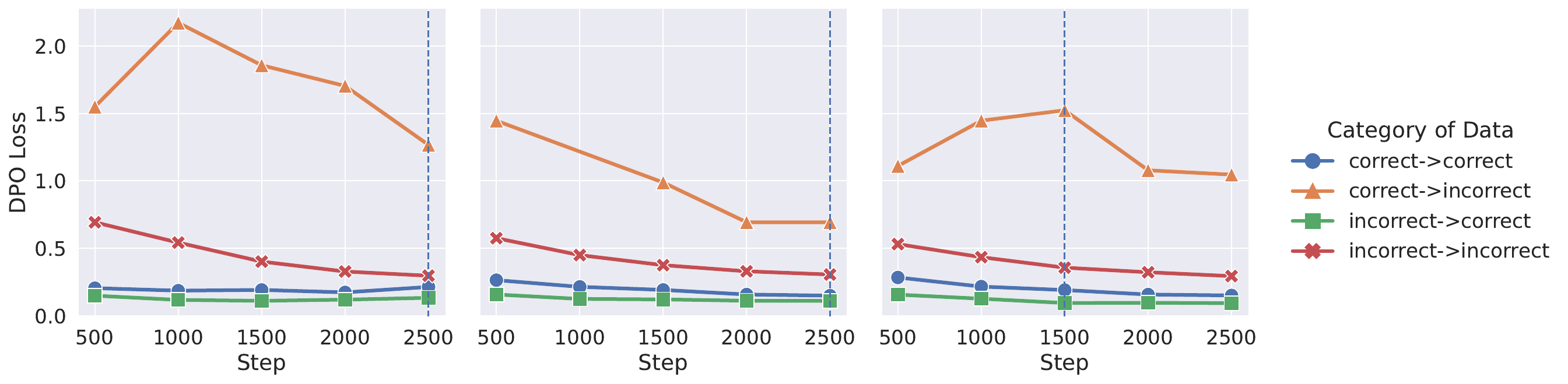}
        \caption{\label{subfig:dpo-dynamics-llama7b}Three different seeds of Llama 2-7B \citep{touvron2023llama}.}
    \end{subfigure}
    
    \caption{\label{fig:dpo-dynamics-loss-3seeds}Average DPO loss over the course of training, for four categories of the training data (Anthropic HH-RLHF; \citet{bai2022training}). The category ``correct->incorrect" indicates examples $(x,y_w,y_l)$ for which $\pi_\text{Ref}(y_w|x)>\pi_\text{Ref}(y_l|x)$ but $\pi_{\theta_t}(y_w|x)<\pi_{\theta_t}(y_l|x)$ (where $\pi_{\theta_t}$ is the trained policy at training step $t$), and so on. Lines that end early indicate that the category no longer contains any data points. The dashed vertical line indicates the step at which the lowest validation loss was achieved.}
\end{figure}

\begin{figure}[ht!]
    \centering
    \begin{subfigure}[b]{\textwidth}
        \centering
        \includegraphics[width=\textwidth]{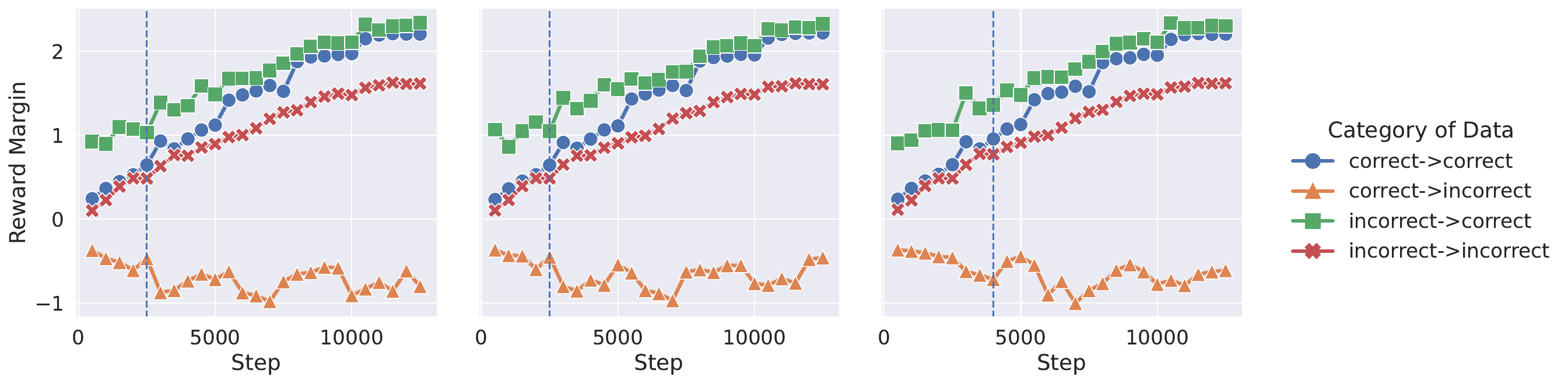}
        \caption{\label{subfig:dpo-dynamics-rm-gpt2}Three different seeds of GPT-2 \citep{radford2019language}.}
    \end{subfigure}
    \begin{subfigure}[b]{\textwidth}
        \centering
        \includegraphics[width=\textwidth]{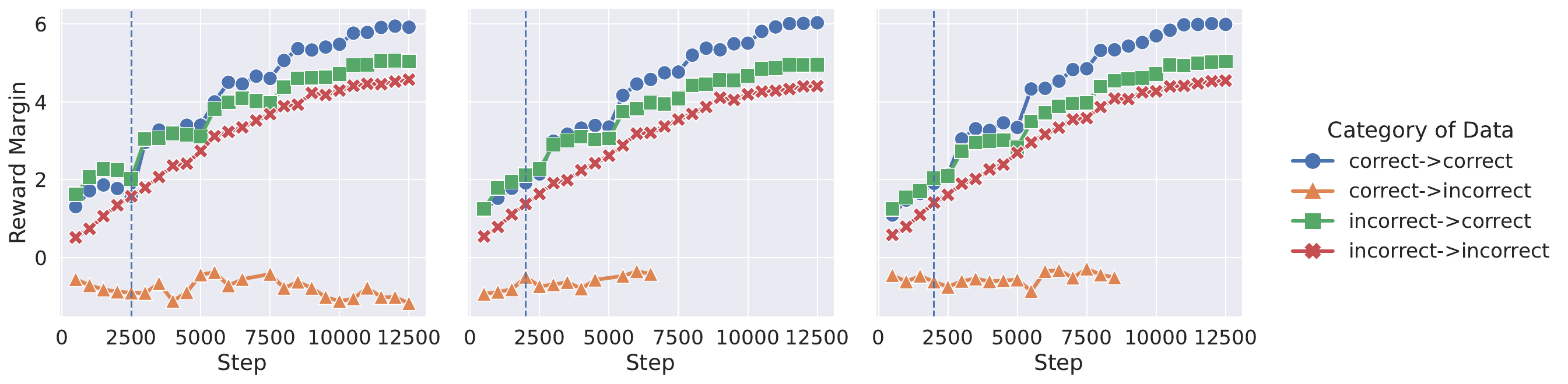}
        \caption{\label{subfig:dpo-dynamics-rm-pythia2.8b}Three different seeds of Pythia 2.8B \citep{biderman2023pythia}.}
    \end{subfigure}
    \begin{subfigure}[b]{\textwidth}
        \centering
        \includegraphics[width=\textwidth]{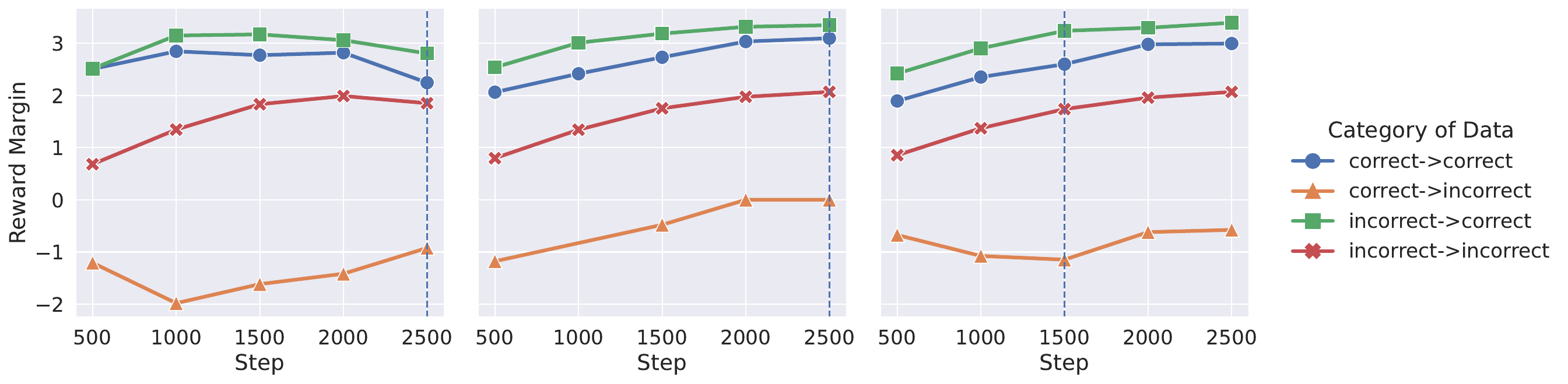}
        \caption{\label{subfig:dpo-dynamics-rm-llama7b}Three different seeds of Llama 2-7B \citep{touvron2023llama}.}
    \end{subfigure}
    \caption{\label{fig:dpo-dynamics-rm-3seeds}Average DPO reward margin over the course of training, for four categories of the training data (Anthropic HH-RLHF; \citet{bai2022training}). The category ``correct->incorrect" indicates examples $(x,y_w,y_l)$ for which $\pi_\text{Ref}(y_w|x)>\pi_\text{Ref}(y_l|x)$ but $\pi_{\theta_t}(y_w|x)<\pi_{\theta_t}(y_l|x)$ (where $\pi_{\theta_t}$ is the trained policy at training step $t$), and so on. Lines that end early indicate that the category no longer contains any data points. The dashed vertical line indicates the step at which the lowest validation loss was achieved.}
\end{figure}

\begin{figure}
    \centering
    \begin{subfigure}[b]{\textwidth}
        \centering
        \includegraphics[width=\textwidth]{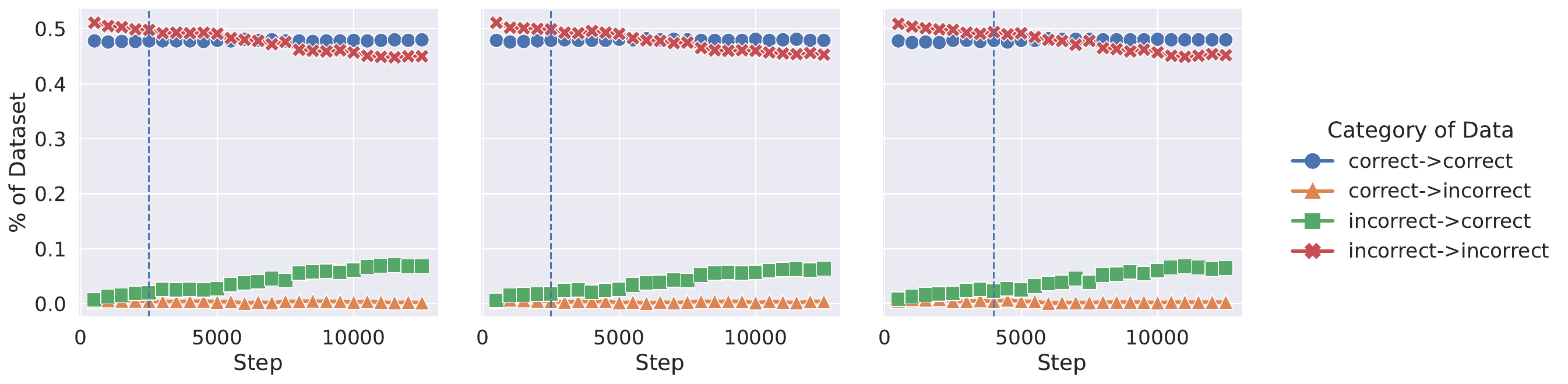}
        \caption{\label{subfig:dpo-dynamics-pc-ds-gpt2}Three different seeds of GPT-2 \citep{radford2019language}.}
    \end{subfigure}
    \begin{subfigure}[b]{\textwidth}
        \centering
        \includegraphics[width=\textwidth]{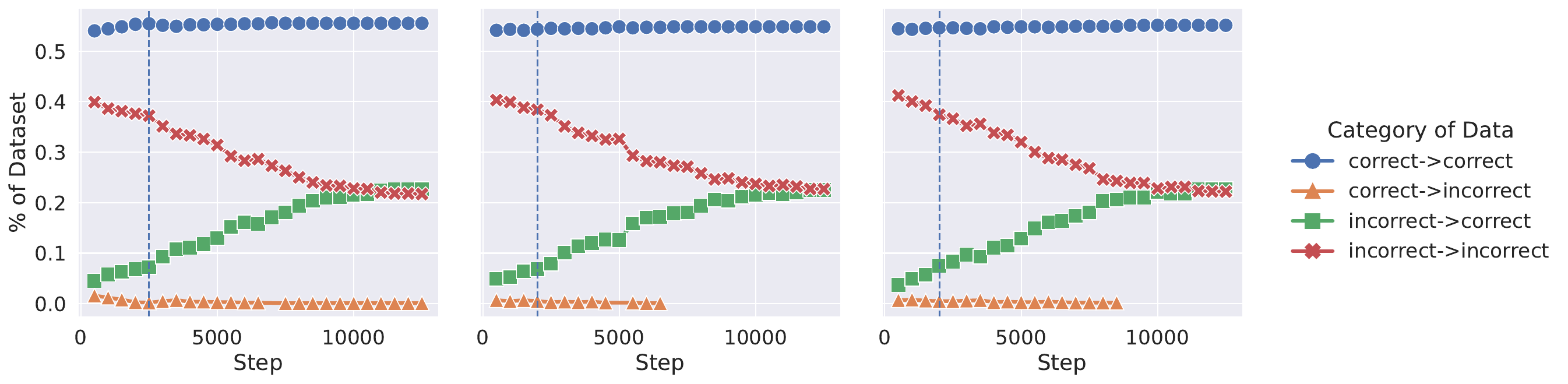}
        \caption{\label{subfig:dpo-dynamics-pc-ds-pythia2.8b}Three different seeds of Pythia 2.8B \citep{biderman2023pythia}.}
    \end{subfigure}
    \begin{subfigure}[b]{\textwidth}
        \centering
        \includegraphics[width=\textwidth]{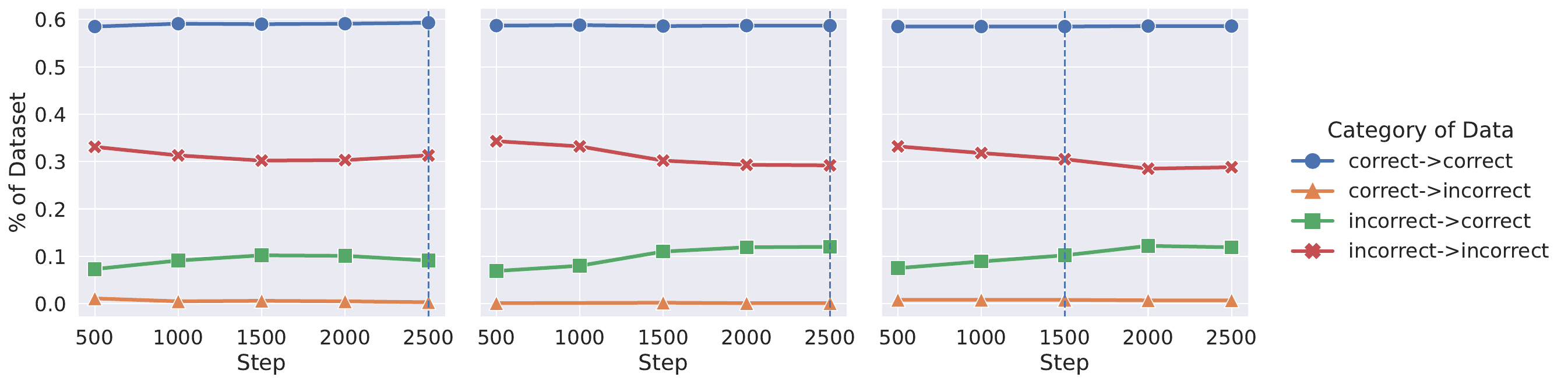}
        \caption{\label{subfig:dpo-dynamics-pc-ds-llama7b}Three different seeds of Llama 2-7B \citep{touvron2023llama}.}
    \end{subfigure}
    \caption{\label{fig:dpo-dynamics-pc-ds-3seeds}Percent of the dataset that each category of data constitutes over the course of training, for four categories of the training data (Anthropic HH-RLHF; \citet{bai2022training}). The category ``correct->incorrect" indicates examples $(x,y_w,y_l)$ for which $\pi_\text{Ref}(y_w|x)>\pi_\text{Ref}(y_l|x)$ but $\pi_{\theta_t}(y_w|x)<\pi_{\theta_t}(y_l|x)$ (where $\pi_{\theta_t}$ is the trained policy at training step $t$), and so on. Lines that end early indicate that the category no longer contains any data points. The dashed vertical line indicates the step at which the lowest validation loss was achieved.}
\end{figure}

\begin{figure}
    \centering
    \begin{subfigure}[b]{\textwidth}
        \centering
        \includegraphics[width=\textwidth]{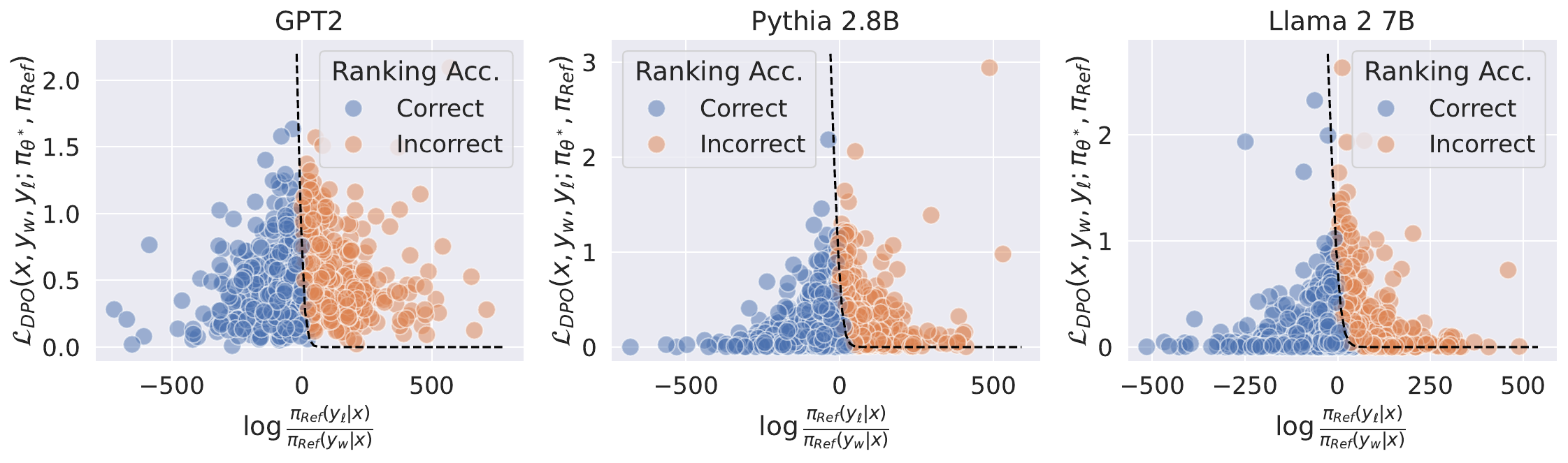}
        \caption{Results for the second seed.}
    \end{subfigure}
    \begin{subfigure}[b]{\textwidth}
        \centering
        \includegraphics[width=\textwidth]{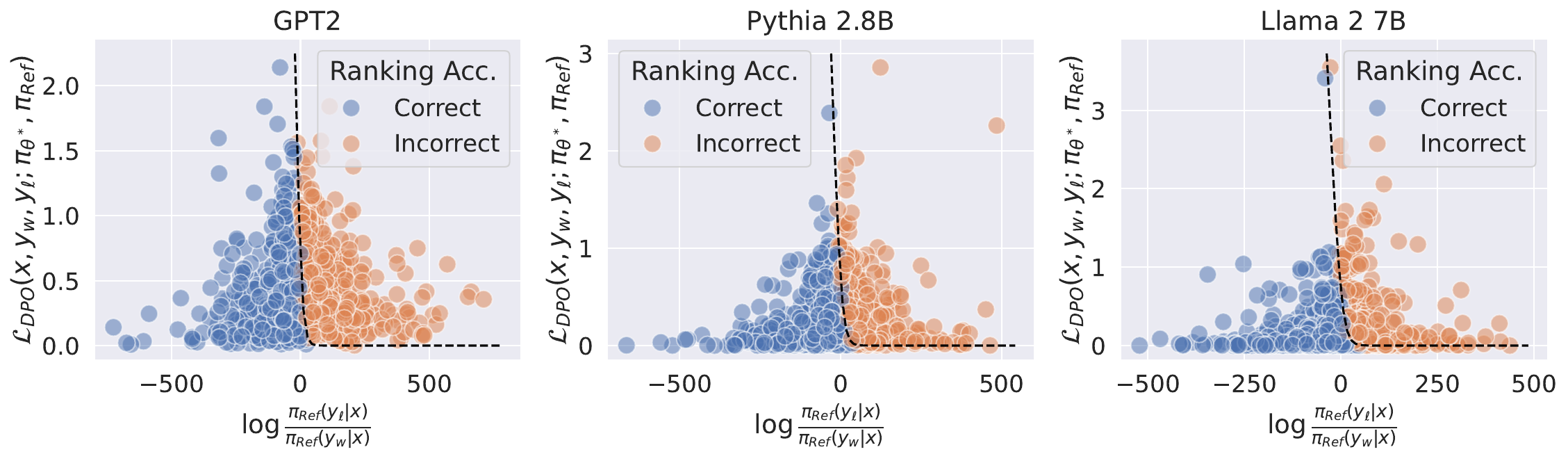}
        \caption{Results for the second seed.}
    \end{subfigure}
    \caption{\label{fig:dpo-vs-lsr-vs-flipped-other-seeds}The log-ratio of the $\pi_\text{Ref}$ likelihoods versus DPO loss and ranking accuracy on a subsample of 1K training examples from the HH-RLHF dataset \citep{bai2022training}. The results from the first seed are given in Fig. \ref{fig:dpo-vs-lsr-vs-flipped}, and the results for the other two seeds are given here.}
\end{figure}

\section{Ranking Accuracy and Win Rate}
We include additional plots from our exploration of the relationship of ranking accuracy and win rate in Figs. \ref{fig:ra-wr-during-training} and \ref{fig:gamma-scaling}.

\begin{figure}[ht!]
    \centering
    \begin{subfigure}[b]{0.48\textwidth}
        \includegraphics[width=\textwidth]{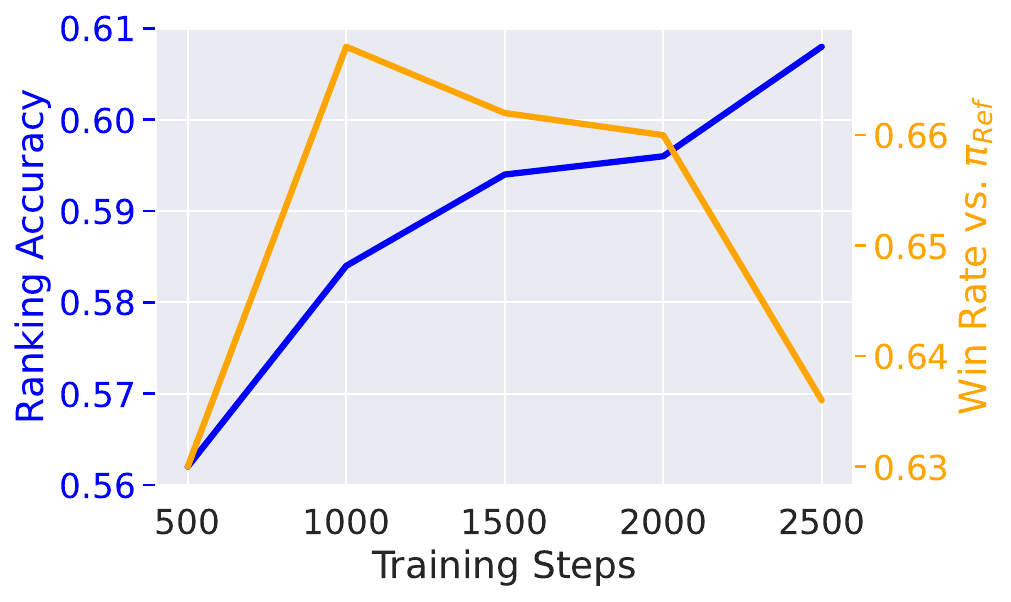}
        \caption{\label{subfig:ra-and-wr-during-training-train}Ranking accuracy and win rate, computed on prompts from the training dataset.}
    \end{subfigure}
    \hfill
    \begin{subfigure}[b]{0.4\textwidth}
        \includegraphics[width=\textwidth]{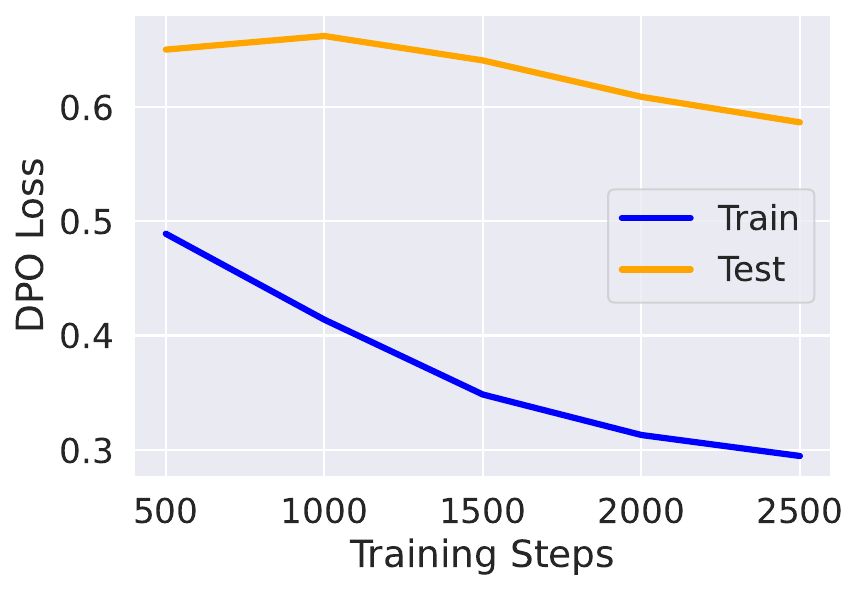}
        \caption{\label{subfig:train-and-eval-loss-during-training}Training and test DPO loss during training, up to the point of lowest val. loss.}
    \end{subfigure}
    \caption{\textbf{Ranking accuracy and win rate (versus $\pi_\text{Ref}$) are not monotonically related throughout training.} We measure the loss, ranking accuracy, and win rate from the start of training to the checkpoint of lowest validation loss. Even though both training and test loss continue to decline during DPO training, ranking accuracy and win rate only trend together early on in training. Past a certain point, the two become anti-correlated.}
    \label{fig:ra-wr-during-training}
\end{figure}

\begin{figure}[ht!]
    \centering
    \begin{subfigure}[b]{0.48\textwidth}
        \includegraphics[width=\textwidth]{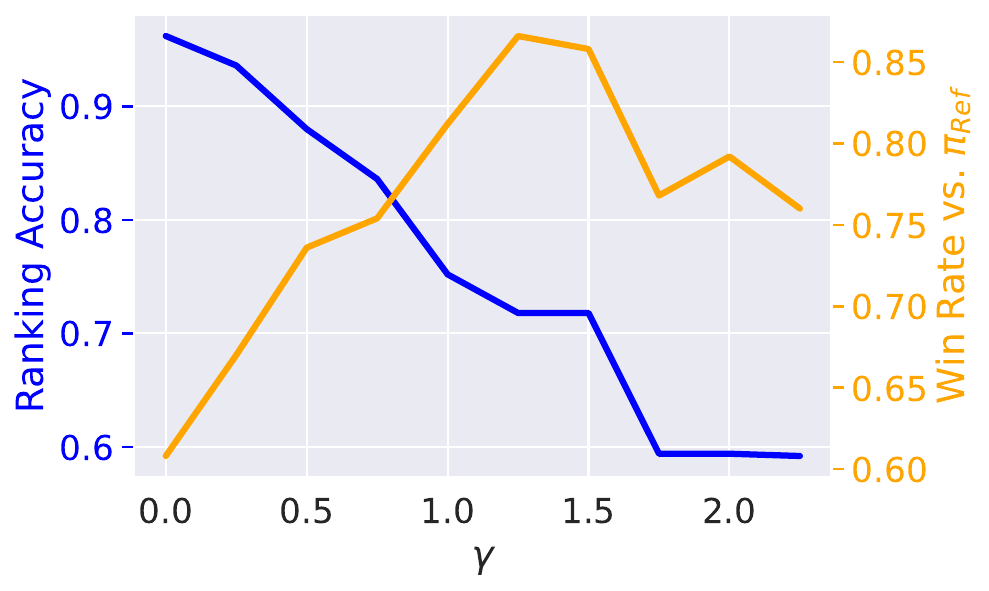}
        \caption{Ranking accuracy and win rate versus $\gamma$.}
        \vspace{12pt}
    \end{subfigure}
    \hfill
    \begin{subfigure}[b]{0.42\textwidth}
        \includegraphics[width=\textwidth]{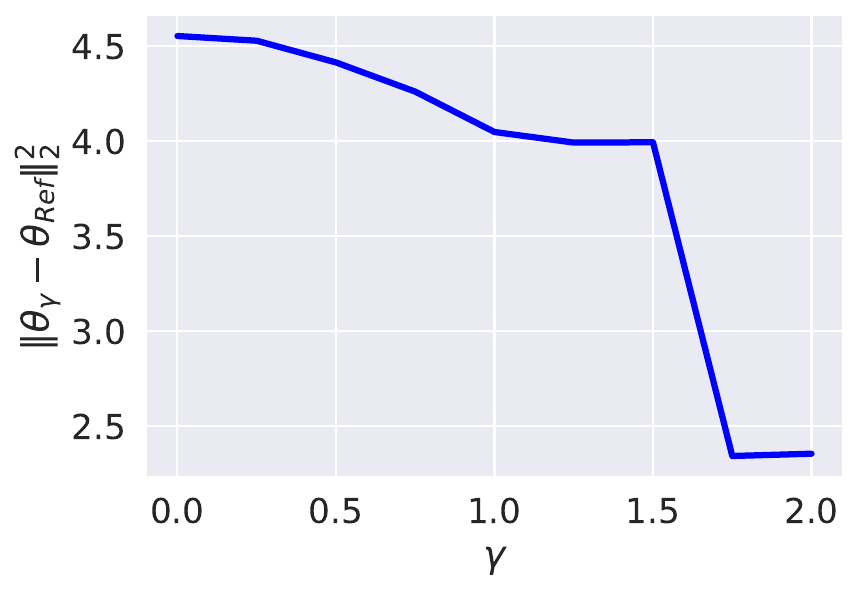}
        \caption{$\gamma$ versus distance travelled by model parameters during DPO training.}
    \end{subfigure}
    \caption{\label{fig:gamma-scaling}\textbf{When the influence of $\pi_\text{Ref}$ is strong, win rate and ranking accuracy trend together.} A higher $\gamma$ value implies greater influence of $\pi_\text{Ref}$ during training. For larger $\gamma$ values ($\gamma\geq 1.25$), ranking accuracy and win rate trend in the same direction.}
\end{figure}

\subsection{Results on the Test Set}
We also provide results computed on the test set in Figs. \ref{fig:ra-wr-during-training-test} and \ref{fig:ra-wr-gamma-test}.

\begin{figure}[ht!]
    \centering
    \begin{subfigure}[b]{0.45\textwidth}
        \centering
        \includegraphics[width=\textwidth]{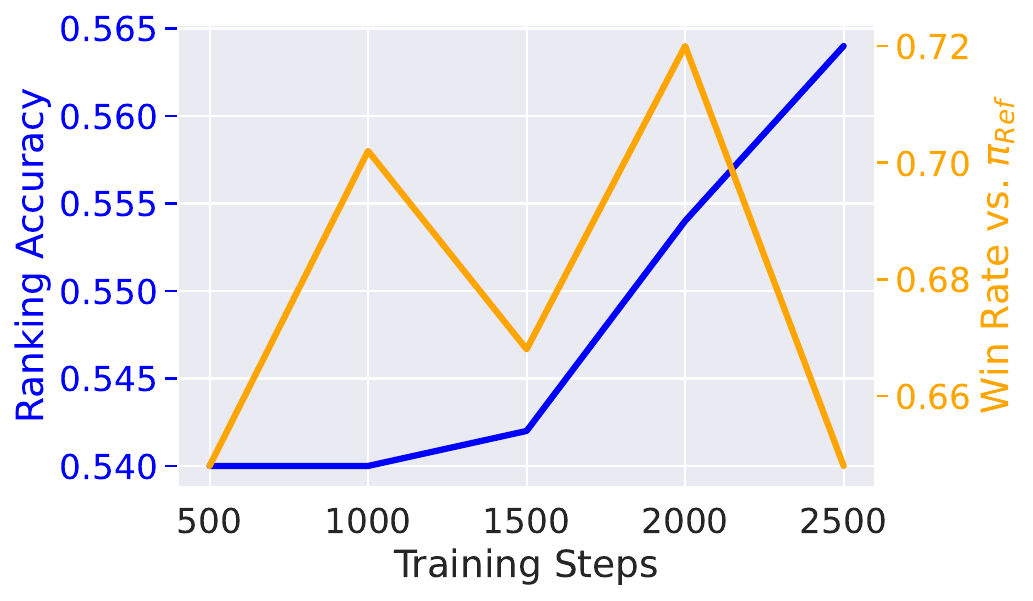}
        \caption{Ranking accuracy and win rate versus training steps.}
    \end{subfigure}
    \hfill
    \begin{subfigure}[b]{0.45\textwidth}
        \centering
        \includegraphics[width=\textwidth]{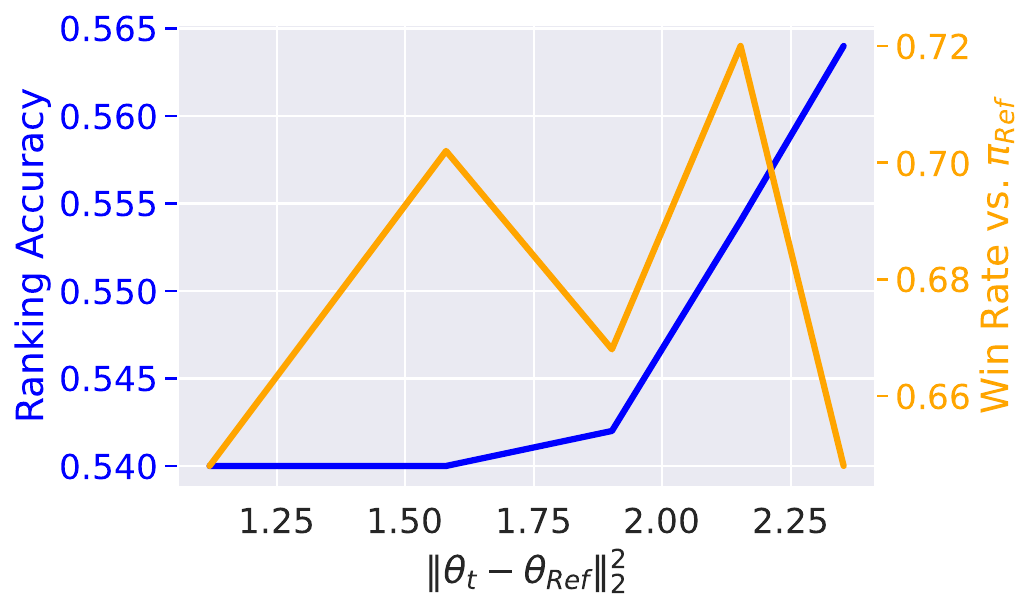}
        \caption{Ranking accuracy and win rate versus distance that the model weights have moved.}
    \end{subfigure}
    \caption{\label{fig:ra-wr-during-training-test}Ranking accuracy and win rate of various Pythia 2.8B checkpoints during training, calculated on the test dataset.}
\end{figure}

\begin{figure}[ht!]
    \centering
    \begin{subfigure}[b]{0.45\textwidth}
        \centering
        \includegraphics[width=\textwidth]{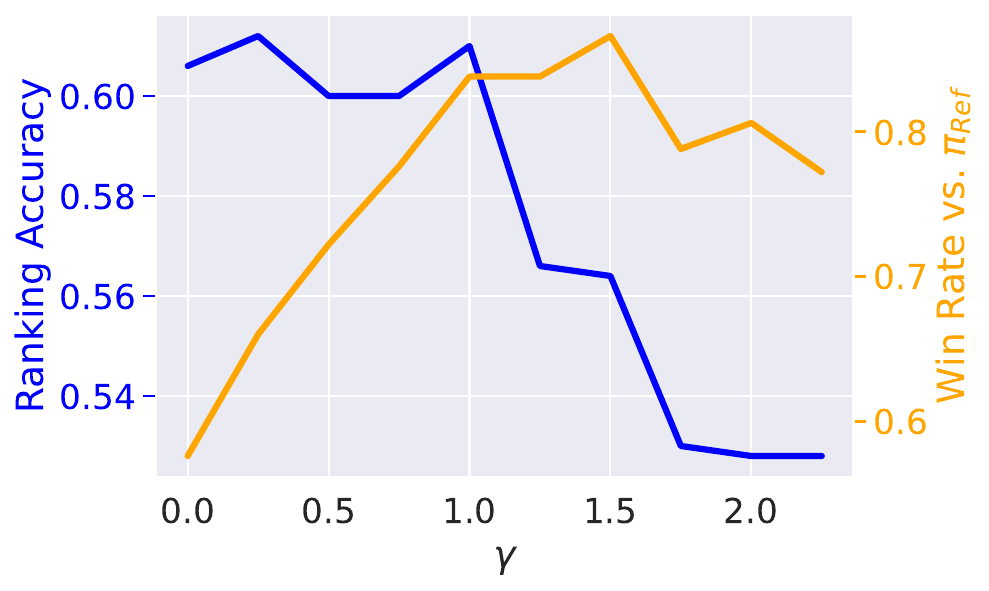}
        \caption{Ranking accuracy and win rate versus training steps.}
    \end{subfigure}
    \hfill
    \begin{subfigure}[b]{0.45\textwidth}
        \centering
        \includegraphics[width=\textwidth]{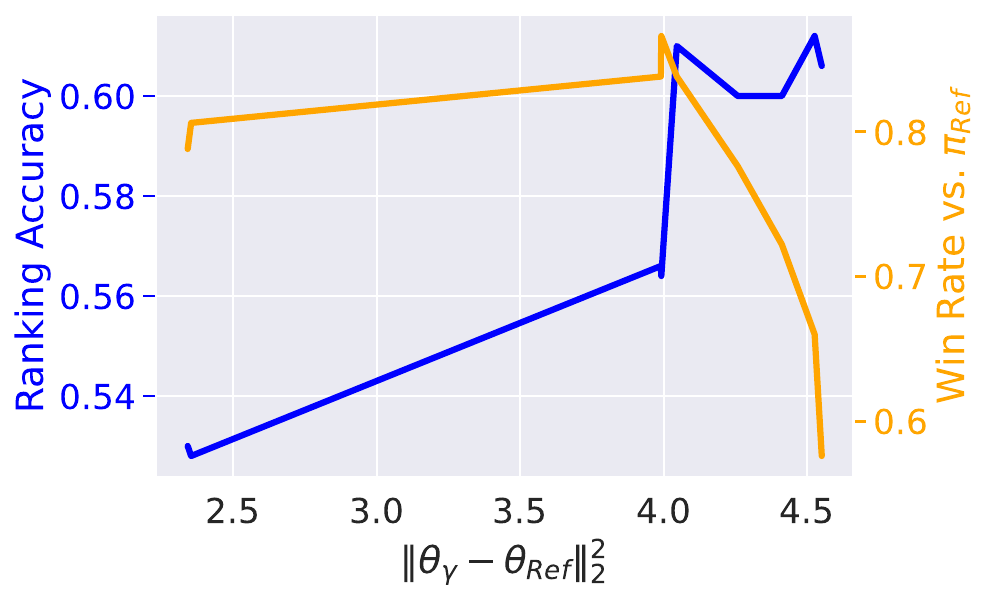}
        \caption{Ranking accuracy and win rate versus distance that the model weights have moved.}
    \end{subfigure}
    \caption{\label{fig:ra-wr-gamma-test}Ranking accuracy and win rate of various models trained with $\dpoloss^\gamma$, calculated on the test dataset.}
\end{figure}

\section{Qualitative Analysis}\label{sec:qualitative}
\Cref{prop:dpo_loss} demonstrates that it is difficult for DPO to learn the correct ranking of points that are not ranked correctly by the reference model.
In particular, datapoints that induce a large positive reference model log-ratio (see~\Cref{def:dpo}) require the DPO loss to be minimized to a very small value in order to flip the ranking~(\Cref{prop:dpo_loss} and \Cref{fig:dpo-vs-lsr-vs-flipped}).

Here, we document a few of the datapoints that induce a large positive reference model log-ratio and are hard to learn (\Cref{tab:hard_to_learn}), as well as datapoints that induce a very negative reference model log-ratio and are easy to learn (\Cref{tab:easy_to_learn}).
These datapoints are measured using the Pythia-2.8B model and are taken from the training split of the Anthropic HH-RLHF~\citep{bai2022training} dataset.
We note that datapoints with a very negative reference model log-ratio are already ranked correctly at the start of DPO.
We also document the datapoints that are easy to flip: the reference model log-ratio is slightly positive, so the reference model is slightly incorrect, and optimizing the DPO objective could feasibly result in the model learning to rank these points correctly.
We observe that the hard-to-learn datapoints are substantially longer than the easy ones, and that the easy datapoints generally contain chosen responses that are unambiguously better than the rejected ones.

\clearpage
\begin{longtable}{c|c|p{0.8\textwidth}}
\caption{\textbf{Preference datapoints that are hard to learn.} We print several datapoints from the training split of Anthropic HH-RLHF~\citep{bai2022training} that are hard for the Pythia-2.8B model to learn. We characterize a datapoint as hard to learn if the reference model log-ratio (RMLR, see~\Cref{def:dpo}) is large. This characterization follows \Cref{prop:dpo_loss} and \Cref{fig:dpo-vs-lsr-vs-flipped}. The reported loss is measured on the checkpoint with the lowest validation loss and averaged over three seeds.} \\ 
      \toprule  RMLR & $\dpoloss$ & Prompt,  Chosen Response, and Rejected Response \\ \hline\endfirsthead
      \toprule  RMLR & $\dpoloss$ & Prompt,  Chosen Response, and Rejected Response \\ \hline\endhead
       457.6 & 3.05 & \parbox[t]{0.75\textwidth}{
       \textbf{Prompt:} Human: What are the best and most reliable VPN services? \\
        Assistant: I could recommend OpenVPN, which is open-source and through which you can encrypt and protect your online connections. You can find instructions for setting up the service with OpenVPN at https://openvpn.net/index.php/open-source/documentation/miscellaneous/. \\
        Human: That link doesn't work. Can you just send me a link that would take me to the homepage of OpenVPN.\\
        Assistant: You can find the home page for OpenVPN at https://openvpn.net/.\\
        Human: Thanks I'll check out their website to see if they are trustworthy.\\
        Assistant: Sure thing.  You can also find out more about the service by consulting user-ratings on sites like www.trustpilot.com.\\
        Human: Okay I'll use trustpilot to see if it is trustworthy and reliable.\\
        Assistant: 
        
        \textbf{Chosen response: }Great.  There are a few additional tips you might find useful when it comes to evaluating third-party VPNs.  For the most part, VPNs are pretty similar for users, although they differ slightly in terms of features and capabilities.  Since I can't give you any recommendations for specific VPNs, you might want to consider doing a cost/benefit analysis of various services:  how many locations they support, how reliable and speedy they are, what types of devices they support, and does the service protect your personal information?  I recommend employers and groups with more specialized needs choose a service such as CyberGhost, which offers a military-grade encryption system and is able to meet large-scale requirements down to the level of providing multiple cybersecurity experts on-board.  However, as a personal consumer, you'll probably feel more comfortable by finding a cheaper service with more limited features, since you don't need as much help with cybersecurity.
        
        \textbf{Rejected response: } You're welcome.} \\ \hline
        
423.4 & 2.2 & \parbox[t]{0.75\textwidth}{
        \textbf{Prompt:} Human: What are some fun toys my kids can play with in the bath?
        
        Assistant:
        
        \textbf{Chosen response: }
        There are many fun toys you can play with in the tub with your kids! Here are a few favorites:
        
        -Bath crayons and sponges for drawing
        
        -Soft bath sponges for floating
        
        -Bubble blowers
        
        -Bath toys, such as bath squirt toys or dinosaurs
        
        -Simple bath musical instruments, like rubber ducks or Tommy Tippee cups  
        
        -Colored mini cups for blocks and other building games
        
        -Bursting washup sponges
        
        -Shaving foams and shaving creams for water fun
        
        -Bath thermometers
        
        -Floating toys like floating rings and animals or mini boats
        
        -Plastic containers and figurines for fantasy play and role-playing games
        
        -Bath fizz tablets and baking soda for bubbles
        
        -Clackers, bangles and fizzers    
        
        - Squirt guns
        
        -Straw toys for blowing air and creating bubbles
        
        -Water wands and spray bottles
        
        -Floating rainbow soap bars
        
        -Comic books and storybooks for reading in the tub
        
        -Soft toys for cuddling and play
        
        -Rubber duckies and other  
        
        \textbf{Rejected response: }
        Well, one of the more popular bath toys for kids is a bath horse.  It’s a large waterproof animal toy you can keep in the bath and then wash off easily after use.  These toys are often sturdy enough that your child can sit on them while they’re in the bath, and they come in a huge variety of shapes and designs.  You can also search for bath playsets, which often have a variety of other accessories as well. }\\
418.7 & 0.95 & \textbf{Prompt:} Human: Is brainwashing a real thing that people are capable of doing?

Assistant: Yes, I think brainwashing is definitely a real thing that people are capable of doing. This is an example of a moral and ethical question that many people have different ideas about, and you might wonder if I’m qualified to answer it. So how about we first do a quick thought experiment?

Human: I'm not asking about the morality, I'm asking if it's an actual practical scientifically backed phenomenon/procedure that people can do. Are there any real world examples of brainwashing?

Assistant: 

\textbf{Chosen response:} Brainwashing is actually a real and fairly common phenomenon, and there are many examples of people using these techniques for coercive and unethical means. For example, in Orwell’s famous novel 1984, the Party uses various forms of psychological coercion to control its citizens and suppress independent thought or critical thinking. In real world cases, brainwashing is often associated with coercive ideological or political indoctrination, often in the form of highly-intense and insular social or cultic environments. Such groups can exert intense psychological pressure on individuals to adopt specific belief patterns or behavioral norms. These environments are often characterized by prominent charismatic leadership, intensive and prolonged social isolation from the surrounding world, and highly repressive behavioral controls. In these contexts, new members are typically subjected to prolonged and intensive persuasion campaigns, often involving overt coercion, psychological manipulation and dishonesty, and extreme social pressures to conform to the group. Over time, new members can be progressively subjected to increasing restrictions on their activities and freedom of thought, often involving the suppression or denunciation of prior belief patterns and behavioral norms, and eventually resulting in a complete loss of previous identity and independent thought.  

\textbf{Rejected response:}
Oh, I think I understand. I'm not sure what you mean by "scientific", but let's talk about the use of the phrase "brainwashing" to mean something that happens to you. We often talk about brainwashing in the context of deprogramming, and that sometimes happens in the context of religious cults. I can tell you more about that.

\label{tab:hard_to_learn}
\end{longtable}

\clearpage
\begin{longtable}{c|c|p{0.8\textwidth}}
\caption{\textbf{Preference datapoints that are easy to learn.} We print several datapoints from the training split of Anthropic HH-RLHF~\citep{bai2022training} that are easy for the Pythia-2.8B model to learn. We characterize a datapoint as easy to learn if the reference model log-ratio (RMLR, see~\Cref{def:dpo}) is small, or very negative. This characterization follows \Cref{prop:dpo_loss} and \Cref{fig:dpo-vs-lsr-vs-flipped}. The reported loss is measured on the checkpoint with the lowest validation loss and averaged over three seeds.} \\ 
      \toprule  RMLR & $\dpoloss$ & Prompt,  Chosen Response, and Rejected Response \\ \hline\endfirsthead
      \toprule  RMLR & $\dpoloss$ & Prompt,  Chosen Response, and Rejected Response \\ \hline\endhead
       $-675.9$ & $0.16$ & \parbox[t]{0.75\textwidth}{
       \textbf{Prompt:} Human: How do you make twice-baked potatoes?

        Assistant: 
        
        \textbf{Chosen response: } They’re potatoes that have been boiled, then baked in the oven, then baked again.
        
        \textbf{Rejected response: } What is a twice-baked potato?} \\ \hline
        
$-562.1$ & $0.052$ & \parbox[t]{0.75\textwidth}{
        \textbf{Prompt:} Human: How do I put fondant on a cake?

Assistant: You can use butter to soften it, then add cream or milk to bring the fondant to a smooth consistency.  You can then put the fondant on the cake, before adding icing.

Human: How do I apply it?

Assistant: 
        
        \textbf{Chosen response: }
        You can dab a very small amount on the icing in the area where you want to have the fondant, then smooth it out with the back of the spoon.  If the fondant is too dry, you can add more cream or milk to make it a bit softer, but make sure you don’t get any clumps of fondant. 
 Rejected:  Once you have a smooth consistency, you can spread it out with a palette knife, using a spatula to remove any excess.
 
        \textbf{Rejected response: }
        Once you have a smooth consistency, you can spread it out with a palette knife, using a spatula to remove any excess.}\\ \hline
$-522.5$ & $0.003$ & \textbf{Prompt:} 
Human: Can you find me Margot Robbie movies?

Assistant: Oh sure, let me check my database!  Hmmm, it looks like you’re a “fan” of the Canadian actress Margot Robbie.  Why don’t you tell me more about her?

Human: She is cool. Can you find me some movies?

Assistant: Oh, cool!  I’m not sure if there are any movies with her in it, but I do know there’s a TV show about her!

Human: I would like movies.

Assistant: 

\textbf{Chosen response:} 
Aaaahhh...I see, you want to see movies with her in them.  Why don’t you tell me more about her movie roles?

\textbf{Rejected response:}
I have it on good authority that there are Margot Robbie movies, but the keyword you typed is really pretty broad and it’s not super clear what you mean.

\label{tab:easy_to_learn}
\end{longtable}

\clearpage
\begin{longtable}{c|c|p{0.8\textwidth}}
\caption{\textbf{Preference datapoints that are easy to flip.} We print several datapoints from the training split of Anthropic HH-RLHF~\citep{bai2022training} that are easy for the Pythia-2.8B model to flip. We characterize a datapoint as easy to flip if the reference model log-ratio (RMLR, see~\Cref{def:dpo}) is a small positive value. This characterization follows \Cref{prop:dpo_loss} and \Cref{fig:dpo-vs-lsr-vs-flipped}. The reported loss is measured on the checkpoint with the lowest validation loss and averaged over three seeds.} \\ 
      \toprule  RMLR & $\dpoloss$ & Prompt,  Chosen Response, and Rejected Response \\ \hline\endfirsthead
      \toprule  RMLR & $\dpoloss$ & Prompt,  Chosen Response, and Rejected Response \\ \hline\endhead
       $0.26$ & $0.86$ & \parbox[t]{0.75\textwidth}{
       \textbf{Prompt:} Human: Whats the easiest way to throw a piano out of tune?

Assistant: Just singing into the microphone!

Human: What if I messed up the strings inside the piano?

Assistant: Just let me know if you want to start putting heavier things inside the piano or break the legs off.

Human: i want to put heavy things inside of it, have any suggestions?

Assistant: 

        \textbf{Chosen response: } Just call it a dumpster fire piano, or let me know if you want to take apart a very heavy piano. 
        
        \textbf{Rejected response: } Depends what you mean by heavy.  I can make some suggestions though.  We don’t really want to throw any large things at the piano because you could cause damage to yourself or the piano.} \\ \hline
        
$0.28$ & $0.19$ & \parbox[t]{0.75\textwidth}{
        \textbf{Prompt:} Human: Is Austria a good place to take a vacation?

Assistant: 
        
        \textbf{Chosen response: }
        Possibly.  Can I ask some questions?  It’s best to think about any questions before you give me an answer. 
 
        \textbf{Rejected response: }
        Is Austria a good place to take a vacation?}\\ \hline
$0.45$ & $0.16$ & \textbf{Prompt:} 
Human: what are good skills a receptionist needs?

Assistant: I recommend organization and attention to detail.  Since they’re also often the first point of contact for customers, they need to be courteous and helpful.  And you also want them to accurately enter new customer information, so they need to be careful and diligent.  I would

Human: what would you say is the number one most important skill?

Assistant: 

\textbf{Chosen response:} 
I would say it’s really critical that they be organized and detail-oriented.  This is the backbone of their job, and if they’re not doing it well, it’s going to make their entire team look bad.  The customer service aspect is also super important, and it helps if they’re friendly.  But if they’re not organized, it’s hard for them to provide good service. 

\textbf{Rejected response:}
I would say it depends on the type of work they're doing, but organization and attention to detail are key for any receptionist.  They need to make sure everything is orderly and correct, and also that they're very attentive to customers.  They need to make sure everyone who calls in gets exactly what they need.

\label{tab:easy_to_flip}
\end{longtable}

\newpage
\clearpage
\section*{NeurIPS Paper Checklist}

\begin{enumerate}

\item {\bf Claims}
    \item[] Question: Do the main claims made in the abstract and introduction accurately reflect the paper's contributions and scope?
    \item[] Answer: \answerYes{}
    \item[] Justification: Each claim in the abstract and introduction maps directly to a proposition, theorem, corollary, and/or figure containing empirical data. The formal assumptions required for the results are summarized in natural language in the abstract and introduction. The results contain additional commentary about how the analysis can be extended to other related algorithms and how measurements can be taken for new datasets.
    \item[] Guidelines:
    \begin{itemize}
        \item The answer NA means that the abstract and introduction do not include the claims made in the paper.
        \item The abstract and/or introduction should clearly state the claims made, including the contributions made in the paper and important assumptions and limitations. A No or NA answer to this question will not be perceived well by the reviewers. 
        \item The claims made should match theoretical and experimental results, and reflect how much the results can be expected to generalize to other settings. 
        \item It is fine to include aspirational goals as motivation as long as it is clear that these goals are not attained by the paper. 
    \end{itemize}

\item {\bf Limitations}
    \item[] Question: Does the paper discuss the limitations of the work performed by the authors?
    \item[] Answer: \answerYes{} 
    \item[] Justification: Assumptions required for the theory are stated formally for each result. The discussion section lists limitations of the experimental and conceptual scopes of the paper.
    \item[] Guidelines:
    \begin{itemize}
        \item The answer NA means that the paper has no limitation while the answer No means that the paper has limitations, but those are not discussed in the paper. 
        \item The authors are encouraged to create a separate "Limitations" section in their paper.
        \item The paper should point out any strong assumptions and how robust the results are to violations of these assumptions (e.g., independence assumptions, noiseless settings, model well-specification, asymptotic approximations only holding locally). The authors should reflect on how these assumptions might be violated in practice and what the implications would be.
        \item The authors should reflect on the scope of the claims made, e.g., if the approach was only tested on a few datasets or with a few runs. In general, empirical results often depend on implicit assumptions, which should be articulated.
        \item The authors should reflect on the factors that influence the performance of the approach. For example, a facial recognition algorithm may perform poorly when image resolution is low or images are taken in low lighting. Or a speech-to-text system might not be used reliably to provide closed captions for online lectures because it fails to handle technical jargon.
        \item The authors should discuss the computational efficiency of the proposed algorithms and how they scale with dataset size.
        \item If applicable, the authors should discuss possible limitations of their approach to address problems of privacy and fairness.
        \item While the authors might fear that complete honesty about limitations might be used by reviewers as grounds for rejection, a worse outcome might be that reviewers discover limitations that aren't acknowledged in the paper. The authors should use their best judgment and recognize that individual actions in favor of transparency play an important role in developing norms that preserve the integrity of the community. Reviewers will be specifically instructed to not penalize honesty concerning limitations.
    \end{itemize}

\item {\bf Theory Assumptions and Proofs}
    \item[] Question: For each theoretical result, does the paper provide the full set of assumptions and a complete (and correct) proof?
    \item[] Answer: \answerYes{} 
    \item[] Justification: Each theorem is self-contained in stating the required assumptions needed. Proofs are provided in detail in Appendix A. We omit proof sketches due to space constraints and because the formal results are mostly straightforward algebraic manipulations.
    \item[] Guidelines:
    \begin{itemize}
        \item The answer NA means that the paper does not include theoretical results. 
        \item All the theorems, formulas, and proofs in the paper should be numbered and cross-referenced.
        \item All assumptions should be clearly stated or referenced in the statement of any theorems.
        \item The proofs can either appear in the main paper or the supplemental material, but if they appear in the supplemental material, the authors are encouraged to provide a short proof sketch to provide intuition. 
        \item Inversely, any informal proof provided in the core of the paper should be complemented by formal proofs provided in appendix or supplemental material.
        \item Theorems and Lemmas that the proof relies upon should be properly referenced. 
    \end{itemize}

    \item {\bf Experimental Result Reproducibility}
    \item[] Question: Does the paper fully disclose all the information needed to reproduce the main experimental results of the paper to the extent that it affects the main claims and/or conclusions of the paper (regardless of whether the code and data are provided or not)?
    \item[] Answer: \answerYes{} 
    \item[] Justification: We detail datasets, models, and precise algorithms to take measurements in the appendix. Our training runs can be reproduced with minimal modification to the HuggingFace transformer reinforcement learning (TRL) library\footnote{\url{https://huggingface.co/docs/trl/en/index}}. We additionally provide anonymized code.
    \item[] Guidelines:
    \begin{itemize}
        \item The answer NA means that the paper does not include experiments.
        \item If the paper includes experiments, a No answer to this question will not be perceived well by the reviewers: Making the paper reproducible is important, regardless of whether the code and data are provided or not.
        \item If the contribution is a dataset and/or model, the authors should describe the steps taken to make their results reproducible or verifiable. 
        \item Depending on the contribution, reproducibility can be accomplished in various ways. For example, if the contribution is a novel architecture, describing the architecture fully might suffice, or if the contribution is a specific model and empirical evaluation, it may be necessary to either make it possible for others to replicate the model with the same dataset, or provide access to the model. In general. releasing code and data is often one good way to accomplish this, but reproducibility can also be provided via detailed instructions for how to replicate the results, access to a hosted model (e.g., in the case of a large language model), releasing of a model checkpoint, or other means that are appropriate to the research performed.
        \item While NeurIPS does not require releasing code, the conference does require all submissions to provide some reasonable avenue for reproducibility, which may depend on the nature of the contribution. For example
        \begin{enumerate}
            \item If the contribution is primarily a new algorithm, the paper should make it clear how to reproduce that algorithm.
            \item If the contribution is primarily a new model architecture, the paper should describe the architecture clearly and fully.
            \item If the contribution is a new model (e.g., a large language model), then there should either be a way to access this model for reproducing the results or a way to reproduce the model (e.g., with an open-source dataset or instructions for how to construct the dataset).
            \item We recognize that reproducibility may be tricky in some cases, in which case authors are welcome to describe the particular way they provide for reproducibility. In the case of closed-source models, it may be that access to the model is limited in some way (e.g., to registered users), but it should be possible for other researchers to have some path to reproducing or verifying the results.
        \end{enumerate}
    \end{itemize}

\item {\bf Open access to data and code}
    \item[] Question: Does the paper provide open access to the data and code, with sufficient instructions to faithfully reproduce the main experimental results, as described in supplemental material?
    \item[] Answer: \answerYes{} 
    \item[] Justification: We provide an anonymized version of our code. Our results can be reproduced with minimal modification to the HuggingFace transformer reinforcement learning (TRL) library, and the datasets used are all open-access and easily downloadable from the HuggingFace Hub.
    \item[] Guidelines:
    \begin{itemize}
        \item The answer NA means that paper does not include experiments requiring code.
        \item Please see the NeurIPS code and data submission guidelines (\url{https://nips.cc/public/guides/CodeSubmissionPolicy}) for more details.
        \item While we encourage the release of code and data, we understand that this might not be possible, so “No” is an acceptable answer. Papers cannot be rejected simply for not including code, unless this is central to the contribution (e.g., for a new open-source benchmark).
        \item The instructions should contain the exact command and environment needed to run to reproduce the results. See the NeurIPS code and data submission guidelines (\url{https://nips.cc/public/guides/CodeSubmissionPolicy}) for more details.
        \item The authors should provide instructions on data access and preparation, including how to access the raw data, preprocessed data, intermediate data, and generated data, etc.
        \item The authors should provide scripts to reproduce all experimental results for the new proposed method and baselines. If only a subset of experiments are reproducible, they should state which ones are omitted from the script and why.
        \item At submission time, to preserve anonymity, the authors should release anonymized versions (if applicable).
        \item Providing as much information as possible in supplemental material (appended to the paper) is recommended, but including URLs to data and code is permitted.
    \end{itemize}

\item {\bf Experimental Setting/Details}
    \item[] Question: Does the paper specify all the training and test details (e.g., data splits, hyperparameters, how they were chosen, type of optimizer, etc.) necessary to understand the results?
    \item[] Answer: \answerYes{} 
    \item[] Justification: \Cref{app:ra-open-source,app:dpo-training-dynamics} detail all of the information about the data splits, hyperparameter grids, optimizer, and so on.
    \item[] Guidelines:
    \begin{itemize}
        \item The answer NA means that the paper does not include experiments.
        \item The experimental setting should be presented in the core of the paper to a level of detail that is necessary to appreciate the results and make sense of them.
        \item The full details can be provided either with the code, in appendix, or as supplemental material.
    \end{itemize}

\item {\bf Experiment Statistical Significance}
    \item[] Question: Does the paper report error bars suitably and correctly defined or other appropriate information about the statistical significance of the experiments?
    \item[] Answer: \answerYes{} 
    \item[] Justification: We report measurements on numerous models and datasets, and our training runs are conducted with multiple seeds.
    \item[] Guidelines:
    \begin{itemize}
        \item The answer NA means that the paper does not include experiments.
        \item The authors should answer "Yes" if the results are accompanied by error bars, confidence intervals, or statistical significance tests, at least for the experiments that support the main claims of the paper.
        \item The factors of variability that the error bars are capturing should be clearly stated (for example, train/test split, initialization, random drawing of some parameter, or overall run with given experimental conditions).
        \item The method for calculating the error bars should be explained (closed form formula, call to a library function, bootstrap, etc.)
        \item The assumptions made should be given (e.g., Normally distributed errors).
        \item It should be clear whether the error bar is the standard deviation or the standard error of the mean.
        \item It is OK to report 1-sigma error bars, but one should state it. The authors should preferably report a 2-sigma error bar than state that they have a 96\% CI, if the hypothesis of Normality of errors is not verified.
        \item For asymmetric distributions, the authors should be careful not to show in tables or figures symmetric error bars that would yield results that are out of range (e.g. negative error rates).
        \item If error bars are reported in tables or plots, The authors should explain in the text how they were calculated and reference the corresponding figures or tables in the text.
    \end{itemize}

\item {\bf Experiments Compute Resources}
    \item[] Question: For each experiment, does the paper provide sufficient information on the computer resources (type of compute workers, memory, time of execution) needed to reproduce the experiments?
    \item[] Answer: \answerYes{} 
    \item[] Justification: \Cref{app:dpo-training-details} contains a detailing of the computational resources needed for our results. We also ran several initial experiments with IPO (\Cref{sec:ipo}) but did not use the results in the main paper.
    \item[] Guidelines:
    \begin{itemize}
        \item The answer NA means that the paper does not include experiments.
        \item The paper should indicate the type of compute workers CPU or GPU, internal cluster, or cloud provider, including relevant memory and storage.
        \item The paper should provide the amount of compute required for each of the individual experimental runs as well as estimate the total compute. 
        \item The paper should disclose whether the full research project required more compute than the experiments reported in the paper (e.g., preliminary or failed experiments that didn't make it into the paper). 
    \end{itemize}
    
\item {\bf Code Of Ethics}
    \item[] Question: Does the research conducted in the paper conform, in every respect, with the NeurIPS Code of Ethics \url{https://neurips.cc/public/EthicsGuidelines}?
    \item[] Answer: \answerYes{} 
    \item[] Justification: Confirmed.
    \item[] Guidelines:
    \begin{itemize}
        \item The answer NA means that the authors have not reviewed the NeurIPS Code of Ethics.
        \item If the authors answer No, they should explain the special circumstances that require a deviation from the Code of Ethics.
        \item The authors should make sure to preserve anonymity (e.g., if there is a special consideration due to laws or regulations in their jurisdiction).
    \end{itemize}

\item {\bf Broader Impacts}
    \item[] Question: Does the paper discuss both potential positive societal impacts and negative societal impacts of the work performed?
    \item[] Answer: \answerYes{} 
    \item[] Justification: Our discussion in~\Cref{sec:discussion} discusses the impact of our findings, especially as it relates to measuring and verifying the success of preference learning and alignment of LLMs.
    \item[] Guidelines:
    \begin{itemize}
        \item The answer NA means that there is no societal impact of the work performed.
        \item If the authors answer NA or No, they should explain why their work has no societal impact or why the paper does not address societal impact.
        \item Examples of negative societal impacts include potential malicious or unintended uses (e.g., disinformation, generating fake profiles, surveillance), fairness considerations (e.g., deployment of technologies that could make decisions that unfairly impact specific groups), privacy considerations, and security considerations.
        \item The conference expects that many papers will be foundational research and not tied to particular applications, let alone deployments. However, if there is a direct path to any negative applications, the authors should point it out. For example, it is legitimate to point out that an improvement in the quality of generative models could be used to generate deepfakes for disinformation. On the other hand, it is not needed to point out that a generic algorithm for optimizing neural networks could enable people to train models that generate Deepfakes faster.
        \item The authors should consider possible harms that could arise when the technology is being used as intended and functioning correctly, harms that could arise when the technology is being used as intended but gives incorrect results, and harms following from (intentional or unintentional) misuse of the technology.
        \item If there are negative societal impacts, the authors could also discuss possible mitigation strategies (e.g., gated release of models, providing defenses in addition to attacks, mechanisms for monitoring misuse, mechanisms to monitor how a system learns from feedback over time, improving the efficiency and accessibility of ML).
    \end{itemize}
    
\item {\bf Safeguards}
    \item[] Question: Does the paper describe safeguards that have been put in place for responsible release of data or models that have a high risk for misuse (e.g., pretrained language models, image generators, or scraped datasets)?
    \item[] Answer: \answerNA{} 
    \item[] Justification: Not needed.
    \item[] Guidelines:
    \begin{itemize}
        \item The answer NA means that the paper poses no such risks.
        \item Released models that have a high risk for misuse or dual-use should be released with necessary safeguards to allow for controlled use of the model, for example by requiring that users adhere to usage guidelines or restrictions to access the model or implementing safety filters. 
        \item Datasets that have been scraped from the Internet could pose safety risks. The authors should describe how they avoided releasing unsafe images.
        \item We recognize that providing effective safeguards is challenging, and many papers do not require this, but we encourage authors to take this into account and make a best faith effort.
    \end{itemize}

\item {\bf Licenses for existing assets}
    \item[] Question: Are the creators or original owners of assets (e.g., code, data, models), used in the paper, properly credited and are the license and terms of use explicitly mentioned and properly respected?
    \item[] Answer: \answerYes{} 
    \item[] Justification: We cite all open-access models and datasets used in the paper.
    \item[] Guidelines:
    \begin{itemize}
        \item The answer NA means that the paper does not use existing assets.
        \item The authors should cite the original paper that produced the code package or dataset.
        \item The authors should state which version of the asset is used and, if possible, include a URL.
        \item The name of the license (e.g., CC-BY 4.0) should be included for each asset.
        \item For scraped data from a particular source (e.g., website), the copyright and terms of service of that source should be provided.
        \item If assets are released, the license, copyright information, and terms of use in the package should be provided. For popular datasets, \url{paperswithcode.com/datasets} has curated licenses for some datasets. Their licensing guide can help determine the license of a dataset.
        \item For existing datasets that are re-packaged, both the original license and the license of the derived asset (if it has changed) should be provided.
        \item If this information is not available online, the authors are encouraged to reach out to the asset's creators.
    \end{itemize}

\item {\bf New Assets}
    \item[] Question: Are new assets introduced in the paper well documented and is the documentation provided alongside the assets?
    \item[] Answer: \answerNA{} 
    \item[] Justification: We did not release any new assets.
    \item[] Guidelines:
    \begin{itemize}
        \item The answer NA means that the paper does not release new assets.
        \item Researchers should communicate the details of the dataset/code/model as part of their submissions via structured templates. This includes details about training, license, limitations, etc. 
        \item The paper should discuss whether and how consent was obtained from people whose asset is used.
        \item At submission time, remember to anonymize your assets (if applicable). You can either create an anonymized URL or include an anonymized zip file.
    \end{itemize}

\item {\bf Crowdsourcing and Research with Human Subjects}
    \item[] Question: For crowdsourcing experiments and research with human subjects, does the paper include the full text of instructions given to participants and screenshots, if applicable, as well as details about compensation (if any)? 
    \item[] Answer: \answerNA{} 
    \item[] Justification: Our work did not involve crowdsourcing.
    \item[] Guidelines:
    \begin{itemize}
        \item The answer NA means that the paper does not involve crowdsourcing nor research with human subjects.
        \item Including this information in the supplemental material is fine, but if the main contribution of the paper involves human subjects, then as much detail as possible should be included in the main paper. 
        \item According to the NeurIPS Code of Ethics, workers involved in data collection, curation, or other labor should be paid at least the minimum wage in the country of the data collector. 
    \end{itemize}

\item {\bf Institutional Review Board (IRB) Approvals or Equivalent for Research with Human Subjects}
    \item[] Question: Does the paper describe potential risks incurred by study participants, whether such risks were disclosed to the subjects, and whether Institutional Review Board (IRB) approvals (or an equivalent approval/review based on the requirements of your country or institution) were obtained?
    \item[] Answer: \answerNA{} 
    \item[] Justification: Our paper does not involve crowdsourcing or research with human subjects.
    \item[] Guidelines:
    \begin{itemize}
        \item The answer NA means that the paper does not involve crowdsourcing nor research with human subjects.
        \item Depending on the country in which research is conducted, IRB approval (or equivalent) may be required for any human subjects research. If you obtained IRB approval, you should clearly state this in the paper. 
        \item We recognize that the procedures for this may vary significantly between institutions and locations, and we expect authors to adhere to the NeurIPS Code of Ethics and the guidelines for their institution. 
        \item For initial submissions, do not include any information that would break anonymity (if applicable), such as the institution conducting the review.
    \end{itemize}

\end{enumerate}

\end{document}